\newtheorem{lemma}{Lemma}
\newtheorem{theorem}{Theorem}
\newtheorem{definition}{Definition}
\newtheorem{remark}{Remark}
\DeclareMathOperator*{\argmin}{argmin}
\DeclareMathOperator*{\argmax}{argmax}
\DeclareMathAlphabet\mathbfcal{OMS}{cmsy}{b}{n}
\setlist[itemize]{leftmargin=*,topsep=0em}
\setlist[enumerate]{itemsep=0em, partopsep=0em}
\begin{document}

\title{Provable Unrestricted Adversarial Training without Compromise with Generalizability}

\author{Lilin~Zhang,~
        Ning~Yang,~
        Yanchao~Sun,~
        Philip S.~Yu,~\IEEEmembership{Fellow,~IEEE}
\IEEEcompsocitemizethanks{
\IEEEcompsocthanksitem Lilin Zhang is with the School
of Computer Science, Sichuan University, China. \protect 
 E-mail: zhanglilin@stu.scu.edu.cn
 
\IEEEcompsocthanksitem Ning Yang is the corresponding author and with the School
of Computer Science, Sichuan University, China. \protect 
 E-mail: yangning@scu.edu.cn
 
 \IEEEcompsocthanksitem Yanchao Sun is with the Department of Computer Science, University of Maryland, College Park, USA.  \protect 
 E-mail: ycsun1531@gmail.com

\IEEEcompsocthanksitem Philip S. Yu is with the Department of Computer Science, University of Illinois at Chicago, USA. \protect 
E-mail: psyu@uic.edu
}
}

\markboth{Journal of \LaTeX\ Class Files,~Vol.~14, No.~8, August~2015}%
{Shell \MakeLowercase{\textit{et al.}}: Bare Demo of IEEEtran.cls for Computer Society Journals}
\IEEEpubid{0000--0000/00\$00.00~\copyright~2021 IEEE}

\maketitle

\begin{abstract}

Adversarial training (AT) is widely considered as the most promising strategy to defend against adversarial attacks and has drawn increasing interest from researchers. However, the existing AT methods still suffer from two challenges. First, they are unable to handle unrestricted adversarial examples (UAEs), which are built from scratch, as opposed to restricted adversarial examples (RAEs), which are created by adding perturbations bound by an $l_p$ norm to observed examples. Second, the existing AT methods often achieve adversarial robustness at the expense of standard generalizability (i.e., the accuracy on natural examples) because they make a tradeoff between them. To overcome these challenges, we propose a unique viewpoint that understands UAEs as imperceptibly perturbed unobserved examples. Also, we find that the tradeoff results from the separation of the distributions of adversarial examples and natural examples. Based on these ideas, we propose a novel AT approach called Provable Unrestricted Adversarial Training (PUAT), which can provide a target classifier with comprehensive adversarial robustness against both UAE and RAE, and simultaneously improve its standard generalizability. Particularly, PUAT utilizes partially labeled data to achieve effective UAE generation by accurately capturing the natural data distribution through a novel augmented triple-GAN. At the same time, PUAT extends the traditional AT by introducing the supervised loss of the target classifier into the adversarial loss and achieves the alignment between the UAE distribution, the natural data distribution, and the distribution learned by the classifier, with the collaboration of the augmented triple-GAN. Finally, the solid theoretical analysis and extensive experiments conducted on widely-used benchmarks demonstrate the superiority of PUAT.

\end{abstract}

\begin{IEEEkeywords}
Adversarial Robustness, Adversarial Training, Unrestricted Adversarial Examples, Standard Generalizability.
\end{IEEEkeywords}

\section{Introduction}
\label{sec:introduction}
\IEEEPARstart{D}{eep} neural network (DNN) has achieved ground-breaking success in a number of artificial intelligence application areas, including computer vision, object identification, and natural language processing, among others. Despite the remarkable success of DNN, seminal researches reveal its vulnerability to \textbf{adversarial examples} (AEs), which are inputs with non-random perturbations unnoticeable to humans but intentionally meant to cause victim models to deliver false outputs \cite{biggio2013evasion,szegedy2013intriguing,ilyas2019adversarial,miller2020adversarial}. The harms brought on by AEs have prompted ongoing efforts to increase DNN's \textbf{adversarial robustness} (i.e., the accuracy on AEs), among which \textbf{adversarial training} (AT) has been largely acknowledged as the most promising defensive strategy \cite{bai2021recent,akhtar2021advances,zhao2022adversarial}.

The fundamental idea of AT was first proposed by Szegedy \textit{et al}. \cite{szegedy2013intriguing}, which adds AEs to the training set and uses a min-max game to create a robust classifier that can withstand worst-case attacks. In particular, AT alternately maximizes an adversarial loss incurred by AEs that are purposefully crafted using adversarial attacking methods such as FGSM \cite{goodfellow2014explaining} and PGD \cite{madry2017towards}, and minimizes it by adjusting the target classifier's parameters via a regular supervised training on the augmented training set. Numerous works have suggested strengthening the basic AT in order to increase adversarial robustness \cite{wang2019bilateral,wang2019improving,cheng2020cat,shafahi2020universal,lee2017generative,stutz2019disentangling}. Although the existing AT approaches have advanced significantly, we argue that the problem of adversarial robustness is still far from being well solved partly due to the following two challenges.
\IEEEpubidadjcol



\begin{figure}[!t]
\centering
		\subfigure[Adversarial robustness against RAE]{\includegraphics[scale=0.30]{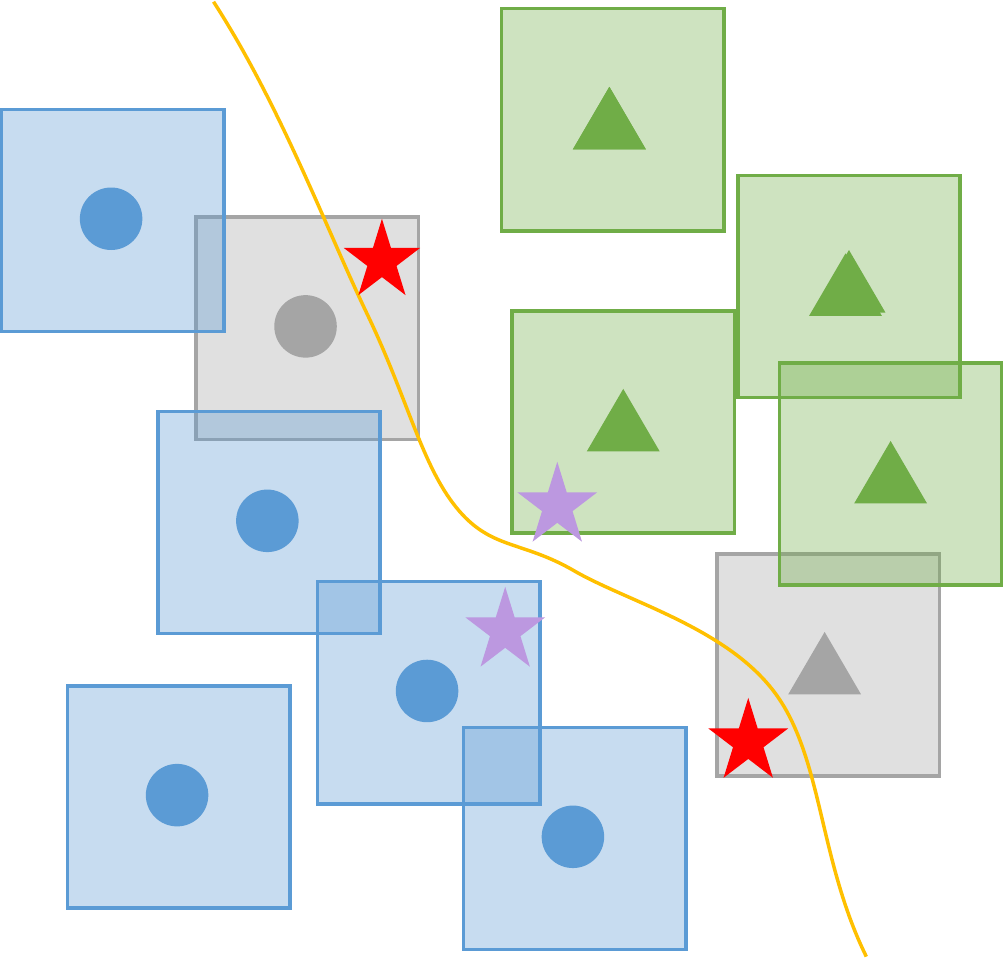} \label{fig:example_ae}} \quad\quad
		\subfigure[Adversarial robustness against UAE]{\includegraphics[scale=0.30]{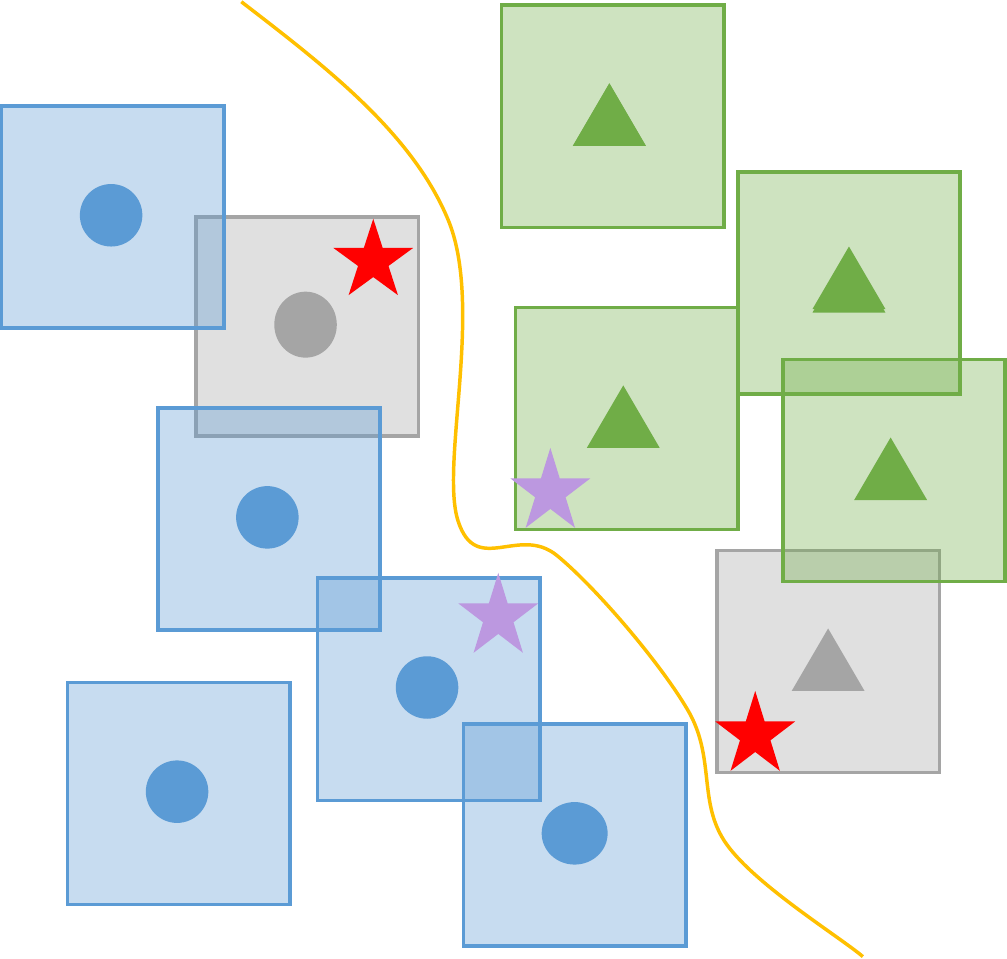} \label{fig:example_uae}}
\caption{Illustration of adversarial robustness against RAE and UAE. The blue dots and green triangles represent two classes of observed examples, respectively, and the gray ones represent the unobserved examples of the two classes. Each example is surrounded by a box representing its neighborhood with small distance. The purple stars represent RAEs while the red stars UAEs. The yellow curves are decision boundaries obtained by adversarial training.}
\label{fig:illustration}
\end{figure}

The issue of \textit{Unrestricted Adversarial Example} (UAE) is the first obstacle. As previously mentioned, AT is conducted on a training set that includes AEs. However, the conventional AT approaches are frequently limited to perturbation-based AEs, which are created by adding adversarial perturbations with magnitudes restricted by an $l_p$ norm to observed natural examples. We refer to such kind of AEs as restricted AE (RAE). In fact, RAEs only account for a small portion of possible AEs in real world. Recent works reveal the existence of UAEs, which are dissimilar to any observable example since they are built from scratch rather than by perturbing existing examples \cite{song2018constructing,dunn2019adaptive,bhattad2020unrestricted,naderi2022generating,Tao2022EGM}. Figure \ref{fig:illustration} provides an illustration showing how UAEs (represented by the red stars) are not the neighbors of any observed example (blue circle or green triangle), in contrast to RAEs (shown by the purple stars). Despite the adversarial robustness against RAE provided by conventional AT approaches, as seen in Figure \ref{fig:example_ae}, the classifier (with the yellow curve as the decision boundary) can still be deceived by UAEs because they also appear natural \cite{song2018constructing}. UAEs are more numerous, more covert, and more dangerous than RAEs since 
UAEs are generated by perturbing the samples from the learned natural sample distribution, which may or may not be observable. \label{intro-revision}
However, the existing works on UAE mainly focus on the generation of UAEs, which suffer from two defects. First, no explanation for why UAEs exist and why they are imperceptible to humans is provided in the existing works. Second, adversarial robustness against UAE still remains largely unexplored. Therefore, we need a new AT method that is able to offer the robustness against both UAE and RAE, which we refer to as \textbf{comprehensive adversarial robustness}. As shown in Figure \ref{fig:example_uae}, the classifier with comprehensive adversarial robustness is able to correctly predict the labels of both the UAEs and the RAEs.

The second challenge faced by the current AT approaches is the problem of the \textit{deterioration of standard generalizability} (i.e., the accuracy on natural/natural testing examples). Early researches show that the standard generalizability of a classifier is sacrificed in order for conventional AT approaches to obtain the adversarial robustness for the classifier \cite{madry2017towards}. Zhang \textit{et al}. \cite{tsipras2018robustness} further demonstrate the existence of an upper-bound of the standard generalizability given the adversarial robustness, and Attias \textit{et al}. \cite{attias2022improved} suggest lowering the upper-bound by increasing sample complexity. However, recent studies argue that the tradeoff may not be inevitable, and adversarial robustness and standard generalizability can coexist without negatively affecting each other \cite{stutz2019disentangling,yang2020closer,Song2020Robust,xing2021generalization}. For example, Stutz \textit{et al}. \cite{stutz2019disentangling} demonstrate that the adversarial robustness against the AEs existing on the manifold of natural examples will equivalently lead to better generalization. Song \textit{et al}. \cite{Song2020Robust} and Xing \textit{et al}. \cite{xing2021generalization} propose to improve the standard generalizability by introducing robust local features and $l_1$ penalty to AT, respectively. 
At the same time, some other works \cite{zhang2019defense,carmon2019unlabeled,alayrac2019labels} show that the tradeoff bound can be improved by utilizing unlabeled data in AT.
The existing efforts, however, only concentrate on RAEs. It is yet unknown that if and how the comprehensive adversarial robustness and the standard generalizability can both be improved simultaneously.

To address the first challenge, in this paper, we propose \textit{a unique viewpoint that understands UAEs as imperceptibly perturbed unobserved examples} (gray dot or triangle in Figure \ref{fig:illustration}), which logically explains why UAEs are dissimilar to observed examples and why UAEs can fool a classifier without confusing humans. Essentially, RAE can be thought of as a special UAE resulting from perturbing observed examples. If we generalize the concept of UAE to imperceptibly perturbed natural examples, regardless of whether they are observable, then \textit{UAE can be regarded as a general form of AE}, which makes it feasible to provide comprehensive adversarial robustness. 

For the second challenge, we find that the tradeoff exists because the target classifier can not generalize over two separated distributions, the AE distribution and the natural example distribution, and the distribution learned by the classifier is a mix of them. 
Therefore, if we can leverage partial labeled data to align the distributions of UAEs and natural examples with the distribution learned by the target classifier, then the classifier can generalize over UAEs and natural examples without conflict, and we can expect to achieve a better the tradeoff between comprehensive adversarial robustness and standard generalization. 

Based on these ideas, we propose a novel AT method called Provable Unrestricted Adversarial Training (PUAT), which \textit{integrates the generation of UAEs and an extended AT on UAEs with a GAN-based framework, so that the distributions of UAEs and natural examples can be aligned with the distribution learned by the target classifier for simultaneously improving its comprehensive adversarial robustness and standard generalizability}. At first, we design a UAE generation module consisting of an attacker $A$ and a conditional generator $G$, treating a UAE as an adversarially perturbed natural example. Different from the existing generative models for AEs, which often result in suboptimal AEs due to the gradient conflicts during the single optimization for both imperceptibility and harmfulness of AEs, our UAE generation module decouples the optimizations of imperceptibility and harmfulness of UAEs. Particularly, $A$ is in charge of producing an adversarial perturbation that can fool the classifier, based on which $G$ will generate a UAE for a certain class, which is equivalent to perturbing a natural example. PUAT ensures the imperceptibility of UAEs by aligning the UAE distribution learned by $G$ with the natural example distribution via a G-C-D GAN, in contrast to conventional AT approaches that accomplish imperceptibility by an explicit norm constraint on the perturbations. The G-C-D GAN is a novel augmented triple-GAN that combines two conditional GANs: the G-D GAN, which consists of $G$ and a discriminator $D$, and the C-D GAN, which consists of the target classifier $C$ and $D$. Different from the original triple-GAN proposed by Li \textit{et al}. \cite{li2017triple}, in G-C-D GAN the generator $G$ is shared with the UAE generation module, which augments the input of $G$ with the output of the attacker $A$ and consequently allows $G$ to draw UAEs from the UAE distribution. To facilitate the distributional alignment, we also make use of unlabeled data to create pseudo-labeled data for the training of G-C-D GAN since labeled data are usually insufficient for capturing the real data distribution. By leveraging partially labeled data, G-D GAN and C-D GAN can cooperate to help $G$ capture the natural data distribution and the UAE generation module understand how to build a legitimate AE for a certain class.

In order to simultaneously improve the target classifier's comprehensive adversarial robustness and standard generalizability, PUAT conducts an extended AT between the target classifier $C$ and the attacker $A$, which incorporates a supervised loss of the target classifier with the adversarial loss. 
The extended AT together with the G-C-D GAN leads to the alignment between the UAE distribution, the natural example distribution, and the distribution learned by $C$, which enables $C$ to generalize without conflict on UAEs and natural examples. Finally, we theoretically and empirically demonstrate that PUAT realizes a consistent optimization for both the comprehensive adversarial robustness and the standard generalizability, eliminating the tradeoff between them and resulting in their mutual advantage. Our contributions can be summarized as follows:
\begin {itemize}

\item We propose a unique viewpoint that understands UAEs as imperceptibly perturbed unobserved examples, which offers a logical and demonstrable explanation for UAEs' discrepancies from observed examples, imperceptibility to humans, and the feasibility of comprehensive adversarial robustness against both RAE and UAE.



\item We propose a novel AT approach called Provable Unrestricted Adversarial Training (PUAT), which, with theoretical guarantee, can simultaneously increase a classifier's comprehensive adversarial robustness and standard generalizability through a distributional alignment.


\item We provide a solid theoretical analysis to demonstrate that PUAT can eliminate the tradeoff between the comprehensive adversarial robustness and the standard generalizability through the distributional alignment.

\item The extensive experiments conducted on widely adopted benchmarks verify the superiority of PUAT.

\end {itemize}

\begin{figure*}[!t]
\centering
\includegraphics[width=5.0in]{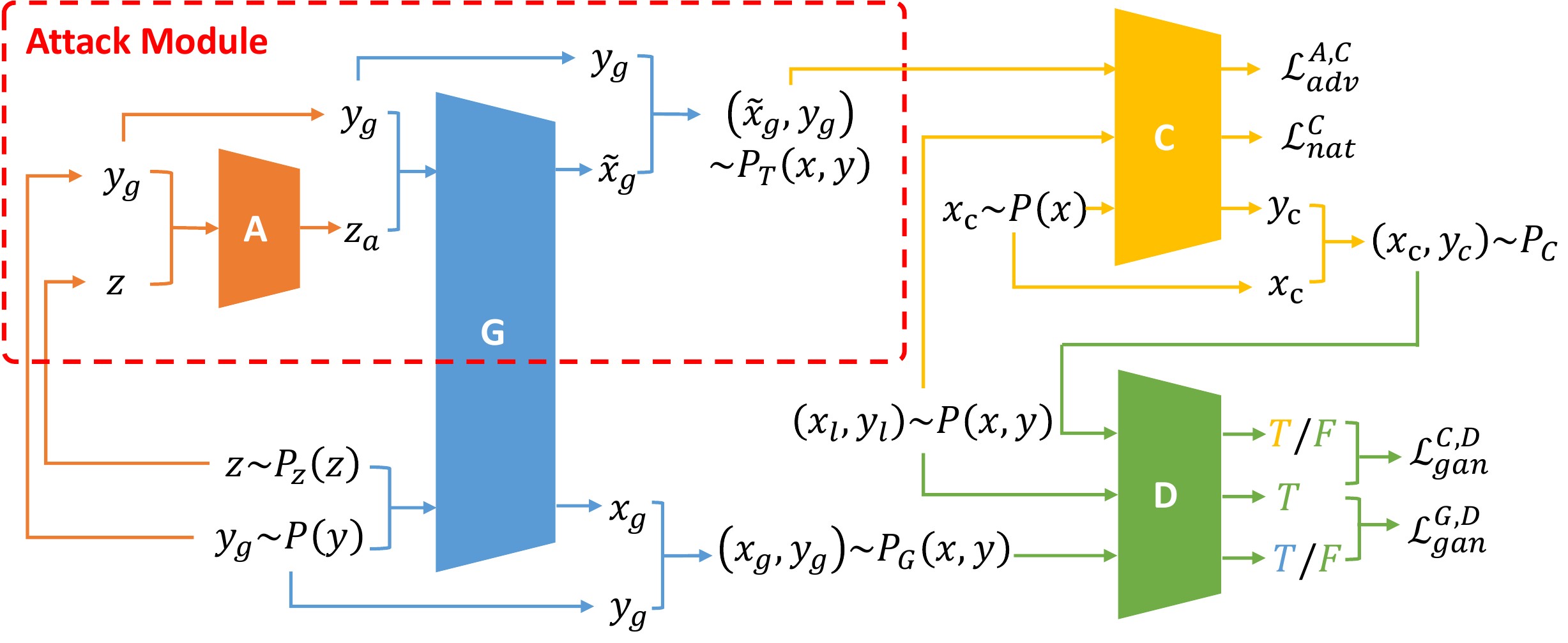}
\caption{Architecture of PUAT. The attacker $A$ seeks the perturbation $z_a$, and the generator $G$ synthesizes the UAEs $\{ (\tilde{x}_g, y) \}$ based on the perturbation and the specified class $y$. In G-D GAN, the discriminator $D$ aims to adversarially distinguish the unobserved natural examples $\{ (x_g, y) \}$ generated by $G$ from the true natural examples $\{ (x, y)\}$ with loss $\mathcal{L}^{G,D}_{gan}$, while in C-D GAN, $D$ aims to adversarially distinguish the pseudo labeled examples $\{ (x, y_c)\}$ generated by the target classifier $C$ from $\{ (x, y)\}$ with loss $\mathcal{L}^{C,D}_{gan}$. The extended AT is conducted between $A$ and $C$ with the adversarial loss $\mathcal{L}^{C,A}_{adv}$ plus the supervised loss $\mathcal{L}^{C}_{nat}$.}
\label{Fig:model}
\end{figure*}

\section{Preliminaries and Problem Formulation}
In this section, we first give the formal definition of UAE, then present our understanding of UAEs, and finally, formulate the AT on UAEs.
\subsection{Definition of UAE}

Let $x \in \mathcal{X}$ and $y \in \mathcal{Y}$ be an example and a label, respectively, where $ \mathcal{X}$ represents the data space with distribution $P(x)$ while $\mathcal{Y}$ the label space with distribution $P(y)$. Let $o(\cdot) : \mathcal{O} \subseteq \mathcal{X} \rightarrow \mathcal{Y}$ be an oracle which is defined on its domain $\mathcal{O}$ and tells the true label of an example $x \in \mathcal{O}$. Usually, the oracle $o$ corresponds to humans and $\mathcal{O}$ consists of all the samples that look natural to and can not confuse humans. Let $C(\cdot) : \mathcal{X} \rightarrow \mathcal{Y}$ be the target classifier. Following the idea of \cite{song2018constructing}, we define UAE as follow:
\begin{definition}[UAE]
For a well-trained classifier $C$, an example $\tilde{x}$ with label $y = o(\tilde{x})$ is an unrestricted adversarial example if $\tilde{x} \sim P(x|y) $ and $C(\tilde{x}) \neq y$, where $P(x|y)$ is the class likelihood.
\label{Def:UAE}
\end{definition}

\subsection{
Understanding of UAE
}
\label{Sec:understanding}
From Definition \ref{Def:UAE} we can see that similar to RAE, the harmfulness of a UAE is still defined by its ability to fool the classifier, i.e., $C(\tilde{x}) \neq y$. But in contrast to RAE, the imperceptibility of a UAE is due to its capability of being drawn from the realistic distribution ($\tilde{x} \sim  P(x|y) $) rather than a norm-constraint. However, it is challenging how to enforce both imperceptibility and harmfulness for UAE. For this purpose, \textit{we understand UAE from the viewpoint of imperceptibly perturbed observed and unobserved examples}. Inspired by \cite{staib2017distributionally}, for a natural sample $x \sim P(x|y)$, there exists a mapping $T:\mathcal{X}\times\mathcal{Y}\rightarrow\mathcal{X}$ to generate a UAE $\tilde{x} = T(x, y)$ satisfying the imperceptibility $\tilde{x}  \in supp(P(x|y))$, and the harmfulness $C(\tilde{x}) \neq y$, where $supp()$ represents the support set. Following this idea, we can generate a UAE through two steps: drawing natural example and generating appropriate perturbation for it. It is noteworthy that such understanding brings the advantage that RAE can be viewed as a special UAE and UAEs are a superset of RAEs, thus enabling the adversarial robustness against UAEs to cover RAEs.

\subsection{
Adversarial Training on UAEs
}
\label{Sec:ATonUAEs}
To offer the comprehensive adversarial robustness to the target classifier, we want to conduct adversarial training on UAEs, which is defined as the following min-max game:
\begin{equation}
\begin{aligned}
&\min_{C}  \max_{P_T} \mathbb{E}_{(\tilde{x},y)\sim {P}_T(x,y)} l\big(\tilde{x}, y;C \big),
\end{aligned}
\label{Eq:Ladv}
\end{equation}
where $l(x, y; C)$ is a loss function of the target classifier $C$, e.g., cross-entropy, $\tilde{x} = T(x, y)$ is a UAE with label $y$, and $P_T(x, y)$ is the labeled UAE distribution. In Equation (\ref{Eq:Ladv}), the inner maximization seeks the most harmful UAE set resulting in the poorest performance of the target classifier, while the outer minimization aims at adjusting the target classifier to reduce its sensitivity to UAEs. It is noteworthy that unlike traditional AT, where AEs are searched by explicitly perturbed the observed natural samples with respect a norm-constraint, the AT defined by Equation (\ref{Eq:Ladv}) is conducted on UAEs which are perturbed versions of not only observed natural samples but also unobserved ones, leading to superior comprehensive adversarial robustness to traditional AT methods as shown by later theoretical analysis and experimental verification. To fulfill the AT on UAEs, the key challenge is how to learn the adversarial distribution $P_T(x, y)$ that can reconcile the adversarial robustness and standard generalizability of the target classifier. To overcome this challenge, our general idea is to fulfill the learning of distribution ${P}_T(x, y)$ with two steps: (1) approximating the distribution $P(x, y)$ of the natural samples, and (2) learning the mapping $T$, which converts the inner maximization of Equation (\ref{Eq:Ladv}) to 
\begin{equation}
\begin{aligned}
\max_{T} \mathbb{E}_{(x,y)\sim P(x,y)} l\bigg(\big(\tilde{x}=T(x,y), y \big) ;C \bigg). \\
\end{aligned}
\label{Eq:Ladv_}
\end{equation}
When $T$ achieves the optimal solution of Equation (\ref{Eq:Ladv_}), ${P}_T(x, y)$ converges to the AE distribution $P(x, y|C(x) \ne y)$.

\section{Proposed Method}
In this section, we first give an overview of our Provable Unrestricted Adversarial Training (PUAT) method, and then describe its components and learning procedure in detail.

\subsection{Overview of PUAT}

%
%
%

As shown in Figure \ref{Fig:model}, PUAT consists of four components, an attacker $A$, a conditional generator $G$, a discriminator $D$ and the target classifier $C$. According to our idea that a UAE can be regarded as the perturbed version of an observed or unobserved natural sample, we first use $G$ to produce a plausible natural sample $(x_g, y_g) \sim P_G(x, y)$, where $P_G(x, y)$ is the distribution of the data-label pairs generated by $G$ and the randomness stems from a white noise $z$. Then we invoke $A$ and $G$ to generate its perturbed version $(\tilde{x}_g, y_g)$ as a UAE, where the perturbation $z_a$ is produced by $A$ based on the same $z$. Note that conceptually, $A$ and $G$ together play the role of afore-mentioned mapping $T$ to generate a UAE from the distribution $P_T(x,y)$, which we term as \textbf{attack module}, and in other words, $P_T(x,y)$ is exactly the distribution captured by $A$ together with $G$.


To ensure the \textbf{harmfulness} of the UAEs, PUAT will try to adjust $A$ via the AT between $A$ and $C$ so that the generated UAEs can fool the target classifier $C$, i.e. $C(\tilde{x}_g) \ne y_g$. At the same time, to ensure the \textbf{imperceptibility} of the generated UAEs, PUAT will align the distribution $P_G(x, y)$ with the true data distribution $P(x, y)$. For this purpose, we introduce a discriminator $D$ to distinguish the examples $\{ (x_g, y_g) \}$, which includes the UAEs $\{ (\tilde{x}_g, y_g) \}$, from the true labeled examples $\{ (x_l, y_l) \sim P(x, y) \}$, which together with $G$ constitutes a class-conditional GAN, called G-D GAN. The optimization of the G-D GAN will adjust $G$ so that the distribution $P_{G}(x, y)$ converges to the natural sample distribution $P(x, y)$. The alignment of $P_{G}(x, y)$ and $P(x, y)$ will make a UAE drawn from $P_G(x, y)$ look like a real natural sample, which results in the imperceptibility of UAEs. Finally, it is noteworthy that the harmfulness and imperceptibility offered by PUAT cause the UAEs essentially follow the conditional distribution $P(x, y|C(x) \ne y)$.

However, in traditional conditional GAN, the discriminator $D$ also plays the role to predict the class label of $x_g$, for which the optimization objective conflicts with that for distinguishing $\{ (x_g, y_g) \} $ from $\{ (x_l, y_l) \}$. Such conflict will lead to a suboptimal $D$ \cite{li2017triple,Li2022}, and consequently weaken the ability of the G-D GAN to accurately align $P_{G}(x, y)$ with $P(x,y)$. To alleviate this issue, we introduce another conditional GAN, called C-D GAN, which together with the G-D GAN constitutes the G-C-D GAN. The C-D GAN consists of the target classifier $C$ and the discriminator $D$, where $C$ aims at predicting the label $y_c$ for unlabeled examples $\{ x_c \sim P(x)\}$ to build pseudo-labeled examples $\{ (x_c, y_c)\}$ that can fool $D$, while $D$ tries to distinguish $\{ (x_c, y_c) \} $ from true examples $\{ (x_l, y_l) \}$. The adversarial game between $C$ and $D$ improves the ability of $D$ to identify whether a label matches an example $x$. As $D$'s role in C-D GAN is consistent with its role in G-D GAN, $D$ strengthened by C-D GAN will ultimately help $G$ capture the true data distribution $P(x, y)$ more accurately.

At last, to realize the \textbf{comprehensive adversarial robustness} of the target classifier $C$, we want to align the $P_G(x, y)$ with the distribution $P_{C}(x, y)$ learned by $C$, which equivalently improves the robust generalizability of $C$ on UAEs. For this purpose, we conduct AT between the attacker $A$ and $C$, with an adversarial loss $\mathcal{L}^{A,C}_{adv}$ over the generated UAEs. At the same time, to simultaneously improve the \textbf{standard generalizability} of $C$ on natural examples, we further enforce the distribution $P_{C}(x, y)$ to be aligned with the true data distribution $P(x, y)$, by extending the AT with minimization of the supervised loss $\mathcal{L}^{C}_{sup}$ of $C$ over the labeled data $\{ (x_l, y_l) \}$. The overall logic of PUAT can be summarized as follows:
\begin{enumerate}[label =(\arabic*), topsep=0pt, partopsep=0pt]
\setlength{\parskip}{0pt}
\item For the imperceptibility of UAEs, $P_{G}(x, y)$ is aligned with the true data distribution $P(x, y)$, denoted by $P_{G}(x, y) = P(x, y)$, via the G-C-D GAN.
\item The harmfulness of UAEs is offered by the maximization of $\mathcal{L}^{A,C}_{adv}$ during the AT.
\item The adversarial robustness is offered by $P_{C}(x, y) = P_{T}(x, y)$, via the inner minimization of $\mathcal{L}^{A,C}_{adv}$ of the AT between $A$ and $C$.
\item The standard generalizability is offered by $P_{C}(x, y) = P(x, y)$, via the minimization of $\mathcal{L}^{C}_{sup}$.
\end{enumerate} 

It is noteworthy that traditional AT methods only realize (3) and (4), which causes $P_{C}(x, y)$ to be a mix of $P_T(x, y)$ and $P(x, y)$ and consequently incurs an inferior tradeoff between adversarial robustness and standard generalizability. In sharp contrast with the traditional methods, PUAT improves the tradeoff because (1) together with (3) and (4) results in the consistency between adversarial robustness and standard generalizability at $P_{C}(x, y) = P_{G}(x, y) = P(x, y)$, which will be proved later.

\subsection{UAE Generation}
\label{sec:UAE}
Our idea to generate a UAE $(\tilde{x}_g, y_g)$ is to perturb a generated natural example. Let $P(y)$ be the label distribution and $P_{z}(z)$ be a standard normal distribution. At first, we sample a label $y_g \sim P(y)$, and a seed noise $z \sim P_{z}(z)$. Then we invoke $G$ to generate an example $(x_g, y_g) \sim P_G(x, y)$, where
\begin{equation}
x_g = G(z, y_g).
\label{Eq:x_g}
\end{equation}
Note that since the label distribution $P(y)$ is unknown, we use the labels appearing in the labeled dataset $\mathcal{D}_l = \{ (x_l, y_l) \sim P(x, y)\}$ to approximate it.   In particular, the label $y_l$ of each example in $\mathcal{D}_l$ will serve as a $y_g$ once.

To generate the UAE corresponding to $(x_g, y_g)$, we further feed $(z, y_g)$ into the attacker $A$ to produce the perturbation, 
\begin{equation}
z_a = A(z, y_g).
\label{Eq:z_a}
\end{equation}
Next, $z_a$ together with $y_g$ is fed into the generator $G$ to produce perturbed data
\begin{equation}
\tilde{x}_g = G(z_a, y_g).
\label{Eq:tilde}
\end{equation}
Finally, $\tilde{x}_g$ and $y_g$ together form a labeled UAE $(\tilde{x}_g, y_g)$. 

It is noteworthy that that $(x_g, y_g)$ can be regarded as an unobserved natural example if $P_G(x, y)$ is aligned with the true data distribution $P(x, y)$. For each $(x_g, y_g)$, we will invoke the attack module ($A$ and $G$) generate a corresponding UAE $(\tilde{x}_g, y_g)$, which is actually the perturbed $(x_g, y_g)$ as they are generated based on the same $(z, y_g)$ with the only difference that whether $z$ or its perturbed version $z_a = A(z, y_g)$ is fed into $G$. At the same time, as we will see later, during the AT between $A$ and $C$, $A$ will be adjusted to ensure $(\tilde{x}_g, y_g)$ be an adversarial example that is able to attack the classifier $C$, i.e., $C(\tilde{x}_g) \neq y_g$. Therefore, the UAEs $\{ (\tilde{x}_g, y_g) \}$ are a subset of $\{ (x_g, y_g) \}$ such that $(\tilde{x}_g, y_g) \sim P(x, y|C(x) \ne y)$, and essentially, $A$ plays the role of finding out $\{ (\tilde{x}_g, y_g) \}$ from $\{ (x_g, y_g) \}$.

\subsection{Distribution Alignment for Imperceptibility}

As mentioned before, the imperceptibility of UAEs results from the alignment of the distribution $P_G(x, y)$ with the true data distribution $P(x, y)$, which is realized through the cooperation of G-D GAN and C-D GAN. Let $\mathcal{D}_l = \{ (x_l, y_l) \sim P(x,y)\}$ be a labeled natural dataset, where $y_l = o(x_l)$, and $\mathcal{Z}$ be a set of sampled noises $z\sim P_z(z)$. Then the optimization objective of G-D GAN is defined as 
\begin{equation}
\min_{G}\max_{D} \mathcal{L}^{G,D}_{gan} = \mathcal{L}_{D} + \mathcal{L}_{G},
\label{Eq:L_GD}
\end{equation}
where 
\begin{equation}
\begin{aligned}
\mathcal{L}_{D} &= \mathbb{E}_{(x_l, y_l) \sim P(x, y)} D(x_l, y_l) \\ 
&\approx \hat{\mathcal{L}}_{D} = \frac{1}{|\mathcal{D}_l |}\sum_{(x_l, y_l) \in \mathcal{D}_l} D(x_l, y_l), 
\end{aligned}
\label{Eq:LD}
\end{equation}

\begin{equation}
\begin{aligned}
 \mathcal{L}_{G} &= \mathbb{E}_{(x_g, y_g) \sim P_{G}(x, y)} \left[- D(x_g, y_g)\right]\\
&\approx  \hat{\mathcal{L}}_{G} = \frac{1}{\vert \mathcal{D}_l \vert\vert \mathcal{Z} \vert} \sum_{ y_l \in \mathcal{D}_l} \sum_{z \in \mathcal{Z}} \left[- D  \Big(G \big(z, y_l \big), y_l \Big) \right].
 \end{aligned}
 \label{Eq:LG}
\end{equation}
Note that in Equations (\ref{Eq:LD}) and (\ref{Eq:LG}), the first line is the population loss defined on the overall distribution, while the second line is the empirical loss defined on training set which approximates the population loss. In Equation (\ref{Eq:LG}), the label $y_l$ of each example $(x_l, y_l) \in \mathcal{D}_l$ serves as $y_g \sim P(y)$. For each $y_l$, we first sample a noise $z \sim P_{z}(z)$ and then invoke $G \big(z, y_l \big)$ to generate $x_g$, which together with $y_l$ forms a data-label pair $(x_g, y_g)$.

In G-D GAN, the generation of a data-label pair $(x_g, y_g)$ can be understood as the procedure of first drawing $y_g \sim P(y)$ and then drawing $x_g|y_g \sim P_G(x|y)$, where $P_G(x|y)$ is the conditional distribution characterized by $G$. Therefore, $(x_g, y_g)$ $ \sim$ $ P_G(x, y)$ $ = P(y)P_G(x|y)$. As $y_g$ is drawn from the true label distribution $P(y)$, to adversarially distinguish $(x_g, y_g)$ from true examples, what $D$ should do is to identify whether $x_g|y_g$ is true. There is one of two reasons causing a false $x_g|y_g$. One is $x_g$ is false, which asks $D$ to identify whether $x_g \sim P(x)$. The other is although $x_g$ is true, it does not belong to class $y_g$, which asks $D$ to identify whether $y_g|x_g \sim P(y|x)$. However, as we have mentioned before, these two tasks conflict with each other for $D$, which leads to a suboptimal $D$ and consequently weakens $G$. To alleviate this issue, we introduce C-D GAN and  can use an unlabeled natural dataset $\mathcal{D}_{c} = \{x_c \sim P(x) \}$ to train it with the following optimization objective:
\begin{equation}
\min_{C}\max_{D} \mathcal{L}^{C,D}_{gan} = \mathcal{L}_{D} + \mathcal{L}_{C},
\label{Eq:L_CD}
\end{equation}
where 
\begin{equation}
\begin{aligned}
 \mathcal{L}_{C} &= \mathbb{E}_{(x_c, y_c) \sim P_{C}(x, y)} \left[- D(x_c, y_c)\right] \\
&\approx \hat{\mathcal{L}}_{C} = \frac{1}{\vert \mathcal{D}_c \vert}\sum_{x_c \in \mathcal{D}_c} \left[- D\big(x_c, C(x_c)\big)\right].
 \end{aligned}
 \label{Eq:LC}
\end{equation}
To calculate $\mathcal{L}_{C}$, for each $x_c \in \mathcal{D}_c$, we invoke the target classifier $C$ to label it with the softmax vector $y_c = C(x_c) \in \mathbb{R}^{|\mathcal{Y}|}$. This is equivalent to the procedure of first drawing $x_c \sim P(x)$ and then drawing $y_c|x_c \sim P_{C}(y|x)$. Therefore $(x_c, y_c) \sim P_{C}(x, y) = P(x)P_{C}(y|x) $. C-D GAN offers two benefits to $D$. The first is in C-D GAN, $C$ provides the adversarial signal $y_c|x_c$ to help $D$ learn $P(y|x)$, which ultimately promotes $D$'s performance in G-D GAN for a better $G$. Thus in both C-D GAN and G-D GAN, $D$ plays the same role of distinguishing the generated examples ($\{ (x_g, y_g) \}$ or $\{ (x_c, y_c) \}$) from true ones ($\{ (x_l, y_l) \}$), which eliminates the conflict optimization of $D$ in traditional conditional GAN. The second is by C-D GAN, unlabeled data can be leveraged to accurately approximate $P(x,y)$.

We refer to the combination of G-D GAN and C-D GAN as G-C-D GAN. By combining Equations (\ref{Eq:L_GD}) and (\ref{Eq:L_CD}), we obtain the following optimization objective for G-C-D GAN:
\begin{equation}
\min_{C, G}\max_{D} \mathcal{L}^{G,C,D}_{gan} = \mathcal{L}_{D} + \frac{1}{2}\mathcal{L}_{G} + \frac{1}{2}\mathcal{L}_{C}.
\label{Eq:L_gan}
\end{equation}

\subsection{Extended Adversarial Training}
Since the harmfulness of the generated UAEs is caused by the attacker $A$, we conduct the AT between $A$ and the target classifier $C$. Meanwhile, to improve both the adversarial robustness and standard generalizability, we extend the AT defined in Equation (\ref{Eq:Ladv}) by incorporating the loss of $C$ on natural data, which leads to the following min-max game:
\begin{equation}
\min_{C} \big[\mathcal{L}^C_{nat} + \lambda \max_{A}\mathcal{L}^{C,A}_{adv} \big],
\label{Eq:AT}
\end{equation}
where $\lambda$ is a non-negative constant controlling the weight of adversarial robustness and the standard generalizability. $\mathcal{L}^C_{nat} $ is the loss of target classifier $C$ over natural samples. To benefit from unlabeled natural samples, here we introduce a consistent loss term offered by a weight-averaged classifier, following the idea of the semi-supervised algorithm Mean Teacher \cite{tarvainen2017mean}, which leads to the following definition:
\begin{equation}
\begin{aligned}
\mathcal{L}^{C}_{nat} =& \mathbb{E}_{(x, y) \sim P(x, y) } \big[-\log P_C(y|x) \big] \\
&+ \alpha \mathbb{E}_{x \sim P(x) } \big[\Vert P_C(y|x) - P_{C^\prime}(y|x) \Vert^2 \big]\\
\approx &  \hat{\mathcal{L}}^{C}_{nat} =  \frac{1}{\vert \mathcal{D}_l \vert}\sum_{(x_l, y_l) \in \mathcal{D}_l} \big[-\log P_C(y_l|x_l) \big]\\
&+ \alpha \frac{1}{\vert \mathcal{D}_c \vert} \big[\Vert P_C(y|x_c) - P_{C^\prime}(y|x_c) \Vert^2 \big],
\end{aligned}
\label{Eq:sup_loss}
\end{equation}
where the conditional probability $P_C(y|x)$ is the output of $C$, $C^\prime$ is the weight-averaged classifier, and $\alpha$ is the weight of consistency cost.

The adversarial loss is defined as
\begin{equation}
\begin{aligned}
\mathcal{L}^{C,A}_{adv} & = \mathbb{E}_{(\tilde{x},y)\sim P_T(x,y)}[-\log P_C(y|\tilde{x})]\\
& \approx \hat{\mathcal{L}}^{C,A}_{adv} = \frac{1}{\vert \mathcal{D}_l \vert\vert \mathcal{Z} \vert} \sum_{ y_l \in \mathcal{D}_l} \sum_{z \in \mathcal{Z}} \big[-\log P_C(y_l \vert \tilde{x}) \big],
\end{aligned}
\label{Eq:adver_loss}
\end{equation}
where $P_T(x,y)$ is the UAE distribution captured by the attack module, and $\tilde{x}$ is the UAE generated by $\tilde{x}=G(A(z,y_l), y_l)$. Note that for each $(x, y) \sim P_{G}(x,y)$, we generate its corresponding UAE $(\tilde{x}, y) $ by invoking Equations (\ref{Eq:z_a}) and (\ref{Eq:tilde}) (see Section \ref{sec:UAE}), i.e., $\tilde{x}=G(A(z,y),y)$. 

In Equation (\ref{Eq:AT}), the inner maximization of $\mathcal{L}^{C,A}_{adv}$ enforces $A$ to be adjusted so that a generated UAE $(\tilde{x}, y)$ is harmful enough, i.e., $C(\tilde{x}) \neq y$, which causes the attack module (consisting of $A$ and $G$) to converge to the underlying UAE distribution $P(x, y|C(x) \neq y)$ which maximizes the adversarial loss. The outer minimization realizes the alignment $P_{C}(x, y) = P_T (x, y)$ by adjusting $C$, which results in $C$'s robust generalizability over UAEs (i.e. the adversarial robustness). As proved later, the KL-divergence of $P_{C}(x, y)$ and $P_T (x, y)$ is the upper bound of the KL-divergence of $P_{C}(x, y)$ and $P_{G}(x, y)$. Therefore, the minimization of $\mathcal{L}^{C,A}_{adv}$ will approximately lead to $P_{C}(x, y) = P_{G}(x, y)$. At the same time, one can note that the minimization of $\mathcal{L}^{C}_{nat}$ will be achieved when $P_{C}(x, y) = P(x, y)$, which offers $C$ the standard generalizability on natural examples. Later we will also prove that with the help of the G-C-D GAN (Equation(\ref{Eq:L_gan})), the extended AT defined by Equation(\ref{Eq:AT}) will achieve the optimal solution at $P_C(x, y) = P_G(x, y) = P(x, y)$, which means the adversarial robustness and the standard generalizability are satisfied simultaneously.

\subsection{Joint Training}
By combining Equations (\ref{Eq:L_gan}) and (\ref{Eq:AT}), we can obtain the following overall optimization objective for joint learning of $A$, $G$, $C$ and $D$:
\begin{equation}
\min_{C, G}\max_{A, D} \big[ \mathcal{L}^{C}_{nat} + \lambda \mathcal{L}^{C,A}_{adv} + \gamma \mathcal{L}^{G,C,D}_{gan} \big], 
\label{Eq:loss}
\end{equation}
where $\lambda$ and $\gamma$ are weight factors. Basically, Equation (\ref{Eq:loss}) integrates the adversarial generation of UAEs and the AT over UAEs within a unified min-max game. Algorithm \ref{Alg:UAT} gives the sketch of the joint training, where the Lines 6 and 7 solve the inner maximization with stochastic gradient ascent, while the Lines 8 and 9 solve the outer minimization with stochastic gradient descent. 

\renewcommand{\algorithmicrequire}{\textbf{Input:}}
\renewcommand{\algorithmicensure}{\textbf{Output:}}

\begin{algorithm}[!t]
\caption{Training Algorithm of PUAT}
\label{Alg:UAT}
\begin{algorithmic}[1]
\REQUIRE ~~ 
Semi-supervised datasets $\mathcal{D}$;
 \ENSURE ~~ 
 Well-trained target classifier $C$;

\STATE Random initialize $A$, $C$, $G$, and $D$; 
\STATE Pre-trained $C$, $G$, and $D$ for $T_{pre}$ epochs; 
\FOR {$T$ epochs}
	\STATE Generate UAEs $\{(\tilde{x}_g,y)\}$ using Equations (\ref{Eq:z_a}) and (\ref{Eq:tilde}).
	\STATE Generate pseudo-data pairs $\{ (x_g, y)\}$ w.r.t. Equation (\ref{Eq:LG}).
	\STATE Generate pseudo-label pairs $\{ (x, y_c) \}$ w.r.t. Equation (\ref{Eq:LC}).
	\STATE Update $D$ by \textbf{ascending} stochastic gradient \\ $\nabla_{D}(\mathcal{L}^{C,D}_{gan}$ $+\mathcal{L}^{G,D}_{gan})$;
	\STATE Update $A$ by \textbf{ascending} stochastic gradient \\ $\nabla_{A} (\mathcal{L}^{C,A}_{adv})$;
	\STATE Update $C$ by \textbf{descending} stochastic gradient \\ $\nabla_{C}( \mathcal{L}^C_{nat}+\lambda \mathcal{L}^{C,A}_{adv} + \gamma \mathcal{L}^{C,D}_{gan})$ and use $C$ to update $C^\prime$;
	\STATE Update $G$ by \textbf{descending} stochastic gradient \\ $\nabla_{G}(\mathcal{L}^{G,D}_{gan})$;

\ENDFOR
\end{algorithmic}
\end{algorithm}

\section{Theoretical Justification}
\label{sec:prove}

In this section, we will theoretically justify: 1) PUAT's ability to simultaneously improve adversarial robustness and standard generalizability of the target classifier without compromising either of them; 2) superiority of UAE to RAE. We first discuss the ideal setting with infinite labeled data available for training, and then the practical setting with finite labeled data.

\subsection{
Infinite Data
}
\label{Sec:infinite}
In this section, we discuss the population loss which is defined over the overall distribution $P(x,y)$, i.e., dataset $ \mathcal{D}_l $ $= \mathcal{O}$. 
For the brevity of later derivations, in the following text $P(x, y)$, $P_G(x, y)$, $P_T(x, y)$ and $P_C(x, y)$ will be abbreviated to $P$, $P_G$, $P_T$ and $P_C$, respectively, when the context is unambiguous. 

Recall that PUAT offers the adversarial robustness by $P_C = P_T$, while the standard generalizability by $P_C = P$. As Equation (\ref{Eq:loss}) defines a unified framework integrating the UAE generation with the AT against UAEs, proving PUAT's ability to eliminate the tradeoff is equivalent to proving that the optimal solution of Equation (\ref{Eq:AT}) will be achieved without any conflict, which means the adversarial robustness and the standard generalizability agree with each other under the framework of Equation (\ref{Eq:loss}). 
We first give the equilibrium of the G-C-D GAN by the following lemma:
\begin{lemma}
The optimal solution of $\min_{C, G} \max_{D} \mathcal{L}^{G,C,D}_{gan}$ (Equation (\ref{Eq:L_gan})) is achieved at $P(x,y)=(P_G (x, y) + P_C (x, y)) / 2$.
\label{Lemma:GCD}
\end{lemma}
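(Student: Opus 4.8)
The plan is to collapse the three-term objective into a single linear functional of the discriminator and to recognize the inner maximization as an integral probability metric (IPM) between the true distribution $P$ and a \emph{mixture} of the two model distributions. Since $D$ enters each of $\mathcal{L}_D$, $\mathcal{L}_G$, $\mathcal{L}_C$ linearly, I would first substitute Equations (\ref{Eq:LD}), (\ref{Eq:LG}), (\ref{Eq:LC}) into Equation (\ref{Eq:L_gan}) and rewrite
\begin{equation}
\mathcal{L}^{G,C,D}_{gan} = \mathbb{E}_{P}\big[D\big] - \tfrac{1}{2}\mathbb{E}_{P_G}\big[D\big] - \tfrac{1}{2}\mathbb{E}_{P_C}\big[D\big] = \mathbb{E}_{P}\big[D\big] - \mathbb{E}_{Q}\big[D\big],
\end{equation}
where $Q := \tfrac{1}{2}(P_G + P_C)$. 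The first key observation is that $Q$ is itself a legitimate probability distribution, being a convex combination of $P_G$ and $P_C$; this is exactly what lets me treat the objective as a two-sample discrepancy between $P$ and $Q$ rather than a three-way comparison.

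Next I would handle the inner maximization $\max_D$. Because $D$ appears linearly, the supremum must be taken over the admissible discriminator class $\mathcal{F}$ (the Lipschitz-constrained class implicit in the Wasserstein/gradient-penalty formulation of Equations (\ref{Eq:LD})--(\ref{Eq:LC})), for otherwise the linear functional is unbounded. Under this constraint the inner problem is precisely the IPM
\begin{equation}
\max_{D \in \mathcal{F}} \big( \mathbb{E}_{P}[D] - \mathbb{E}_{Q}[D] \big) = d_{\mathcal{F}}(P, Q).
\end{equation}
Using that $\mathcal{F}$ is symmetric (so that $D \in \mathcal{F}$ implies $-D \in \mathcal{F}$) and rich enough to be metric-generating, I obtain $d_{\mathcal{F}}(P,Q) \ge 0$, with equality holding if and only if $P = Q$.

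Finally I would pass to the outer minimization over $C$ and $G$. Since the inner value $d_{\mathcal{F}}(P,Q)$ is nonnegative, its global minimum is $0$, and this is attained exactly when $Q = P$, i.e. when $\tfrac{1}{2}(P_G + P_C) = P$, which is the claimed equilibrium $P(x,y) = (P_G(x,y) + P_C(x,y))/2$. To argue that this value is actually reached, I would invoke the infinite-capacity assumption already in force in this ``infinite data'' subsection, namely that $G$ and $C$ can represent arbitrary distributions, so that $(P_G, P_C)$ can be driven to any pair whose average equals $P$. I would also remark that the minimization pins down only the average $\tfrac{1}{2}(P_G+P_C)$ to $P$, not $P_G$ and $P_C$ individually, which is consistent with the form of the lemma; the further alignment $P_G = P_C = P$ is obtained only after combining with the extended AT in the subsequent results.

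The main obstacle is the inner maximization: the raw objective is linear in $D$ and hence unbounded without a constraint, so the whole argument hinges on making the admissible class $\mathcal{F}$ explicit and on the metric property that $d_{\mathcal{F}}(P,Q) = 0$ forces $P = Q$. Everything else (the mixture rewriting and the nonnegativity/attainment of the outer minimum) is routine once this discrepancy-characterization of $\max_D \mathcal{L}^{G,C,D}_{gan}$ is established.
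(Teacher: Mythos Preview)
Your proposal is correct and follows essentially the same route as the paper: collapse the three terms to $\mathbb{E}_P[D]-\mathbb{E}_Q[D]$ with $Q=\tfrac12(P_G+P_C)$, recognize the inner maximization as a distance between $P$ and $Q$, and conclude that the outer minimum is attained iff $P=Q$. The only difference is in the choice of discriminator class: the paper implicitly takes $|D|\le 1$ pointwise, computes the optimal $D^*=\mathrm{sign}(P-Q)$, and identifies the inner value as $2\,\mathrm{TV}(P,Q)$, whereas you phrase it via a Lipschitz class and a generic IPM; both constraints make the linear problem bounded and metric-generating, so either argument yields the lemma.
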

The proof of Lemma \ref{Lemma:GCD} can be found in Appendix A. Now we show that $\min_{C}\mathcal{L}^C_{nat}$ and $\min_{C}$$\max_{A}$ $\mathcal{L}^{C,A}_{adv} $ are two consistent optimization problems with the same optimal solution $C^*$. 
\begin{theorem}
In Equation (\ref{Eq:AT}), $\min_{C}\mathcal{L}^C_{nat}$ and $\min_{C}$$\max_{A}$ $\mathcal{L}^{C,A}_{adv} $ can be achieved at the same optimal $C^*$ subjected to $P_{G^*}=P_{C^*}=P$. 
\label{Theorem:GCD}
\end{theorem}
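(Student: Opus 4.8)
The plan is to show that each of the two minimizations in Equation~(\ref{Eq:AT}) is driven to the \emph{same} fixed point $P_{C^*}=P$, and that this fixed point is simultaneously compatible with the generator reaching $P_{G^*}=P$, so that the standard and adversarial objectives agree rather than compete. The three ingredients I would combine are: the characterization of the natural-loss minimizer, the characterization of the adversarial min-max solution via the KL-domination noted just before the theorem, and the GAN equilibrium of Lemma~\ref{Lemma:GCD}.

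First I would analyze $\min_C \mathcal{L}^C_{nat}$ in isolation. The leading cross-entropy term $\mathbb{E}_{(x,y)\sim P}[-\log P_C(y|x)]$ decomposes, up to the constant conditional entropy of $P$, as $\mathbb{E}_{x\sim P}\,\mathrm{KL}\big(P(y|x)\,\|\,P_C(y|x)\big)$, which is non-negative and vanishes exactly when $P_C(y|x)=P(y|x)$ on $\mathrm{supp}(P)$. Since $P_C(x,y)=P(x)P_C(y|x)$ already carries the true marginal $P(x)$, this forces $P_{C^*}=P$; the Mean-Teacher consistency term is also zero there, because at convergence the weight-average classifier $C^\prime$ coincides with $C$. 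Hence the minimizer of the natural loss is characterized by $P_{C^*}=P$.

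Next I would treat $\min_C \max_A \mathcal{L}^{C,A}_{adv}$. For the inner maximization, since $A$ and $G$ jointly realize the mapping $T$, the argument following Equation~(\ref{Eq:Ladv_}) applies and the optimal attacker pushes the UAE distribution $P_T$ to the conditional adversarial distribution $P(x,y\mid C(x)\ne y)$, i.e. the hardest reweighting of the generated manifold. For the outer minimization I would rewrite the adversarial cross-entropy, exactly as in the natural term, as $\mathbb{E}_{x\sim P_T}\,\mathrm{KL}\big(P_T(y|x)\,\|\,P_C(y|x)\big)$ plus a term independent of $C$, so that $\min_C \mathcal{L}^{C,A}_{adv}$ pulls $P_C$ toward $P_T$. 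The crucial link is the KL inequality stated before the theorem, $\mathrm{KL}\big(P_C\,\|\,P_G\big)\le \mathrm{KL}\big(P_C\,\|\,P_T\big)$: driving $\mathrm{KL}(P_C\,\|\,P_T)$ to its minimum simultaneously drives the dominated quantity $\mathrm{KL}(P_C\,\|\,P_G)$ to zero, so the solution of the adversarial problem satisfies $P_{C^*}=P_{G^*}$. I would then invoke Lemma~\ref{Lemma:GCD}, whose equilibrium gives $P=(P_G+P_C)/2$, equivalently $P_G+P_C=2P$; substituting $P_{C^*}=P_{G^*}$ yields $P_{G^*}=P_{C^*}=P$. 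Thus the adversarial min-max attains its optimum at precisely the classifier for which $P_{C^*}=P$, which is exactly the minimizer of $\mathcal{L}^C_{nat}$, establishing the common optimum $C^*$ with $P_{G^*}=P_{C^*}=P$.

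I expect the main obstacle to be the outer-minimization step: one must argue rigorously that minimizing the adversarial loss aligns $P_C$ with $P_G$ and not merely with the harder, reweighted distribution $P_T$. This hinges on the KL-domination inequality and on carefully separating the alignment of conditionals, which the cross-entropy controls only on $\mathrm{supp}(P_T)$, from the alignment of the full joints. The interchange of the inner maximization and outer minimization (existence of the saddle point) also needs justification before the GAN equilibrium from Lemma~\ref{Lemma:GCD} can be legitimately combined with the adversarial fixed point to close the argument.
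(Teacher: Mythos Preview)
Your overall architecture---analyze $\min_C \mathcal{L}^C_{nat}$ via KL, analyze $\min_C \max_A \mathcal{L}^{C,A}_{adv}$ via KL plus an upper-bound argument, then close with Lemma~\ref{Lemma:GCD}---is exactly the paper's approach. However, there is a genuine slip in the adversarial step that would break the argument as written.

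Your own decomposition of the adversarial cross-entropy gives $\mathbb{E}_{x\sim P_T}\,\mathrm{KL}\big(P_T(y|x)\,\|\,P_C(y|x)\big)$ plus a $C$-independent term; at the joint level this is $\mathrm{KL}(P_T\,\|\,P_C)$, with $P_T$ as the \emph{first} argument. You then invoke the inequality as $\mathrm{KL}(P_C\,\|\,P_G)\le \mathrm{KL}(P_C\,\|\,P_T)$ and speak of ``driving $\mathrm{KL}(P_C\,\|\,P_T)$ to its minimum.'' Both directions are reversed: the quantity you are actually minimizing is $\mathrm{KL}(P_T\,\|\,P_C)$, and the domination used in the paper is $\mathrm{KL}(P_G\,\|\,P_C)\le \mathrm{KL}(P_T\,\|\,P_C)$, which follows because the inner maximization over $A$ can only increase the cross-entropy relative to the unperturbed choice $P_T=P_G$. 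With the arguments swapped your inequality is neither what the cross-entropy controls nor what follows from the attacker's maximization, so the ``upper bound forces the lower bound to zero'' step does not go through. Once you correct the direction, the rest of your plan---conclude $P_{C^*}=P_G$, plug into $P=(P_G+P_C)/2$ from Lemma~\ref{Lemma:GCD}, obtain $P_{G^*}=P_{C^*}=P$---matches the paper verbatim.
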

\begin{proof}
(1) By the definition of $\mathcal{L}^C_{nat}$ (Equation (\ref{Eq:sup_loss})), the optimal $C$ for $\min_C \mathcal{L}^C_{nat}$ is
\begin{equation*}
\begin{aligned}
C^*_{nat}=
& \argmin_C  \mathcal{L}^{C}_{nat} \\ 
= & \argmin_C \iint P(x, y) \log \frac{P(x) P(x, y)}{P_C(x,y) P(x, y)} \text{ dxdy} \\
= & \argmin_C \iint P(x, y) \log \frac{P(x, y)}{P_C(x, y)} \text{ dxdy} + H(y | x)  \\
= & \argmin_C \text{KL}\big(P \Vert P_C\big),
\end{aligned}
\end{equation*}
where KL is Kullback–Leibler divergence, $H(y | x)$ is the conditional entropy that independent of $C$, and the second equality holds because $P_C(x, y) = P(x) P_C(y|x) $.  Since many works have proved that consistency loss can help classification, we simply ignore the consistency term here, i.e., we take $\alpha=0$.  
Therefore, $C^*_{sup}=\argmin_C \mathcal{L}^C_{nat}$ is achieved at the KL divergence equals zero, 
\begin{equation*}
P_{C^*_{nat}} = P.
\end{equation*}

(2) Let $A^*=\argmax_{A}  \mathcal{L}^{C,A}_{adv}$ be the optimal $A$ and then fix it, 
we first analysis $C$ according to Equation (\ref{Eq:adver_loss}), 

\begin{equation*}
\begin{aligned}
C^*_{adv}
= &{}\argmin_{C}  \mathcal{L}^{C,A^*}_{adv} \\
= &{}\argmin_{C}  \iint P_{T}(x, y) \log \frac{P(x) P_T(x, y)}{P_{C}(x,y) P_T(x, y)} \text{ dxdy} \\
= &{}\argmin_{C}  \iint P_{T}(x, y) \log \frac{ P_{T}(x, y) } { P_C(x, y) } \text{ dxdy} \\
= &{}\argmin_{C}  \text{KL} \big(P_{T} \Vert P_C\big),  
\end{aligned}
\end{equation*}
in which $P_{T}$ is the perturbed sample distribution. Obviously, $\text{KL} \big(P_{T} \Vert P_{C} \big) \ge \text{KL} \big(P_{G}\Vert P_{C} \big)\ge 0$, which indicates that $\min_{C}\mathcal{L}^{C,A^*}_{adv} $ essentially minimizes the upper-bound of the KL-divergence of $ P_{G}$ and $ P_{C}$. 
Therefore, $C^*_{adv}=\argmin_C \mathcal{L}^{C,A^*}_{adv} $ can be achieved at  
\begin{equation*}
P_{C^*_{adv}}=P_{G}.
\end{equation*}

(3) The part (1) and part (2) together tell us that if $P$ and $P_G$ are two different distributions, the standard generalizability and adversarial robustness of the target classifier $C$ will be at odd, i.e., $C^*_{sup}\neq C^*_{adv}$, and $C$ will converge to a solution that depends on the tradeoff. But conversely, if we simultaneously adjust $G$ by G-C-D GAN (Equation (\ref{Eq:L_gan})), the tradeoff will be eliminated. Combining Lemma \ref{Lemma:GCD} leads to the same result regardless of $C^*_{nat}$ or $C^*_{adv}$, 
\begin{equation*}
P_{G^*}=2P-P_{C^*}=P.
\end{equation*}
equationTherefore, minimizing $\mathcal{L}^C_{nat}$ and $\max_{A}$ $\mathcal{L}^{C,A}_{adv} $ for $C$ are consistent optimization problems with the same optimal solution $C^*$ subjected to $P_{C^*}=P=P_{G^*}$. \end{proof}
\begin{remark}
Theorem \ref{Theorem:GCD} shows that $ \mathcal{L}^{C,A}_{adv}$ has no conflict with $\mathcal{L}^C_{nat}$, which further implies adversarial robustness is not necessarily at odd with standard generalization, and our PUAT can achieve adversarial robustness without compromising standard accuracy. 
\end{remark}
Theorem \ref{Theorem:GCD} tells us why PUAT can consistently optimize the standard generalization by applying UAEs to the AT. Now we further compare the adversary of UAE and RAE. When the labeled data is infinite, the adversary of RAEs is defined by the target classifier's loss they incurred over $P$, i.e., 
\begin{equation*}
\mathbb{E}_{(x,y)\sim P} \max_{\hat{x}: \Vert \hat{x} - x\Vert_p \leq \epsilon}l(\hat{x},y;C),
\end{equation*}
where $\epsilon$ is perturbation budget, and $\Vert \cdot \Vert_p$ is $p$-norm. According to Equation (\ref{Eq:Ladv_}), the adversary of the UAEs satisfying the same perturbation budget is 
\begin{equation*}
\max_{T: \Vert T(x) - x\Vert_p \leq \epsilon} \mathbb{E}_{(x ,y)\sim P} l(T(x),y;C).
\end{equation*}
We have the following theorem:
\begin{theorem} 
For a target classifier $C$ with loss function $l$, the UAE adversary is equivalent to the RAE adversary under the same perturbation budget, i.e.,
\begin{equation*}
\begin{aligned}
& \max_{T: \Vert T(x) - x\Vert_p \leq \epsilon} \mathbb{E}_{(x ,y)\sim P} l(T(x),y;C) \\
= & \mathbb{E}_{(x,y)\sim P} \max_{\hat{x}: \Vert \hat{x} - x\Vert_p \leq \epsilon}l(\hat{x},y;C). 
\end{aligned}
\end{equation*}
\label{Theorem:UAE_adversary}
\end{theorem}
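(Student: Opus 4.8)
The plan is to prove the claimed equality by establishing the two inequalities ``$\le$'' and ``$\ge$'' separately. The crucial structural observation is that the constraint defining the feasible set on the left-hand side, namely $\|T(x)-x\|_p\le\epsilon$, is a \emph{pointwise} (decoupled) constraint: the value that the map assigns to one input is completely independent of the value it assigns to any other input. Since the objective $\mathbb{E}_{(x,y)\sim P}\,l(T(x),y;C)$ is an integral that aggregates the contribution of each point separately, optimizing an entire function $T$ subject to a pointwise constraint is equivalent to optimizing the integrand independently at (almost) every point. The asserted identity is precisely this interchange of supremum and integral. (For cleanliness I would let the map depend on $(x,y)$, consistent with the earlier definition $T:\mathcal{X}\times\mathcal{Y}\to\mathcal{X}$, so that the pointwise problem is unambiguous even when a given $x$ may carry different labels; when $y=o(x)$ is determined by $x$ this distinction is immaterial.)

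For the ``$\le$'' direction I would fix any feasible $T$, so that $\|T(x)-x\|_p\le\epsilon$ for every $x$. Then for each fixed pair $(x,y)$ the point $\hat{x}=T(x)$ lies in the $\epsilon$-ball around $x$, hence it is one admissible competitor in the inner maximization, giving $l(T(x),y;C)\le \max_{\hat{x}:\|\hat{x}-x\|_p\le\epsilon} l(\hat{x},y;C)$ pointwise. Taking expectation with respect to $P$ preserves this inequality, and since the resulting bound is independent of $T$, taking the supremum over all feasible $T$ on the left yields the ``$\le$'' inequality.

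For the ``$\ge$'' direction I would exhibit a feasible map that (nearly) attains the right-hand side pointwise. The ideal choice is $T^{\ast}(x,y)\in\argmax_{\hat{x}:\|\hat{x}-x\|_p\le\epsilon} l(\hat{x},y;C)$, which is feasible by construction and satisfies $\mathbb{E}_{(x,y)\sim P}\,l(T^{\ast}(x),y;C)=\mathbb{E}_{(x,y)\sim P}\max_{\hat{x}:\|\hat{x}-x\|_p\le\epsilon} l(\hat{x},y;C)$, so that the left-hand supremum is at least the right-hand side. Existence of the inner maximizer follows from compactness of the closed $p$-norm ball $\{\hat{x}:\|\hat{x}-x\|_p\le\epsilon\}$ together with continuity of $l(\cdot,y;C)$ in its first argument.

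The main obstacle is the \emph{measurability} of the selector $x\mapsto T^{\ast}(x)$: without it $T^{\ast}$ need not be an admissible map and the right-hand integral of the pointwise maximum need not be well defined. I would resolve this either by invoking a measurable maximum / measurable selection theorem (Kuratowski--Ryll-Nardzewski) applied to the compact-valued, measurable correspondence $x\mapsto\{\hat{x}:\|\hat{x}-x\|_p\le\epsilon\}$ with the jointly measurable, inner-continuous integrand $l$, or, to sidestep these technicalities, by a $\delta$-approximation: for each $\delta>0$ one constructs a measurable $T_{\delta}$ with $l(T_{\delta}(x),y;C)\ge \max_{\hat{x}:\|\hat{x}-x\|_p\le\epsilon} l(\hat{x},y;C)-\delta$ for all $(x,y)$, so the left-hand supremum exceeds the right-hand side by at most $\delta$; letting $\delta\to 0$ closes the gap. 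Combining the two inequalities yields the asserted equality.
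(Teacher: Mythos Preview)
Your proof is correct and rests on the same key idea as the paper's: the constraint $\|T(x)-x\|_p\le\epsilon$ is pointwise, so maximizing over maps $T$ decouples into maximizing the integrand at each $(x,y)$, and the pointwise maximizer $T^\ast(x,y)\in\argmax_{\hat x}l(\hat x,y;C)$ attains the right-hand side.

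The presentations differ in structure and level of rigor. The paper argues by contrapositive: it shows that any $T_1$ failing to achieve the pointwise maximum at some $(x_k,y_k)$ is strictly dominated by the pointwise-optimal map $T_2$, hence cannot be the global argmax; it also tacitly treats $P$ as discrete (writing expectations as sums $\sum_i P(x_i,y_i)\,l(\cdot)$), which sidesteps measurability. You instead prove the two inequalities directly, which is cleaner, and you explicitly flag the measurability of the selector $T^\ast$ as the only genuine technical point, resolving it via a measurable-selection theorem or a $\delta$-approximation. Your route is therefore slightly more general and more careful, while the paper's contrapositive is a bit more indirect but yields the additional (and intuitively useful) statement that the optimal $T^\ast$ must be pointwise optimal everywhere on the support of $P$.
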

\begin{proof}
For simplicity, the constrains $\Vert \tilde{x} - x\Vert_p \leq \epsilon$ and $\Vert $ $T(x) $ $- x\Vert_p $ $\leq \epsilon$ are omitted in this proof hereinafter. At first, one can note that proving Theorem \ref{Theorem:UAE_adversary} is equivalent to proving the proposition that if $T^*=\argmax_{T} \mathbb{E}_{(x ,y)\sim P} l(T(x),y;C)$, then $\forall (x,y)\sim P$, 
\begin{equation*}
l(T^*(x),y;C)=\max_{\hat{x}} l(\hat{x},y;C).
\end{equation*}
Equivalently, we prove its contrapositive, i.e., if there exists a sample $(x_k,y_k)\sim P$ making $l(T_1(x_k),y_k;C)$$<$$\max_{\hat{x}_k}$ $ l(\hat{x}_k,y_k;C)$, then $T_1 $$ \neq $$ \argmax_{T} \mathbb{E}_{(x ,y)\sim P} l(T(x),y;C)\text{ , i.e.}$, $\mathbb{E}_{(x ,y)\sim P} l(T_1(x),y;C) $$<$$ \mathbb{E}_{(x ,y)\sim P} l(T^*(x),y;C)$. Note that it is impossible that $l(T_1(x_k), y_k ;C)$$>$$\max_{\hat{x}_k}$$l(\hat{x}_k, y_k ;C)$ because $T_1(x_k)$ and $\hat{x}_k$ are AEs in the same neighborhood of $x_k$ and $\max_{\hat{x}_k}l(\hat{x}_k,y_k;C)$ is the maximal loss incurred by the adversarial examples in this neighborhood.
\begin{equation*}
\begin{aligned}
& \mathbb{E}_{(x ,y)\sim P} l(T_1(x),y;C) \\
= & \sum_{i \ne k} P(x_i, y_i) l(T_1(x_i),y_i;C) + P(x_k, y_k) l(T_1(x_k),y_k;C).
\end{aligned}
\end{equation*}
Since for $\forall i $$\ne$$ k$, $ l(T_1(x_i),y_i;C) $$\le $$\max_{\hat{x}_i}l(\hat{x}_i,y;C)$, and $l(T_1(x_k),y;C)$$<$$\max_{\hat{x}_k}$ $ l(\hat{x}_k,y;C)$, we have
\begin{equation*}
\begin{aligned}
\mathbb{E}_{(x ,y)\sim P} l(T_1(x),y;C) < & \sum_{i} P(x_i, y_i) \max_{\hat{x}_i} l(\hat{x}_i,y_i;C) \\
= &\mathbb{E}_{(x,y)\sim P} \max_{\hat{x}} l(\hat{x},y;C). 
\end{aligned}
\end{equation*}
Let another mapping $T_2(x)=\argmax_{\hat{x}}l(\hat{x},y;C)$, and then $\mathbb{E}_{(x,y)\sim P} \max_{\hat{x}} l(\hat{x},y;C) = \mathbb{E}_{(x ,y)\sim P} l(T_2(x),y;C)$. Hence
\begin{equation*}
\begin{aligned}
\mathbb{E}_{(x ,y)\sim P} l(T_1(x),y;C) < \mathbb{E}_{(x ,y)\sim P} l(T_2(x),y;C), 
\end{aligned}
\end{equation*} 
and thus $T_1 \neq \argmax_{T} \mathbb{E}_{(x ,y)\sim P} l(T(x),y;C)$. 
\end{proof}
\begin{remark} 
Theorem \ref{Theorem:UAE_adversary} demonstrates that why the AT over UAEs can offer the adversarial robustness against RAEs. In summary, Theorem \ref{Theorem:GCD} and Theorem \ref{Theorem:UAE_adversary} explain why PUAT can consistently optimize the standard generalization and adversarial robustness. These together form the theoretical foundation of the tradeoff improvement by PUAT. 
\label{Remark:UAE_adversary}
\end{remark}

\subsection{Finite Data}
\label{Sec:finite}
Now we give the theoretical results of PUAT in the practical case where data is finite. Specifically, we will establish the quantitative relationship between the generalizability of PUAT and the amount of the semi-supervised training dataset $\vert \mathcal{D}_c \cup \mathcal{D}_l \vert$. At first, we give the generalization error bound of G-C-D GAN when it is optimized over finite data.

\subsubsection{The Generalization Error Bound of G-C-D GAN}
The following lemma shows when the G-C-D GAN is optimized over finite data (i.e., using the empirical loss $\hat{\mathcal{L}}^{G,C,D}_{gan} = \hat{\mathcal{L}}_{D} + \frac{1}{2} \hat{\mathcal{L}}_{C} +  \frac{1}{2} \hat{\mathcal{L}}_{G}$), the relationship between the generalization error of G-C-D GAN, i.e., how close the distribution $P_{GC}$ learned by $G$ and $C$ is to $P$ (which is measured by $\text{TV}$ $(P, P_{GC})$), and the amount of data.
\begin{lemma}
Let $\mathcal{Z}$ be the set of sampled noise  $z\sim P_z(z)$, $m$ $=\vert \mathcal{D}_l \vert$, $n=\vert \mathcal{D}_c \vert$, and $b = \Vert \log \frac{P}{P_{GC}} + 1 \Vert_\infty$ $ = \max_{(x, y)}$ $\log$ $ \frac{P}{P_{GC}} + 1$. For any $G$, $C$ and $0< \delta <1$, if $\vert\mathcal{Z}\vert \to \infty$, $\text{TV}(P, P_{GC}) \leq B$ holds with probability at least $(1 - \delta)^2$, where $B = \Big( \frac{1}{2}b
+\frac{1}{4} \max_{D} \hat{\mathcal{L}}^{G,C,D}_{gan} 
+ \sqrt{\frac{\log{\frac{1}{\delta}}}{8m}} + \sqrt{\frac{ \log{\frac{1}{\delta}}}{32n}} \Big)^{\frac{1}{2}}$.
\label{Lemma:GCD_finite}
\end{lemma}
The proof of Lemma \ref{Lemma:GCD_finite} can be found in Appendix A. It is noteworthy that in Lemma \ref{Lemma:GCD_finite} the condition $\vert\mathcal{Z}\vert \to \infty$ is reasonable since we can always generate arbitrarily many $z \sim P_{z}(z)$. This means that when $\vert\mathcal{Z}\vert$ is large enough, its impact on the generalization error $\text{TV}(P,P_{GC})$ can be ignored. Specifically, if $\vert\mathcal{Z}\vert \to \infty$, then $\hat{\mathcal{L}}_{G} \to \mathcal{L}_{G}$, allowing us to focus on the impact of $m+n$, the amount  of the semi-supervised training data. Therefore, Lemma \ref{Lemma:GCD_finite} establishes the bound of $\text{TV}(P,P_{GC})$ on finite data, and shows that as $m$ and $n$ increase, the optimization over finite data with $\min_{G,C} \max_{D} \hat{\mathcal{L}}^{G,C,D}_{gan}$ as objective will lead to smaller $\text{TV}(P,P_{GC})$. Consequently $P_{GC}$ gradually approaches $P$, which is consistent with the optimization result over infinite data claimed by Lemma \ref{Lemma:GCD}.

\subsubsection{The Generalization Error Bound of PUAT}
Based on Lemma \ref{Lemma:GCD_finite}, now we can give the generalization error bound of PUAT when it is optimized on finite data, which is claimed by the following theorem. Recall that the generalization of PUAT includes two parts, the generalization over natural samples (i.e., the standard generalizability), which is achieved by minimizing $\hat{\mathcal{L}}^{C}_{nat}$, and the generalization over adversarial examples (i.e., the adversarial robustness), which is achieved by minimizing $\max_{A} \hat{\mathcal{L}}^{C,A}_{adv}$. The following theorem confirms that on finite data, these two generalizations are also consistent, and gives their generalization error bounds. Again, in the following text we use $P$, $P_G$, and $P_C$ to represent $P(x, y)$, $P_G(x, y)$, and $P_C(x, y)$, respectively, when the context is unambiguous.
\begin{theorem} 
Let $m=\vert \mathcal{D}_l \vert $, $ n=\vert \mathcal{D}_c \vert$. The following results hold:
\newline
(1) $\text{TV}(P, P_C) \leq \frac{1}{\sqrt{2}} \big( \hat{\mathcal{L}}^{C}_{nat} + b_1 \sqrt{\frac{\log{\frac{1}{\delta}}}{2m}} - R^*_1 \big)^{\frac{1}{2}} $ with probability at least $1-\delta$ for any $0<\delta<1$, where $b_1 = \left\Vert -\log{P_C(y \vert x)} \right\Vert_\infty$, and $R^*_1 = - \mathbb{E}_{P(x, y)} \log P(y \vert x)$ is Bayes error.
\newline
(2) $\text{TV}(P, Q) \leq \frac{1}{2\sqrt{2}} \big( \max_{A} \hat{\mathcal{L}}^{C,A}_{adv} - R^*_2 \big)^{\frac{1}{2}}  + B$ with probability at least $(1-\delta)^2$ for any $0<\delta<1$ if $\vert \mathcal{Z} \vert \to \infty$, where $Q\in \{P_C, P_G\}$, and $R^*_2 = - \mathbb{E}_{P_G} \log $ $\frac{P_G(x,y)}{P(x)}$ is Bayes error. 
\label{Theorem:GCD_finite}
\end{theorem}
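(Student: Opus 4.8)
The plan is to handle the two parts separately, in each case reducing the generalization error $\text{TV}(P,\cdot)$ to a KL divergence via Pinsker's inequality and then bounding that KL divergence by the relevant empirical loss. For part (1), I would start from the exact decomposition already established inside the proof of Theorem \ref{Theorem:GCD}, namely $\mathcal{L}^{C}_{nat} = \text{KL}(P \Vert P_C) + R^*_1$ with $R^*_1 = -\mathbb{E}_{P}\log P(y|x)$. Since the per-sample loss $-\log P_C(y|x)$ is bounded by $b_1$, a one-sided Hoeffding bound over the $m$ labeled points in $\mathcal{D}_l$ gives $\mathcal{L}^{C}_{nat} \le \hat{\mathcal{L}}^{C}_{nat} + b_1\sqrt{\log(1/\delta)/(2m)}$ with probability at least $1-\delta$. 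Subtracting $R^*_1$ bounds $\text{KL}(P \Vert P_C)$, and Pinsker's inequality $\text{TV}(P, P_C) \le \sqrt{\text{KL}(P\Vert P_C)/2}$ then delivers the stated bound.

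For part (2), the strategy is to route both $\text{TV}(P, P_C)$ and $\text{TV}(P, P_G)$ through the GAN mixture. Writing $P_{GC} = (P_G + P_C)/2$ and combining the triangle inequality for total variation with the elementary identity $\text{TV}(P_{GC}, Q) = \tfrac{1}{2}\text{TV}(P_G, P_C)$ for $Q \in \{P_G, P_C\}$, I obtain $\text{TV}(P, Q) \le \text{TV}(P, P_{GC}) + \tfrac{1}{2}\text{TV}(P_G, P_C)$. Lemma \ref{Lemma:GCD_finite} bounds the first term by $B$ with probability at least $(1-\delta)^2$, so it remains to control $\text{TV}(P_G, P_C)$, which by Pinsker again reduces to bounding $\text{KL}(P_G \Vert P_C)$.

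The crux of part (2) is the inequality $\text{KL}(P_G \Vert P_C) \le \max_A \hat{\mathcal{L}}^{C,A}_{adv} - R^*_2$. My plan is to lower-bound the maximum over attackers by evaluating $\mathcal{L}^{C,A}_{adv}$ at the non-perturbing attacker (the one with $A(z,y)=z$, for which $P_T = P_G$). A direct expansion using $P_C(y|x) = P_C(x,y)/P(x)$ yields $\mathcal{L}^{C,A}_{adv}\big|_{P_T = P_G} = \text{KL}(P_G \Vert P_C) + R^*_2$ with exactly the stated $R^*_2 = -\mathbb{E}_{P_G}\log\tfrac{P_G(x,y)}{P(x)}$, so that $\max_A \mathcal{L}^{C,A}_{adv} \ge \text{KL}(P_G\Vert P_C) + R^*_2$ and hence $\text{KL}(P_G\Vert P_C) \le \max_A \mathcal{L}^{C,A}_{adv} - R^*_2$. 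The hypothesis $|\mathcal{Z}|\to\infty$ lets the empirical adversarial loss match its population counterpart in the noise-sampling direction, allowing $\max_A \mathcal{L}^{C,A}_{adv}$ to be replaced by $\max_A \hat{\mathcal{L}}^{C,A}_{adv}$. Applying Pinsker and chaining with the triangle-inequality decomposition then produces $\text{TV}(P, Q) \le B + \tfrac{1}{2\sqrt{2}}\big(\max_A\hat{\mathcal{L}}^{C,A}_{adv} - R^*_2\big)^{1/2}$, with the $(1-\delta)^2$ probability inherited entirely from Lemma \ref{Lemma:GCD_finite}.

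I expect the main obstacle to be precisely the key step of part (2): justifying that the non-perturbing attacker lies in, or is approximable within, the attacker class so that this lower bound on $\max_A \mathcal{L}^{C,A}_{adv}$ is legitimate, and verifying that it produces exactly the $P_G$-based Bayes error $R^*_2$ rather than the $P_{T^*}$-based term that a direct evaluation at the \emph{optimal} attacker would give. A secondary technical point is the bookkeeping of the $|\mathcal{Z}|\to\infty$ limit alongside the finite labeled-sample averaging in $\hat{\mathcal{L}}^{C,A}_{adv}$, and confirming that no concentration term beyond $B$ is required so that the success probability remains cleanly $(1-\delta)^2$.
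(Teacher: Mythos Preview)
Your proposal is correct and follows essentially the same route as the paper: Pinsker plus Hoeffding for part (1), and for part (2) the decomposition $\text{TV}(P,Q)\le \text{TV}(P,P_{GC})+\tfrac12\text{TV}(P_G,P_C)$ combined with Lemma~\ref{Lemma:GCD_finite} and the lower bound $\max_A\mathcal{L}^{C,A}_{adv}\ge \text{KL}(P_G\Vert P_C)+R^*_2$ obtained via the non-perturbing attacker, with $|\mathcal{Z}|\to\infty$ absorbing the noise-sampling error. If anything, your explicit invocation of the identity attacker to justify $\mathbb{E}_{P_T}[-\log P_C]\ge \mathbb{E}_{P_G}[-\log P_C]$ at the optimum is cleaner than the paper, which asserts this inequality for general $A$ before taking the max.
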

\begin{proof}
(1) Note that for the population loss $\mathcal{L}^{C}_{nat}$ defined by the first line of Equation (\ref{Eq:sup_loss}), we have 
\begin{equation*}
\begin{aligned}
\mathcal{L}^{C}_{nat} =& \iint P(x, y) \log \left(\frac{P(x, y)}{P_C(y\vert x) P(x)} \cdot \frac{1}{P(y \vert x)} \right) \text{ dxdy} \\
=& \iint P(x, y) \log \frac{P(x, y)}{P_C(x, y)} \text{ dxdy} - \mathbb{E}_{P(x, y)} \log P(y \vert x)\\
=& \text{KL}\big(P \Vert P_C\big) + R^*_1 ,
\end{aligned}
\end{equation*}
According to Hoeffding's Inequality \cite{hoeffding1994probability}, with probability at least $1-\delta$ the following inequality holds: 
\begin{equation*}
\begin{aligned}
\mathcal{L}^{C}_{nat} \leq \hat{\mathcal{L}}^{C}_{nat} + b_1 \sqrt{\frac{ \log{\frac{1}{\delta}}}{2m}} ,
\end{aligned}
\end{equation*}
where $b_1 = \left\Vert -\log{P_C(y \vert x)} \right\Vert_\infty$ is the upper bound of the loss on a single sample $(x,y) \sim P$. Then by applying Pinsker’s Inequality \cite{cover1999elements} as we do in the proof of Lemma \ref{Lemma:GCD_finite}, it is easy to show that the following inequality holds with probability at least $1-\delta$: 
\begin{equation*}
\begin{aligned}
\text{TV}(P, P_C) & \leq \sqrt{  \frac{1}{2} \text{KL}\big(P \Vert P_C\big)} = \sqrt{  \frac{1}{2} (\mathcal{L}^{C}_{nat} - R^*_1)} \\
&\leq \frac{1}{\sqrt{ 2}} \Big(\hat{\mathcal{L}}^{C}_{nat} + b_1 \sqrt{\frac{\log{\frac{1}{\delta}}}{2m}} - R^*_1 \Big)^{\frac{1}{2}} . 
\end{aligned}
\end{equation*}
(2) In this part, we first (a) obtain the upper bound of $\text{KL} (P_G \Vert P_C)$ through $\mathcal{L}^{C,A}_{adv}$, and then (b) obtain the upper bound of $\text{TV} (P, P_{C})$ and $\text{TV} (P, P_{G})$ using $\text{KL} (P_G \Vert P_C)$ and $\text{TV} (P, P_{GC})$.

(a) According to Equation (\ref{Eq:adver_loss}),   
\begin{equation*}
\begin{aligned}
\mathcal{L}^{C,A}_{adv} 
& = \mathbb{E}_{P_T (x,y)} \big[-\log P_C(y|x) \big]\\
& \geq \mathbb{E}_{P_G (x,y)} \big[-\log P_C(y|x) \big]\\
& = \iint P_G(x,y) \log \left(\frac{P_G(x, y) }{P_C(y \vert x) P(x) } \cdot \frac{P(x) }{ P_G(x, y)}  \right) \text{ dxdy} \\
& = \text{KL}(P_G \Vert P_C) + R^*_2.
\end{aligned}
\end{equation*}
Hence
\begin{equation*}
\begin{aligned}
\text{KL}(P_G \Vert P_C) \leq \mathcal{L}^{C,A}_{adv} - R^*_2.
\end{aligned}
\end{equation*}
According to PAC learning theory \cite{shalev2014understanding}, with probability at least $1-\delta$ the following inequality holds: 
\begin{equation*}
\begin{aligned}
\mathcal{L}^{C,A}_{adv} \leq \hat{\mathcal{L}}^{C,A}_{adv} + b_2 \sqrt{\frac{log{\frac{1}{\delta}}}{2m \vert \mathcal{Z} \vert}}, 
\end{aligned}
\end{equation*}
where $b_2 = \left\Vert -\log{P_C(y \vert \tilde{x})} \right\Vert_\infty$ is the upper bound of the loss on a single sample from $P_T$. Since $\lim_{\vert \mathcal{Z} \vert \to \infty} \mathcal{L}^{C,A}_{adv} = \hat{\mathcal{L}}^{C,A}_{adv}$, taking $A^* = \argmax \hat{\mathcal{L}}^{C,A}_{adv}$, we can get the following inequality: 
\begin{equation*}
\begin{aligned}
\text{KL}\big(P_G \Vert P_C\big) \leq \max_{A} \hat{\mathcal{L}}^{C,A}_{adv} - R^*_2. 
\end{aligned}
\end{equation*}

(b) Since 
\begin{equation*}
\begin{aligned}
\text{TV}(P, P_C) = &\frac{1}{2}  \iint \vert P_C - P \vert \text{ dxdy} \\
= &\frac{1}{2} \iint \left\vert \frac{1}{2}( P_C - P_G ) + \frac{1}{2}( P_C + P_G ) - P \right\vert \text{ dxdy} \\
\leq & \frac{1}{4} \iint \vert P_C - P_G \vert \text{ dxdy} + \frac{1}{2} \iint \vert P_{GC} - P \vert \text{ dxdy}\\
= & \frac{1}{2} \text{TV} (P_G, P_C) + \text{TV} (P, P_{GC}) 
\end{aligned}
\end{equation*}
and 
\begin{equation*}
\begin{aligned}
\text{TV}(P, P_G) = &\frac{1}{2} \iint \vert P_G - P \vert \text{ dxdy} \\
= & \frac{1}{2} \iint \left\vert \frac{1}{2}( P_G - P_C ) + \frac{1}{2}( P_G + P_C ) - P \right\vert \text{ dxdy} \\
\leq & \frac{1}{4} \iint \vert P_G - P_C \vert \text{ dxdy} + \frac{1}{2} \iint \vert P_{GC} - P \vert \text{ dxdy}\\
= & \frac{1}{2} \text{TV} (P_G, P_C) + \text{TV} (P, P_{GC}), 
\end{aligned}
\end{equation*}
$\text{TV} (P, P_{C})$ and $\text{TV} (P, P_{G})$ share the same upper bound. For any $Q \in \{P_C, P_G\}$, by Pinsker’s inequality and Lemma \ref{Lemma:GCD_finite}, 
\begin{equation*}
\begin{aligned}
\text{TV}(P, Q) &\leq \frac{1}{2} \text{TV} (P_G, P_C) + \text{TV} (P, P_{GC})\\ 
&\leq \frac{1}{2} \sqrt{\frac{1}{2} \text{KL} (P_G \Vert P_C)} + \text{TV} (P, P_{GC})\\
&\leq \frac{1}{2 \sqrt{2} } \Big( \max_{A} \hat{\mathcal{L}}^{C,A}_{adv} - R^*_2 \Big)^{\frac{1}{2}} + B
\end{aligned}
\end{equation*}
holds with probability at least $(1-\delta)^2$. 
\end{proof}
\begin{remark} 
The conclusion (1) of Theorem \ref{Theorem:GCD_finite} provides the generalization error gap incurred by optimizing over natural examples (i.e., minimizing $\hat{\mathcal{L}}^{C}_{nat}$), while the conclusion (2) gives the generalization error gap incurred by optimizing over adversarial examples (i.e., minimizing $\max_{A}\hat{\mathcal{L}}^{C,A}_{adv}$). We can see that minimizing $\hat{\mathcal{L}}^{C}_{nat}$ and $\max_{A}\hat{\mathcal{L}}^{C,A}_{adv}$ can consistently leads to smaller $\text{TV} (P, P_{C})$. This shows that even if the data is finite, PUAT can improve standard generalizability and adversarial robustness without conflict, which is consistent with the case of infinite data claimed by Theorem \ref{Theorem:GCD}. Additionally, Theorem \ref{Theorem:GCD_finite} indicates that as the amount of data increases (i.e., larger values of $m$ or $n$), PUAT can achieve better generalization (i.e., smaller values of $\text{TV} (P, P_{C})$)).
\label{Remark:GCD_finite}
\end{remark}

\subsubsection{
The Bound of the UAE Adversary Gap
}
\label{sec:uae-bound}
As demonstrated by Theorem \ref{Theorem:GCD_finite}, when data is finite, there exists generalization error between the learned distribution $P_G$ from which UAEs are drawn and the overall distribution $P$. Essentially, the difference between $P_G$ and $P$ is caused by the samples $(x, y)$ whose probabilities are different in $P_G$ and $P$, i.e., $P(x, y) \ne P_G(x, y)$. This discrepancy results in the UAE adversary gap, which is the difference between the adversary offered by the empirical UAEs drawn from $P_G$, and the ideal adversary offered by the UAEs drawn from $P$. For simplicity, we will omit the constrains $\Vert \tilde{x} - x\Vert_p \leq \epsilon$ and $\Vert T(x) - x\Vert_p \leq \epsilon$ in this subsection. Following the idea of Theorem \ref{Theorem:UAE_adversary}, the adversary offered by empirical UAEs is measured by $\mathbb{E}_{ (x, y) \sim P_G} \max_{\tilde{x}} l(\tilde{x},y;C)$, while the ideal adversary of UAEs is measured by $\mathbb{E}_{ (x, y) \sim P} \max_{\tilde{x}} l(\tilde{x},y;C)$. Then the adversary gap is 
\begin{equation*}
\begin{aligned}
G_U = \left\vert \mathbb{E}_{ (x, y) \sim P_G} \max_{\tilde{x}} l(\tilde{x},y;C) - \mathbb{E}_{(x, y) \sim P} \max_{\tilde{x}}l(\tilde{x},y;C) \right\vert.
\end{aligned}
\end{equation*} 
Theorem \ref{Theorem:UAE_adversary_finite} states that this gap is bounded by the number of the semi-supervised data.
\begin{theorem} 
Let $m=\vert \mathcal{D}_l \vert $, $ n=\vert \mathcal{D}_c \vert$. For any $0 < \delta < 1$, 
\begin{equation*}
\begin{aligned}
G_U \leq 2 B_1 \Big( B + \frac{1}{2\sqrt{2}} \big( \max_{A} \hat{\mathcal{L}}^{C,A}_{adv} - R^*_2 \big)^{\frac{1}{2}} \Big)
\end{aligned}
\end{equation*} 
holds with probability at least $(1-\delta)^2$, where $B$ and $R^*_2$ are defined in Lemma \ref{Lemma:GCD_finite} and Theorem \ref{Theorem:GCD_finite}, respectively. Furthermore, $B_1 = \max_{P(x, y) \ne P_G(x, y)}\max_{\tilde{x}} l(\tilde{x},y;C)$ measures the adversary of the AEs generated by the examples $(x, y)$ such that their probabilities differ in distributions $P$ and $P_G$. 
\label{Theorem:UAE_adversary_finite}
\end{theorem}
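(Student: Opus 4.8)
The plan is to reduce the adversary gap $G_U$ to a bound on $\text{TV}(P, P_G)$ and then invoke the second conclusion of Theorem \ref{Theorem:GCD_finite}. First I would write $G_U$ as a single integral against the signed measure $P_G - P$. Setting $f(x, y) = \max_{\tilde{x}} l(\tilde{x}, y; C)$, the two expectations defining $G_U$ share the same integrand $f$, so
\begin{equation*}
G_U = \left\vert \iint \big( P_G(x, y) - P(x, y) \big) f(x, y) \text{ dxdy} \right\vert .
\end{equation*}

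The key observation is that the integrand vanishes wherever $P_G(x, y) = P(x, y)$, so the integral is effectively supported on the discrepancy set $\{(x, y) : P(x, y) \ne P_G(x, y)\}$. On exactly this set, the definition $B_1 = \max_{P(x, y) \ne P_G(x, y)} \max_{\tilde{x}} l(\tilde{x}, y; C)$ gives $\vert f(x, y) \vert \le B_1$. Pulling this bound out of the integral via the triangle inequality yields
\begin{equation*}
G_U \le B_1 \iint \vert P_G(x, y) - P(x, y) \vert \text{ dxdy} = 2 B_1 \, \text{TV}(P, P_G),
\end{equation*}
where the last equality is the standard identity $\text{TV}(P, P_G) = \frac{1}{2} \iint \vert P_G - P \vert \text{ dxdy}$.

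Then I would substitute the finite-data bound on $\text{TV}(P, P_G)$. Taking $Q = P_G$ in conclusion (2) of Theorem \ref{Theorem:GCD_finite}, with probability at least $(1 - \delta)^2$ we have $\text{TV}(P, P_G) \le \frac{1}{2\sqrt{2}} \big(\max_A \hat{\mathcal{L}}^{C,A}_{adv} - R^*_2\big)^{1/2} + B$. Multiplying through by $2 B_1$ delivers the claimed bound on the same high-probability event, which completes the argument.

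I expect no serious obstacle, since the argument is essentially a one-line change-of-measure estimate followed by a direct appeal to an already-established theorem. The only point requiring genuine care is the justification that $B_1$, defined as a maximum over the discrepancy set $\{P \ne P_G\}$ rather than over all of $\mathcal{X} \times \mathcal{Y}$, is a valid bound on $\vert f \vert$ over the effective support of the integral; this is precisely what allows the sharper constant $B_1$ (instead of a global $\sup_{(x,y)} f$) to appear in the statement, and it hinges on the integrand being zero off that set. I would also confirm that the high-probability event inherited from Theorem \ref{Theorem:GCD_finite}(2) is the sole source of randomness, so that the probability $(1 - \delta)^2$ transfers to the final bound unchanged.
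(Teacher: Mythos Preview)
Your proposal is correct and follows essentially the same route as the paper: rewrite $G_U$ as an integral against $P_G - P$, restrict attention to the discrepancy set to pull out $B_1$, obtain $G_U \le 2B_1\,\text{TV}(P,P_G)$, and then invoke conclusion (2) of Theorem~\ref{Theorem:GCD_finite} with $Q = P_G$ to finish with the stated probability. Your explicit justification of why the sharper constant $B_1$ (rather than a global supremum) suffices is exactly the point the paper glosses over with the phrase ``only $(x,y)$ such that $P_G(x,y)\ne P(x,y)$ contributes to the integral.''
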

\begin{proof}
According to the concept of expectation and the property of integral, 
\begin{equation*}
\begin{aligned}
G_U = & \left\vert \iint \left(P_G(x, y) - P(x, y) \right) \max_{\tilde{x}} l(\tilde{x},y;C)  \text{ dxdy} \right\vert \\
\leq & \iint \left\vert P_G(x, y) - P(x, y) \right\vert  \max_{\tilde{x}} l(\tilde{x},y;C) \text{ dxdy}.
\end{aligned}
\end{equation*} 
Noting that only $(x, y)$ such that $P_G(x, y) \ne P(x, y)$ contributes to the integral,
\begin{equation*}
\begin{aligned}
G_U \leq  2 B_1 \text{TV} (P,P_G),
\end{aligned}
\end{equation*} 
where $\text{TV} (P,P_G) = \frac{1}{2}\iint \left\vert P_G(x, y) - P(x, y) \right\vert \text{ dxdy}$. According to the conclusion (2) of Theorem \ref{Theorem:GCD_finite},
\begin{equation*}
\begin{aligned}
G_U \leq & 2 B_1 \bigg( B +  \frac{1}{2\sqrt{2}} \Big( \max_{A} \hat{\mathcal{L}}^{C,A}_{adv} - R^*_2 \Big)^{\frac{1}{2}} \bigg)
\end{aligned}
\end{equation*} 
holds with probability at least $(1-\delta)^2$. 
\end{proof}
\begin{remark} 
It is easy to see that the upper bound of $G_U$ is a monotonically decreasing function of $m$ and $n$. Therefore, Theorem \ref{Theorem:UAE_adversary_finite} shows that an increase in the number of labeled or unlabeled data points reduces $G_U$ and results in a stronger adversary of empirical UAEs. This allows PUAT to offer better comprehensive robustness.
\label{Remark:UAE_adversary_finite}
\end{remark}
When $P_G \neq P$, there may exist natural samples $x\in supp(P)$ but $x \notin supp(P_G)$, which leads to some RAEs that are not covered by the learned UAE distribution. To rectify this and further minimize the adversary gap of the empirical UAE, one solution is to train PUAT using both UAEs and RAEs with the following loss function
\begin{equation}
\min_{C, G}\max_{A, D} \big[ \mathcal{L}^{C}_{nat} + \lambda \mathcal{L}^{C,A}_{adv} + \gamma \mathcal{L}^{G,C,D}_{gan} +  \beta \mathcal{L}^{C}_{r} \big], 
\label{Eq:loss_new}
\end{equation}
where $\mathcal{L}^{C}_{r}$ is the adversarial loss on RAE and $\beta$ controls its weight. Specifically, $\mathcal{L}^{C}_{r}$ is defined as 
\begin{equation}
\begin{aligned}
\mathcal{L}^{C}_{r} = \mathbb{E}_{(x, y) \sim P} \big[-\log P_C(y|\hat{x}) \big], 
\end{aligned}
\label{Eq:rae_loss}
\end{equation}
where $\hat{x}$ is an RAE satisfying a norm-constraint, which is the result by perturbing a natural sample $x$, i.e. $\hat{x} = \argmax_{\hat{x}:\Vert \hat{x}-x \Vert_p \leq \epsilon} l(\hat{x},y;C)$.

\section{Experiments}
\label{sec:experiments}
The goal of experiments is to answer the following research questions:
\begin{itemize}
\item \textbf{RQ1} How does PUAT perform as compared to the state-of-the-art baselines in terms of the standard generalization and comprehensive adversarial robustness?

\item \textbf{RQ2} Does PUAT improve the tradeoff between standard generalization and adversarial robustness?

\item \textbf{RQ3} How do UAEs and unlabeled data influence the performance of PUAT?

\item \textbf{RQ4} How to visually show the superiority of PUAT?

\item \textbf{RQ5} How long is the training of PUAT?

\end{itemize}

\subsection{Experimental Setting}
\subsubsection{Datasets}
We conduct the experiments on Tiny ImageNet \cite{le2015tiny}, ImageNet32 \cite{chrabaszcz2017downsampled}, SVHN \cite{netzerreading}, CIFAR10 \cite{krizhevsky2009learning}, and CIFAR100 \cite{krizhevsky2009learning}, which are widely used benchmarks for evaluating adversarial training. Tiny ImageNet has 200 classes each of which consists of 500 images for training and 50 ones for testing. Following \cite{Li2022}, for Tiny ImageNet we select 10 classes from 200 classes and downscale their resolution to $32 \times 32$. And for ImageNet32 which has 1,000 classes and 1,281,167 images, we use the subset consisting of its first 10 classes. SVHN is a set of 73,257 and 26,032 digit images for training and testing respectively. CIFAR10 contains 50,000 training images and 10,000 testing images distributed across 10 classes, while CIFAR100 has 50,000 training images and 10,000 testing images over 100 classes. 

Each training set is randomly divided into two parts, labeled data and unlabeled data, where unlabeled data are built by removing the labels of the images. We will repeat the random division of the dataset three times, and report the average results with standard deviation. Table \ref{Tb:datasets} shows the dataset configurations following the widely used semi-supervised settings \cite{tarvainen2017mean, Li2022}. Besides, on each dataset, we randomly select 20 percent of the labeled training data as validation set for the tuning of hyper-parameters.


%
%

\begin{table}[t]
	\centering
	\caption{
	Configuration of the datasets. 
	}
	\label{Tb:datasets}
	\begin{tabular}{l|c|c|c|c}
	\toprule
	\multirow{3}{*}{Datasets}&\multicolumn{2}{c|}{training}&\multirow{3}{*}{testing}&\multirow{3}{*}{classes}\\
	\cmidrule{2-3}
	&{labeled}&{unlabeled}&\\
	\midrule  
	{Tiny ImageNet\cite{le2015tiny}}&{1,000}&{4,000}&{500}&{10}\\
        {ImageNet32\cite{le2015tiny}}&{1,000}&{11,850}&{500}&{10}\\
	{SVHN\cite{netzerreading}}&{1,000}&{72,257}&{26,032}&{10}\\
	{CIFAR10\cite{krizhevsky2009learning}}&{4,000}&{46,000}&{10,000}&{10}\\
	{CIFAR100\cite{krizhevsky2009learning}}&{10,000}&{40,000}&{10,000}&{100}\\
	\bottomrule
	\end{tabular}
\end{table}

\begin{table}[t]
	\centering
	\caption{
	Characteristics of the baseline methods and PUAT.
	}
	\label{Tb:baselines}
	\setlength{\tabcolsep}{0.17cm}
	\begin{tabular}{l|c|c|c|c}
	\toprule
	{Methods}&{RAE}&\makecell[c]{UAE}&{\makecell[c]{unlabeled data}}&{\makecell[c]{generated data}}\\
	\midrule
	{Regular}& & & &\\
	{TRADES \cite{zhang2019theoretically}}&{$\checkmark$}& & & \\
	{DMAT \cite{wang2023better}}&{$\checkmark$}& & &{$\checkmark$} \\
	{RST \cite{carmon2019unlabeled}}&{$\checkmark$}& & {$\checkmark$}&\\
	{PUAT}&{$\checkmark$}&{$\checkmark$} &{$\checkmark$}&{$\checkmark$}\\	
	\bottomrule
	\end{tabular}
\end{table}

\subsubsection{Baseline Methods}
\label{Sec:baselines}
To verify the superiority of PUAT, we compare it with the following baseline methods, whose characteristics are summarized in Table \ref{Tb:baselines}.
\begin {itemize}
\item \textbf{Regular} Regular method trains the target classifier using labaled natural data with cross-entropy loss, which plays the role of performance benchmark.

\item \textbf{TRADES} \cite{zhang2019theoretically} TRADES is an AT method based on RAEs that are generated by PGD based on labeled data, which can trade adversarial robustness off against standard generalizability by optimizing a regularized surrogate loss. 

\item\textbf{DMAT} \cite{wang2023better} DMAT performs an RAE-based AT on a training dataset augmented by samples generated by a class conditional elucidating diffusion model (EDM) \cite{karras2022elucidating}. 

\item\textbf{RST} \cite{carmon2019unlabeled} RST is also an AT method based on RAEs that are generated by PGD. Different from TRADES, RST first trains a classifier to predict labels for unlabeled data and then feeds these pseudo-labeled data together with labeled data into the sequel adversarial training algorithm. 

\end {itemize}

\subsubsection{Evaluation Protocol}
\label{Sec:evaluation}
We will use PUAT and the baseline methods to train the target classifier and compare their performances from the perspectives of the target classifier's standard generalizability and adversarial robustness. The generalizability is evaluated in terms of the target classifier's \textbf{natural accuracy} on testing natural examples, while the adversarial robustness is evaluated in terms of the target classifier's \textbf{robust accuracy} on the testing adversarial examples which are generated by various attack methods. 
In particular, for the robustness against RAE, we choose PGD \cite{madry2017towards} and Auto Attack (AA) \cite{croce2020reliable} to generate testing RAEs, which are the attacking methods widely adopted by the existing works. 
For the robustness against UAE which beyond RAE, we also choose GPGD \cite{xiao2022understanding} and USong \cite{song2018constructing} as the attacking methods, both of which are GAN-based methods and dedicated to the generation of UAE. 
The hyper-parameters of the attacking methods are shown in Appendix B, where $\epsilon$ is the perturbation budget and the bigger it, the stronger the attack. 
To evaluate the performance of PUAT and the baseline methods, we first invoke the attacking methods to generate testing AEs that can cause worst-case loss of the target classifier well trained by PUAT or a baseline method, and then check the classifier's accuracy on those testing AEs.

\subsubsection{Hyper-parameter Setting and Implement Details}
\label{Sec:implement}
All the hyper-parameters and the architecture of PUAT are shown in Appendix B, where the generator $G$, discriminator $D$ are implemented with respect to those adopted in \cite{li2017triple}, and $A$ is implemented as an MLP plus two residual blocks. 
During the training and testing, we employ the Wide ResNet (WRN-28-10) \cite{zagoruyko2016wide} as the target classifier $C$. 
We use SGD as the optimizer with Nesterov momentum \cite{nesterov1983method} and cosine annealed cyclic learning rate schedule \cite{smith2019super}, where learning-rate, weight-decay, and momentum are set to 0.2, $5\text{E-}4$, and 0.9, respectively. During training, a batch consists of 256 labeled images and 256 unlabeled ones, and early stopping is adopted as default option. 

To avoid gradient disappearance and mode collapse for the training of the G-C-D GAN, spectral norm \cite{miyato2018spectral} together with hinge GAN loss \cite{lim2017geometric, kavalerov2021multi} is used for $D$ to stabilize the training. When labeled data is severely sparse, it is difficult to train $G$ to precisely capture the real distribution $P(x, y)$. To overcome this issue, following the idea of \cite{tarvainen2017mean}, during the training we augment $\mathcal{D}_l$ with the pseudo-labeled pairs $\{ (x_c, \hat{y}_c) \}$, where $x_c \in \mathcal{D}_c$ and $\hat{y}_c = \argmax_{y_c \in \mathcal{Y}} P_C(y=y_c|x_c)$. Note that here the pseudo label $\hat{y}_c$ is a hard label represented by a one-hot vector, while in Equation (\ref{Eq:LC}) the pseudo label $y_c$ is a soft label represented by a softmax vector. This will not introduce conflict to the training of C-D GAN because (1) if $\hat{y}_c=y_c$, the loss incurred by $\{ (x_c, \hat{y}_c) \}$ and the loss incurred by $\{ (x_c, y_c) \}$ will cancel each other out; (2) if $\hat{y}_c \ne y_c$, $D$ will force $C$ to assign $\hat{y}_c$ a higher probability, which is similar to self-training on $C$.

\begin{table*}[t]
	\renewcommand{\arraystretch}{}
	\centering
	\caption{
	Performance on \textbf{Tiny ImageNet}.
	}
	\label{Tb:white_tiny}
	\setlength{\tabcolsep}{0.15cm}
	\begin{tabular}{l|c|c|c|c|c|c|c|c}
	\toprule
	\multirow{3}{*}{Methods}&\multirow{3}{*}{\makecell[c]{Natural \\Accuracy}}&\multicolumn{4}{c|}{Robust Accuracy to RAE}&\multicolumn{3}{c}{Robust Accuracy to UAE}\\
	\cmidrule{3-9}
	& &{PGD-2/255}&{PGD-4/255}&{PGD-8/255}&{AA}&{GPGD-0.01}&{GPGD-0.1}&{USong}\\
	\midrule
	{Regular}&{$65.93\pm0.25$}&{$22.13\pm2.76$}&{$2.33\pm0.52$}&{$0.0\pm0.0$}&{$0.0\pm0.0$}&{$33.27\pm0.84$}&{$14.60\pm0.75$}&{$14.40\pm1.56$}\\
	{TRADES}&{$58.33\pm1.32$}&{$51.20\pm0.28$}&{$44.53\pm0.77$}&{$31.47\pm2.31$}&{$29.47\pm1.61$}&{$40.47\pm0.66$}&{$20.27\pm0.90$}&{$43.53\pm2.29$}\\
	{DMAT}&{$50.90\pm3.10$}&{$45.00\pm2.80$}&{$37.90\pm2.90$}&{$26.60\pm1.60$}&{$22.00\pm1.60$}&{$37.70\pm3.30$}&{$22.70\pm0.90$}&{$43.30\pm2.30$}\\
	{RST}&{$43.87\pm3.17$}&{$38.80\pm2.14$}&{$33.20\pm1.34$}&{$24.00\pm1.51$}&{$21.60\pm2.73$}&{$30.93\pm3.47$}&{$18.13\pm2.32$}&{$35.60\pm2.26$}\\
	\midrule
	{PUAT}&\bm{$69.00\pm0.40$}&\bm{$62.40\pm0.20$}&\bm{$55.60\pm0.20$}&\bm{$38.60\pm0.40$}&\bm{$37.40\pm1.20$}&\bm{$44.90\pm2.10$}&\bm{$27.10\pm0.50$}&\bm{$48.80\pm2.20$}\\
	\bottomrule
	\end{tabular}
\end{table*}

\begin{table*}[t]
	\renewcommand{\arraystretch}{}
	\centering
	\caption{
	Performance on \textbf{ImageNet32}.
	}
	\label{Tb:white_imagenet32}
	\setlength{\tabcolsep}{0.15cm}
	\begin{tabular}{l|c|c|c|c|c|c|c|c}
	\toprule
	\multirow{3}{*}{Methods}&\multirow{3}{*}{\makecell[c]{Natural \\Accuracy}}&\multicolumn{4}{c|}{Robust Accuracy to RAE}&\multicolumn{3}{c}{Robust Accuracy to UAE}\\
	\cmidrule{3-9}
	& &{PGD-2/255}&{PGD-4/255}&{PGD-8/255}&{AA}&{GPGD-0.01}&{GPGD-0.1}&{USong}\\
	\midrule
	{Regular}&\bm{$53.13\pm1.33$}&{$18.87\pm1.65$}&{$6.87\pm0.77$}&{$1.40\pm0.28$}&{$1.33\pm0.38$}&{$34.53\pm2.45$}&{$22.20\pm1.14$}&{$24.13\pm2.31$}\\
	{TRADES}&{$32.80\pm1.28$}&{$28.53\pm1.24$}&{$23.53\pm2.23$}&{$17.47\pm2.78$}&{$15.60\pm2.27$}&{$32.13\pm0.69$}&{$20.53\pm1.05$}&{$30.47\pm0.98$}\\
	{RST}&{$48.07\pm2.81$}&{$41.27\pm1.75$}&{$35.40\pm1.47$}&{$26.47\pm0.66$}&{$22.60\pm1.28$}&{$39.80\pm1.25$}&{$23.07\pm0.68$}&{$43.80\pm1.89$}\\
	\midrule
	{PUAT}&{$51.20\pm0.71$}&\bm{$45.33\pm0.50$}&\bm{$39.87\pm0.25$}&\bm{$27.40\pm0.33$}&\bm{$25.93\pm0.41$}&\bm{$41.47\pm1.11$}&\bm{$25.33\pm1.79$}&\bm{$45.00\pm1.86$}\\
	\bottomrule
	\end{tabular}
\end{table*}

\begin{table*}[t]
	\renewcommand{\arraystretch}{}
	\centering
	\caption{
	Performance on \textbf{SVHN}.
	}
	\label{Tb:white_svhn}
	\setlength{\tabcolsep}{0.15cm}
	\begin{tabular}{l|c|c|c|c|c|c|c|c}
	\toprule
	\multirow{3}{*}{Methods}&\multirow{3}{*}{\makecell[c]{Natural \\Accuracy}}&\multicolumn{4}{c|}{Robust Accuracy to RAE}&\multicolumn{3}{c}{Robust Accuracy to UAE}\\
	\cmidrule{3-9}
	& &{PGD-2/255}&{PGD-4/255}&{PGD-8/255}&{AA}&{GPGD-0.01}&{GPGD-0.1}&{USong}\\
	\midrule
	{Regular}&{$73.76\pm0.63$}&{$4.73\pm2.35$}&{$0.15\pm0.11$}&{$0.0\pm0.0$}&{$0.0\pm0.0$}&{$78.27\pm1.11$}&{$48.42\pm2.40$}&{$70.10\pm3.01$}\\
	{TRADES}&{$58.35\pm2.14$}&{$44.67\pm1.71$}&{$32.44\pm1.12$}&{$15.71\pm0.66$}&{$13.31\pm0.58$}&{$59.27\pm2.88$}&{$32.64\pm1.24$}&{$61.73\pm1.63$}\\
	{DMAT}&{$83.65\pm0.40$}&{$80.71\pm1.41$}&{$74.49\pm1.57$}&{$55.41\pm1.71$}&{$47.26\pm1.62$}&{$81.40\pm1.27$}&{$58.90\pm2.39$}&{$93.91\pm2.11$}\\
	{RST}&{$85.18\pm0.11$}&{$78.25\pm0.19$}&{$69.83\pm0.11$}&{$50.60\pm0.12$}&{$43.74\pm0.23$}&{$86.03\pm1.76$}&{$66.25\pm1.49$}&{$97.27\pm0.21$}\\
	\midrule
	{PUAT}&\bm{$92.27\pm1.04$}&\bm{$87.79\pm1.63$}&\bm{$80.31\pm1.56$}&\bm{$59.19\pm0.12$}&\bm{$53.37\pm0.00$}&\bm{$88.22\pm0.54$}&\bm{$68.03\pm0.48$}&\bm{$98.75\pm0.08$}\\
	\bottomrule
	\end{tabular}
\end{table*}

\begin{table*}[t]
	\renewcommand{\arraystretch}{}
	\centering
	\caption{
	Performance on \textbf{CIFAR10}.
	}
	\label{Tb:white_cifar10}
	\setlength{\tabcolsep}{0.15cm}
	\begin{tabular}{l|c|c|c|c|c|c|c|c}
	\toprule
	\multirow{3}{*}{Methods}&\multirow{3}{*}{\makecell[c]{Natural \\Accuracy}}&\multicolumn{4}{c|}{Robust Accuracy to RAE}&\multicolumn{3}{c}{Robust Accuracy to UAE}\\
	\cmidrule{3-9}
	& &{PGD-2/255}&{PGD-4/255}&{PGD-8/255}&{AA}&{GPGD-0.01}&{GPGD-0.1}&{USong}\\
	\midrule
	{Regular}&{$80.21\pm0.50$}&{$2.41\pm0.43$}&{$0.02\pm0.0$}&{$0.0\pm0.0$}&{$0.0\pm0.0$}&{$65.87\pm1.54$}&{$34.99\pm2.14$}&{$50.71\pm2.16$}\\
	{TRADES}&{$55.27\pm0.70$}&{$48.65\pm0.49$}&{$42.27\pm0.44$}&{$29.97\pm0.27$}&{$26.77\pm0.64$}&{$63.36\pm1.87$}&{$42.70\pm1.24$}&{$64.64\pm0.91$}\\
	{DMAT}&{$56.57\pm0.90$}&{$49.21\pm0.28$}&{$42.20\pm0.26$}&{$28.69\pm0.25$}&{$23.32\pm0.29$}&{$74.28\pm1.86$}&{$54.16\pm2.10$}&{$81.40\pm0.20$}\\
	{RST}&{$71.80\pm0.58$}&{$65.82\pm0.55$}&{$58.93\pm0.37$}&{$44.73\pm0.17$}&{$41.81\pm1.94$}&{$83.63\pm0.48$}&\bm{$64.99\pm0.43$}&{$89.01\pm0.64$}\\
	\midrule
	{PUAT}&\bm{$83.02\pm0.36$}&\bm{$76.09\pm0.26$}&\bm{$66.76\pm0.21$}&\bm{$45.10\pm0.24$}&\bm{$43.74\pm0.12$}&\bm{$86.78\pm0.29$}&{$64.91\pm0.64$}&\bm{$92.55\pm0.31$}\\
	\bottomrule
	\end{tabular}
\end{table*}

\begin{table*}[!t]
	\renewcommand{\arraystretch}{}
	\centering
	\caption{
	Performance on \textbf{CIFAR100}.
	}
	\label{Tb:white_cifar100}
	\setlength{\tabcolsep}{0.15cm}
	\begin{tabular}{l|c|c|c|c|c|c|c|c}
	\toprule
	\multirow{3}{*}{Methods}&\multirow{3}{*}{\makecell[c]{Natural \\Accuracy}}&\multicolumn{4}{c|}{Robust Accuracy to RAE}&\multicolumn{3}{c}{Robust Accuracy to UAE}\\
	\cmidrule{3-9}
	& &{PGD-2/255}&{PGD-4/255}&{PGD-8/255}&{AA}&{GPGD-0.01}&{GPGD-0.1}&{USong}\\
	\midrule
	{Regular}&\bm{$59.02\pm0.31$}&{$12.14\pm0.44$}&{$1.04\pm0.02$}&{$0.02\pm0.0$}&{$0.0\pm0.0$}&{$25.28\pm0.29$}&{$10.59\pm0.31$}&{$26.59\pm0.62$}\\
	{TRADES}&{$34.23\pm0.15$}&{$28.10\pm0.12$}&{$22.46\pm0.14$}&{$14.40\pm0.01$}&{$11.99\pm0.18$}&{$20.45\pm0.19$}&{$8.43\pm0.30$}&{$24.56\pm0.27$}\\
	{DMAT}&{$30.81\pm0.77$}&{$22.74\pm0.43$}&{$15.20\pm0.84$}&{$5.88\pm0.75$}&{$2.68\pm0.73$}&{$21.92\pm1.50$}&{$10.88\pm0.54$}&{$31.81\pm2.19$}\\
	{RST}&{$50.34\pm0.52$}&{$40.30\pm0.52$}&{$31.32\pm0.50$}&{$17.36\pm0.35$}&{$13.36\pm0.48$}&{$32.00\pm0.15$}&\bm{$16.26\pm0.18$}&\bm{$41.82\pm0.21$}\\
	\midrule
	{PUAT}&{$51.95\pm0.45$}&\bm{$42.70\pm0.43$}&\bm{$33.54\pm0.01$}&\bm{$18.45\pm0.38$}&\bm{$17.16\pm0.46$}&\bm{$33.00\pm0.10$}&{$16.21\pm0.16$}&{$38.32\pm0.43$}\\
	\bottomrule
	\end{tabular}
\end{table*}

\subsection{Performance (RQ1)}
\label{Sec:performance}
Tables \ref{Tb:white_tiny}, \ref{Tb:white_imagenet32}, \ref{Tb:white_svhn}, \ref{Tb:white_cifar10}, and \ref{Tb:white_cifar100} show the performances of PUAT and the baseline methods on all the datasets, from which we can make the following observations:

\begin{itemize}
\item On all datasets, PUAT surpasses the baseline AT methods in robust accuracy under all attacks regardless of the attacking strength ($\epsilon$), with the exception of GPGD-0.1 on CIFAR10 and CIFAR100, and the exception of USong-0.01 on CIFAR100. At the same time, PUAT achieves the highest natural accuracy compared with the baseline AT methods. Therefore, PUAT is the only AT approach that simultaneously increases standard generalization and adversarial robustness against all attacks on all datasets, due to its ability to align the AE distribution and natural data distribution, which results in a consistent generalization on AEs and natural samples. This result demonstrates PUAT’s capacity to improve the tradeoff between the standard generalization and the adversarial robustness.

\item PUAT beats all all defensive methods against RAE attacks in all cases and UAE attacks in most cases. This result shows how effective PUAT is at providing comprehensive adversarial robustness against both RAE and UAE attacks. We contend that the comprehensive adversarial robustness is made possible by our innovative notion to generalize the UAE concept to imperceptibly perturbed natural examples, whether observable or unobservable, which makes PUAT able to cover RAEs in a unified way.

\item The robustness of all methods decreases as the attack strength ($\epsilon$) grows up. However, under all the attack strengths, PUAT consistently outperforms the baseline methods in the robustness against RAEs, and the robustness against UAEs in most cases. This suggests that AT on UAEs provides better adversarial robustness because the adversary of UAEs can be improved by utilizing semi-supervised data, as told by Theorem \ref{Theorem:UAE_adversary_finite}.

\item We also note that PUAT outperforms DMAT which is an AT method also based on generative model. This is because DMAT only uses labeled data, while PUAT can benefit from both labeled and unlabeled data. In fact, in most cases on CIFAR100 where labeled data per each class is extemely sparse,  DMAT performs poorly than the other baselines, which shows DMAT is unsuitable for sparse labeled data.

\item We observe that PUAT is second to RST in robust accuracy under attacks USong-0.01 and GPGD-0.1 on CIFAR100. We argue that this is partly due to the fact that it is hard to train a triple-GAN on a dataset with too many classes like CIFAR100 which contains 100 classes. One possible solution is to improve the model complexity of the G-C-D GAN, which is worth exploring in future. 

\end{itemize}

\subsection{
Improvement of The Tradeoff (RQ2)
}
\label{sec:tradeoff}
\begin{figure*}[t]
\centering
		\subfigure[
		PGD-8/255
		]{
		\includegraphics[scale=0.29]{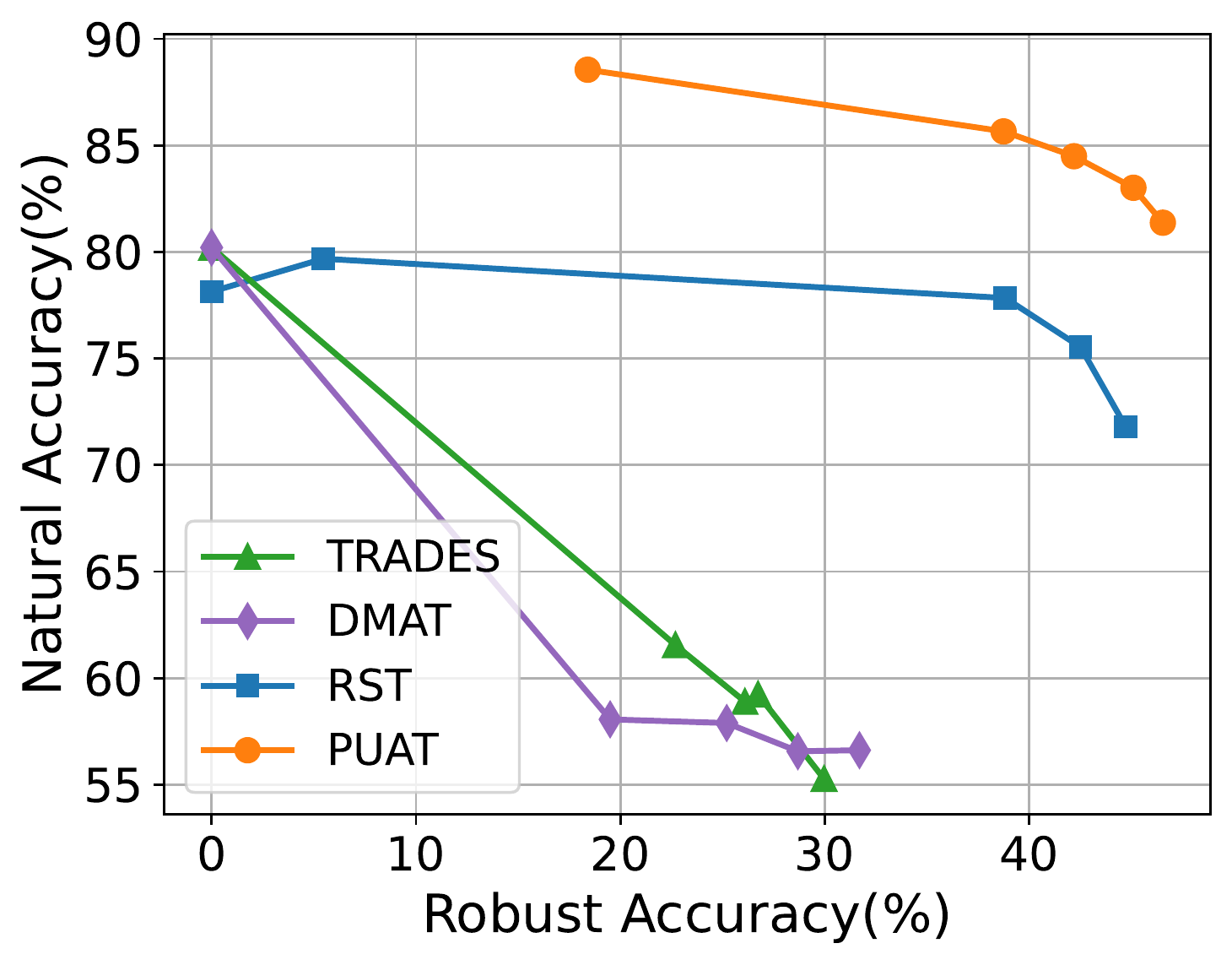} 
		\label{fig:hyper_pgd}}
		\subfigure[
		AA
		]{
		\includegraphics[scale=0.29]{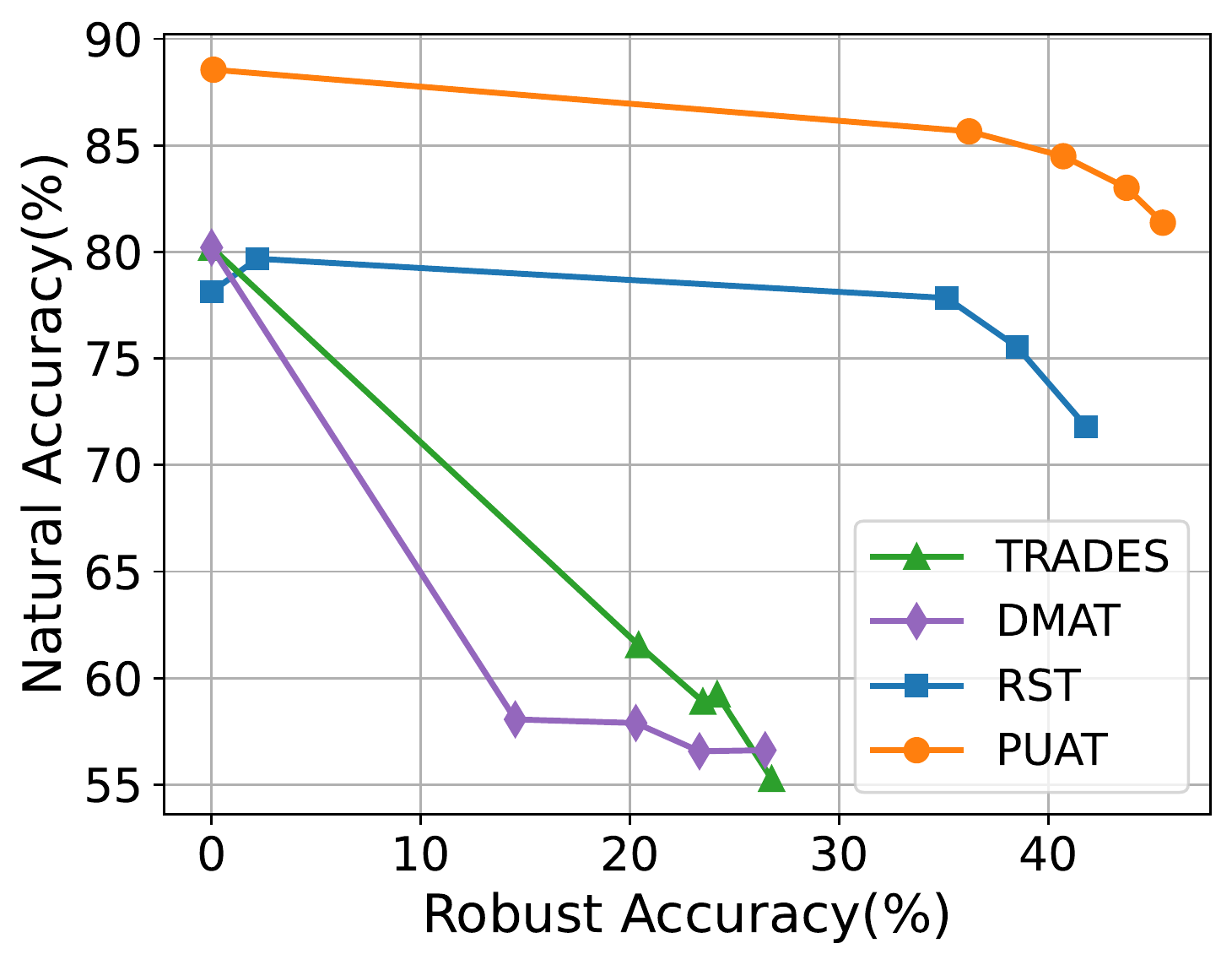}
		 \label{fig:hyper_aa}}
		\subfigure[
		GPGD-0.1
		]{
		\includegraphics[scale=0.29]{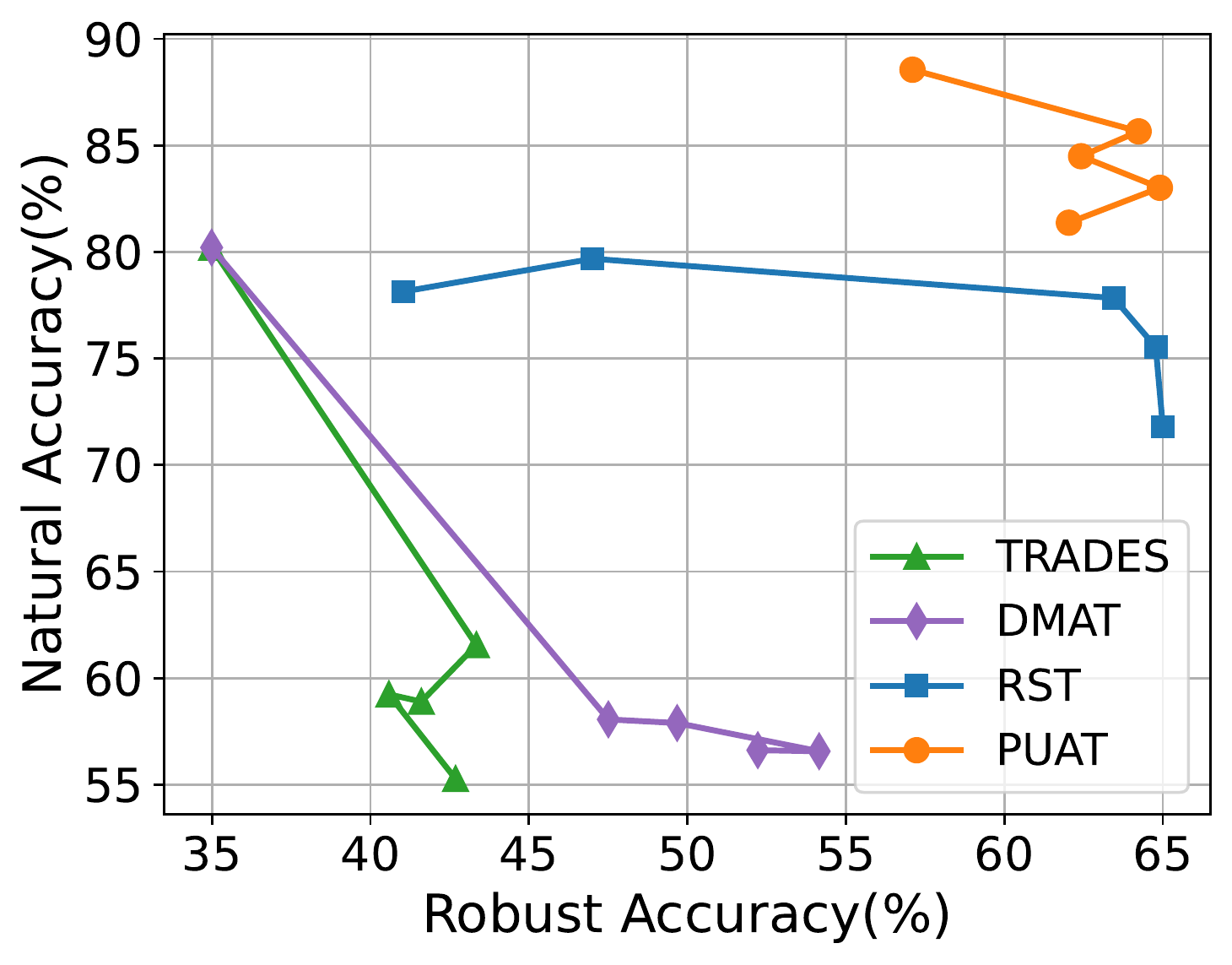} 
		\label{fig:hyper_gpgd}}
		\subfigure[
		USong
		]{
		\includegraphics[scale=0.29]{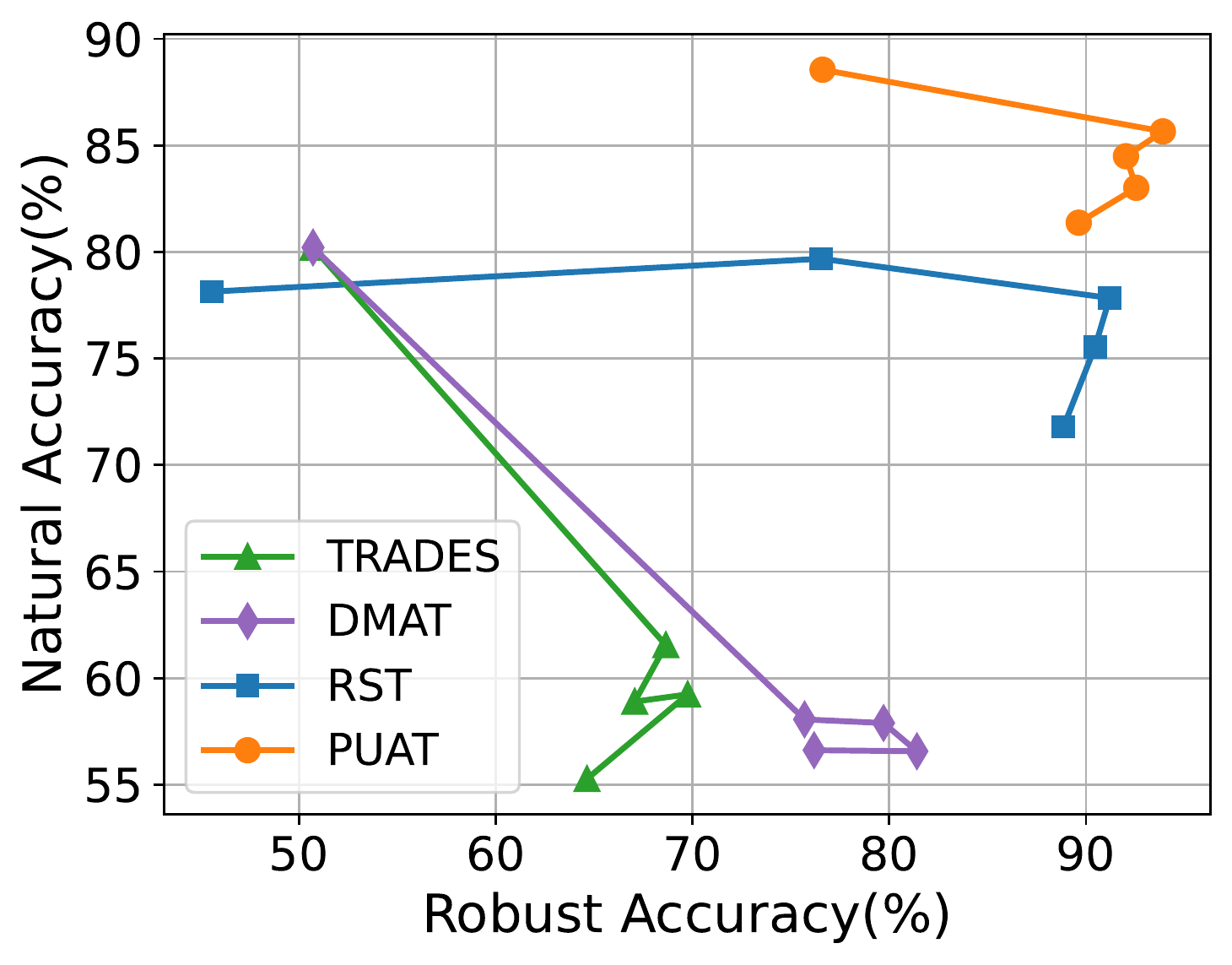} 
		\label{fig:hyper_usong}}
\caption{
Pareto front curves of natural accuracy vs. robust accuracy on CIFAR10 under different adversarial attacks including (a) PGD-8/255, (b) AA, (c) GPGD-0.1, and (d) USong. Each curve consists of points generated by setting $\beta$ with various values. 
}
\label{fig:RQ2_cifar10}
\end{figure*}
Now we examine if PUAT can improve the tradeoff between standard generalization and adversarial robustness. For this purpose, in Figure \ref{fig:RQ2_cifar10} we show the Pareto front curves of natural accuracy vs. robust accuracy under adversarial attacks PGD-8/255, AA, GPGD-0.1 and USong. We can see that under all adversarial attacks, the points of PUAT's curve always lies to the upper right of the curves of the baseline methods, which implies that PUAT can achieve a better natural accuracy as well as a better robust accuracy than the baseline methods, or in other words, PUAT can improve the tradeoff between standard generalization and adversarial robustness. This is because (1) By using G-C-D GAN to align the generator distribution $P_G(x, y)$, the target classifier distribution $P_C(x, y)$ and the natural data distribution $P(x, y)$, PUAT is able to reconcile the robust generalization on AEs and the standard generalization on natural data, which is asserted by Theorem \ref{Theorem:GCD_finite}; and (2) PUAT conducts AT on UAEs which results in greater robustness since UAE adversary will be improved by both labeled and unlabeled data, which is asserted by Theorem \ref{Theorem:UAE_adversary_finite}. Additionally, we also observe that RST achieves a better tradeoff than TRADES and DMAT, which is partly because RST also utilizes unlabeled samples to augment the data used for AT.  

\subsection{Ablation Study (RQ3)}
\begin{figure}[t]
\centering
\subfigure[
effect of UAE
]{
\includegraphics[scale=0.273]{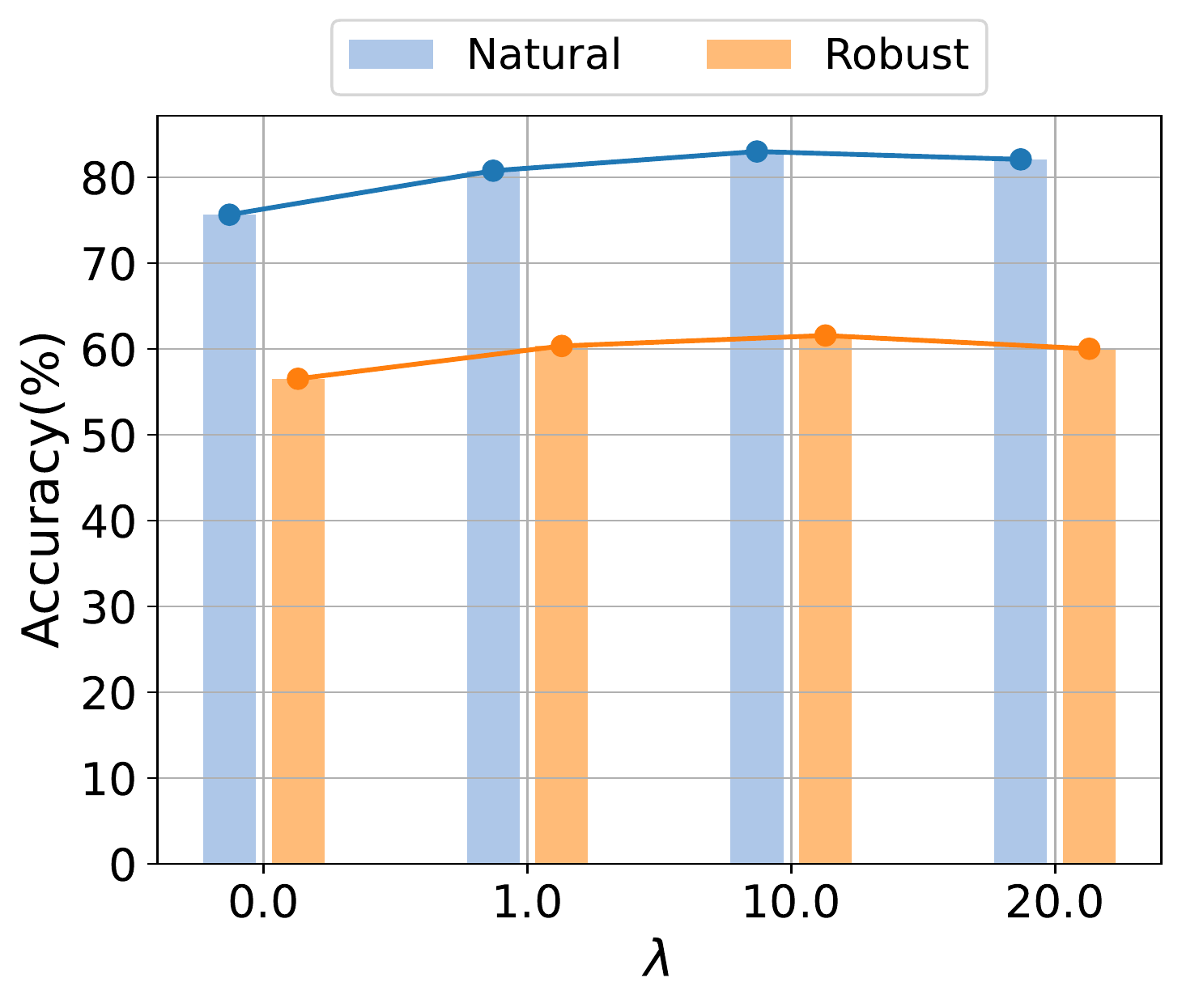} 
\label{fig:uae}}
\subfigure[
effect of (un)labeled data
]{
\includegraphics[scale=0.25]{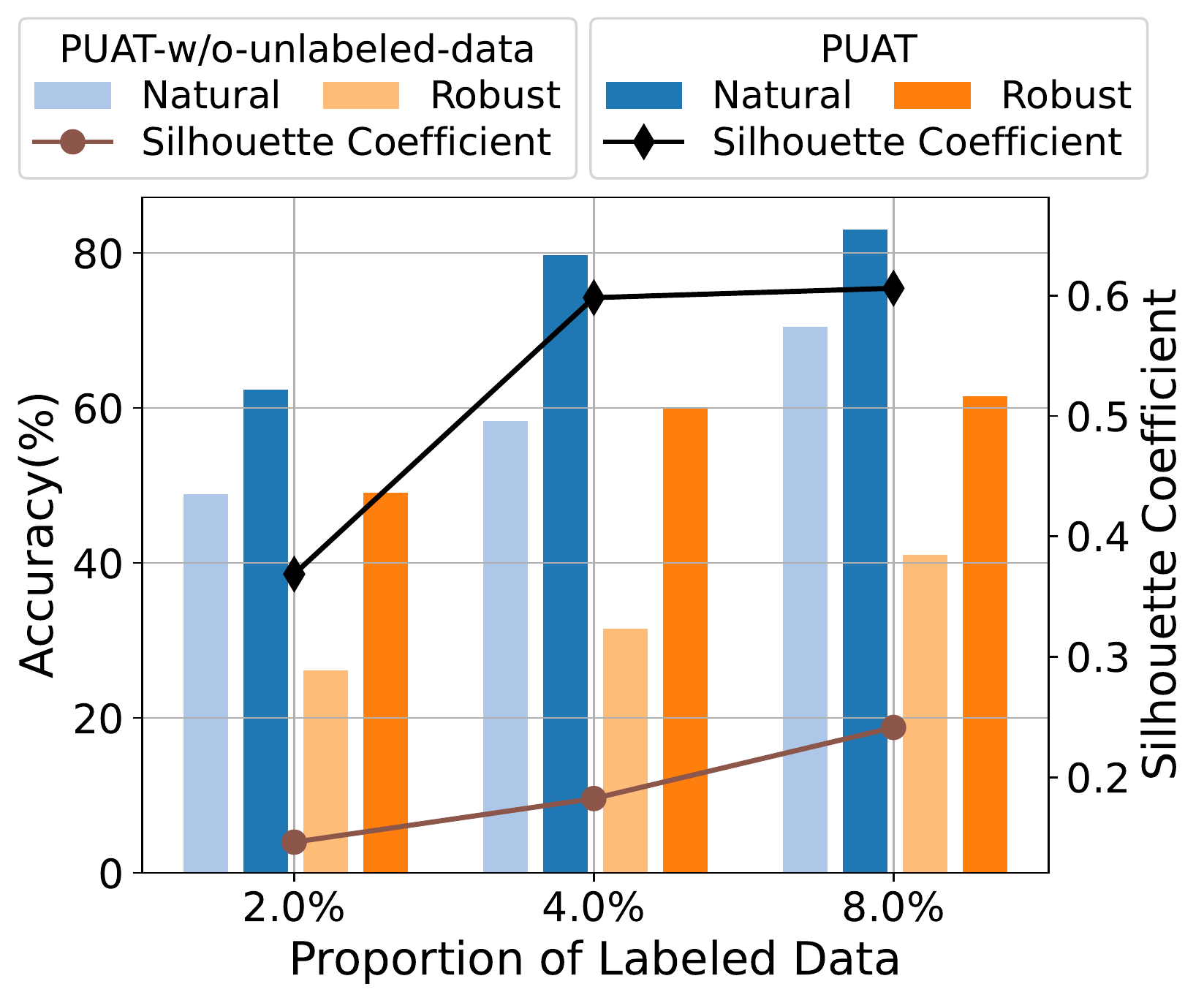} 
\label{fig:unlabel}}
\caption{
Effect of UAE and (un)labeled data. Subfigure (a) shows the natural accuracy and robust accuracy of PUAT under different $\lambda$ on CIFAR10. Subfigure (b) shows the natural accuracy and robust accuracy of PUAT and PUAT-w/o-unlabeled-data, and the Silhouette Coefficients, over different amount of labeled data on CIFAR10, where PUAT-w/o-unlabeled-data is a variant of PUAT without using unlabeled data during the training. The robust accuracy in both (a) and (b) is the average of the robust accuracies on PGD-8/255, AA, GPGD-0.1 and USong, which reflects the comprehensive adversarial robustness. 
}
\label{fig:ablation_cifar10}
\end{figure}

\begin{figure}[t]
\centering
\subfigure[
Robust Accuracy
]{
\includegraphics[scale=0.265]{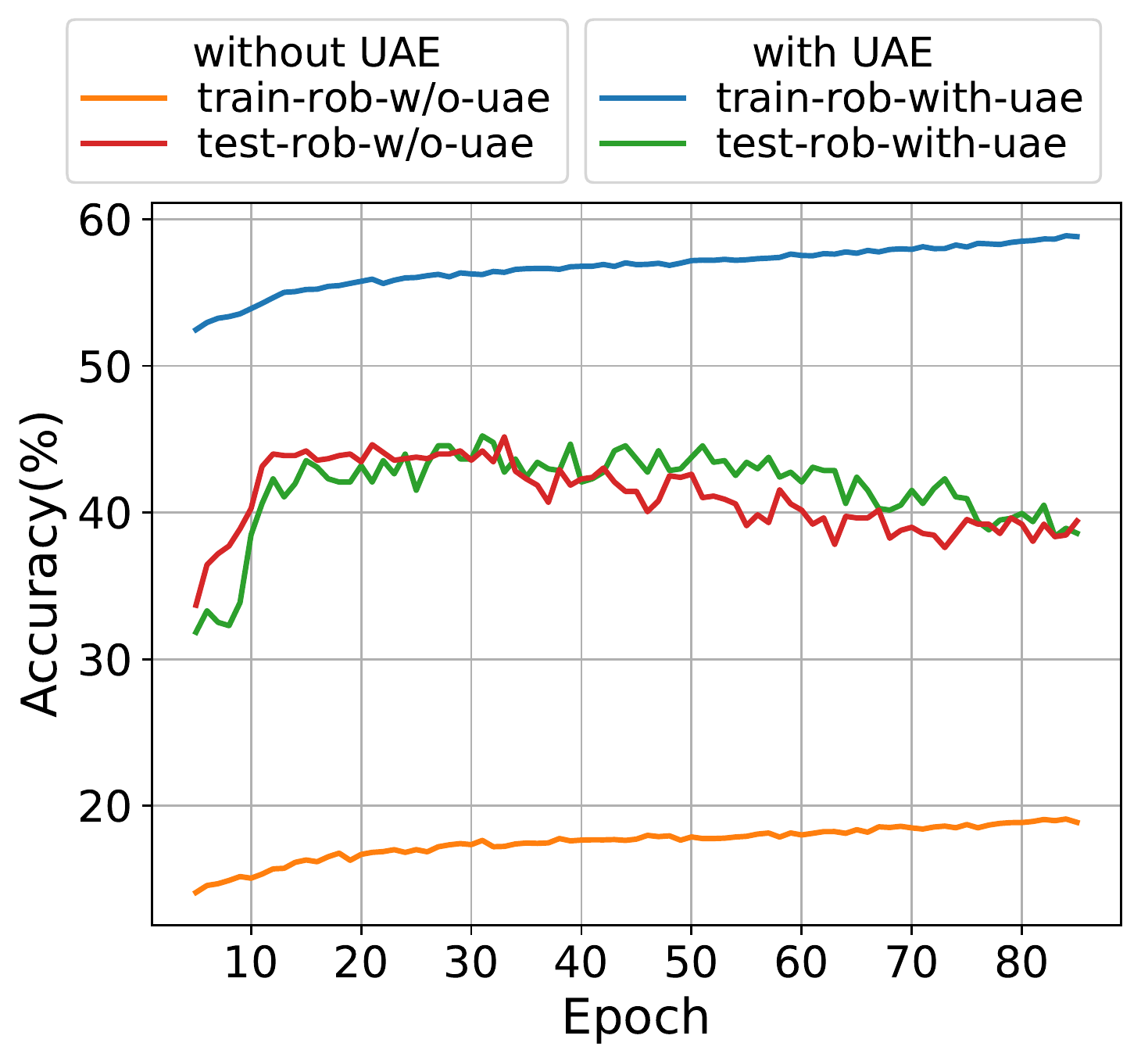}
 \label{fig:learning_rob}}
\subfigure[
Natural Accuracy
]{
\includegraphics[scale=0.265]{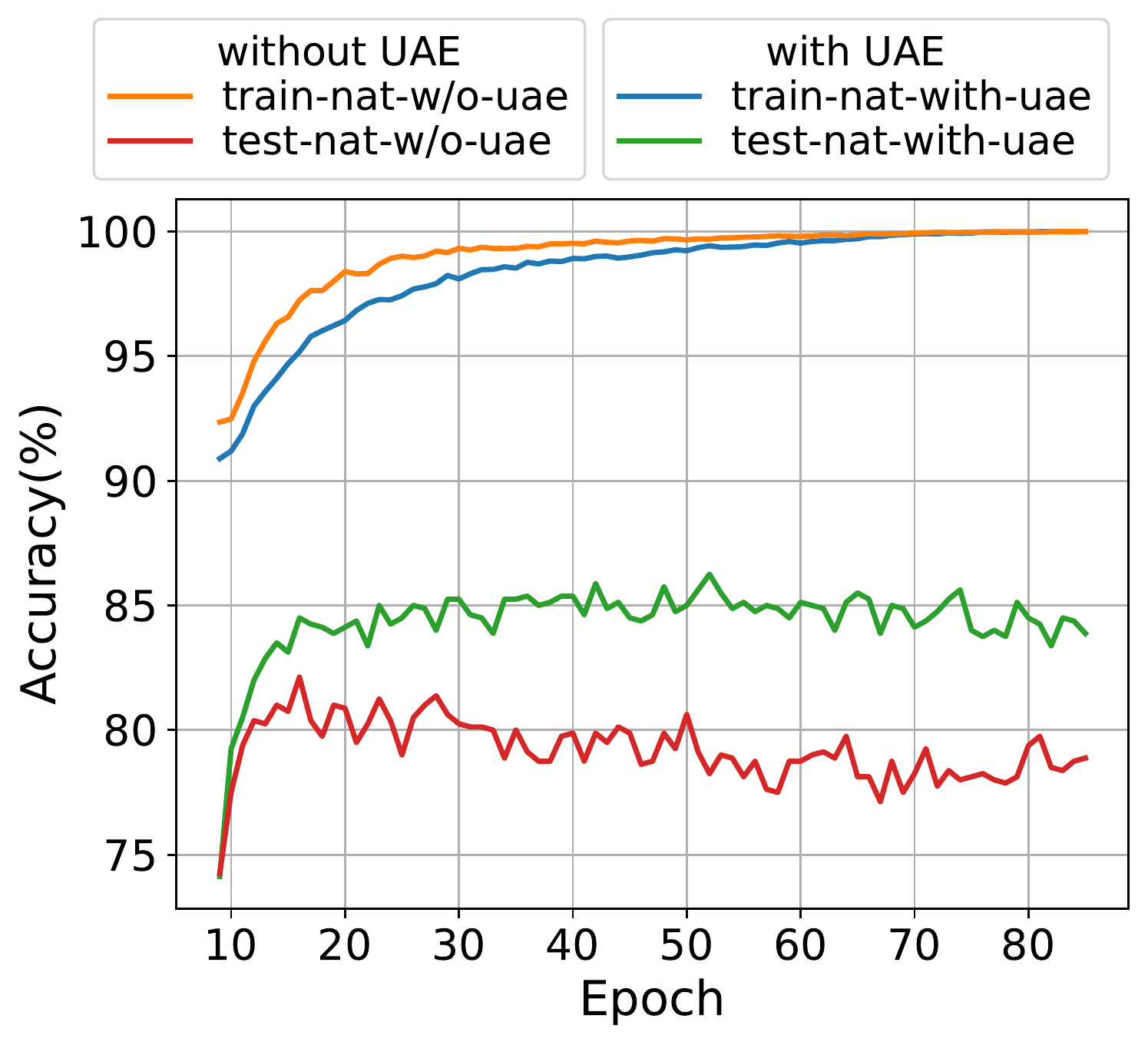} 
\label{fig:learning_acc}}
\caption{
Learning curves of PUAT with UAEs ($\lambda=10.0$) and without UAEs ($\lambda=0.0$). In both Subfigures (a) and (b), blue and green curves are the robust/natural accuracy curve of training with UAEs and its corresponding testing robust/natural accuracy curve, respectively, while orange and red curves are the robust/natural accuracy curve of training withouth UAEs and its corresponding testing robust/natural accuracy curve, respectively.
}
\label{fig:learning_curve}
\end{figure}

\begin{figure*}[t]
\centering
\subfigure[
Alignment on CIFAR10
]{
\includegraphics[scale=0.18]{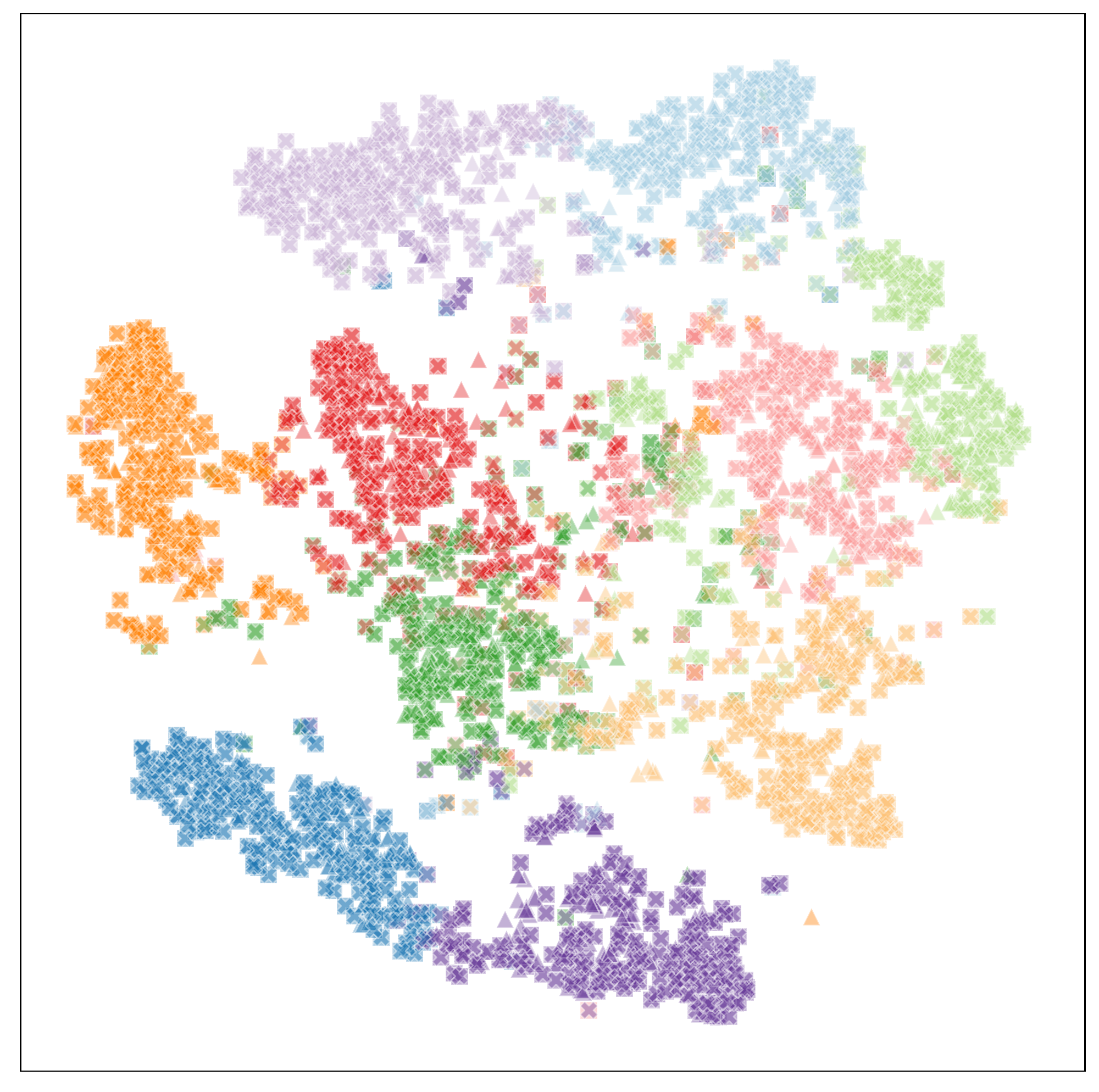} 
\label{fig:align_1_cifar10}}
\subfigure[
Misalignment on CIFAR10
]{
\includegraphics[scale=0.18]{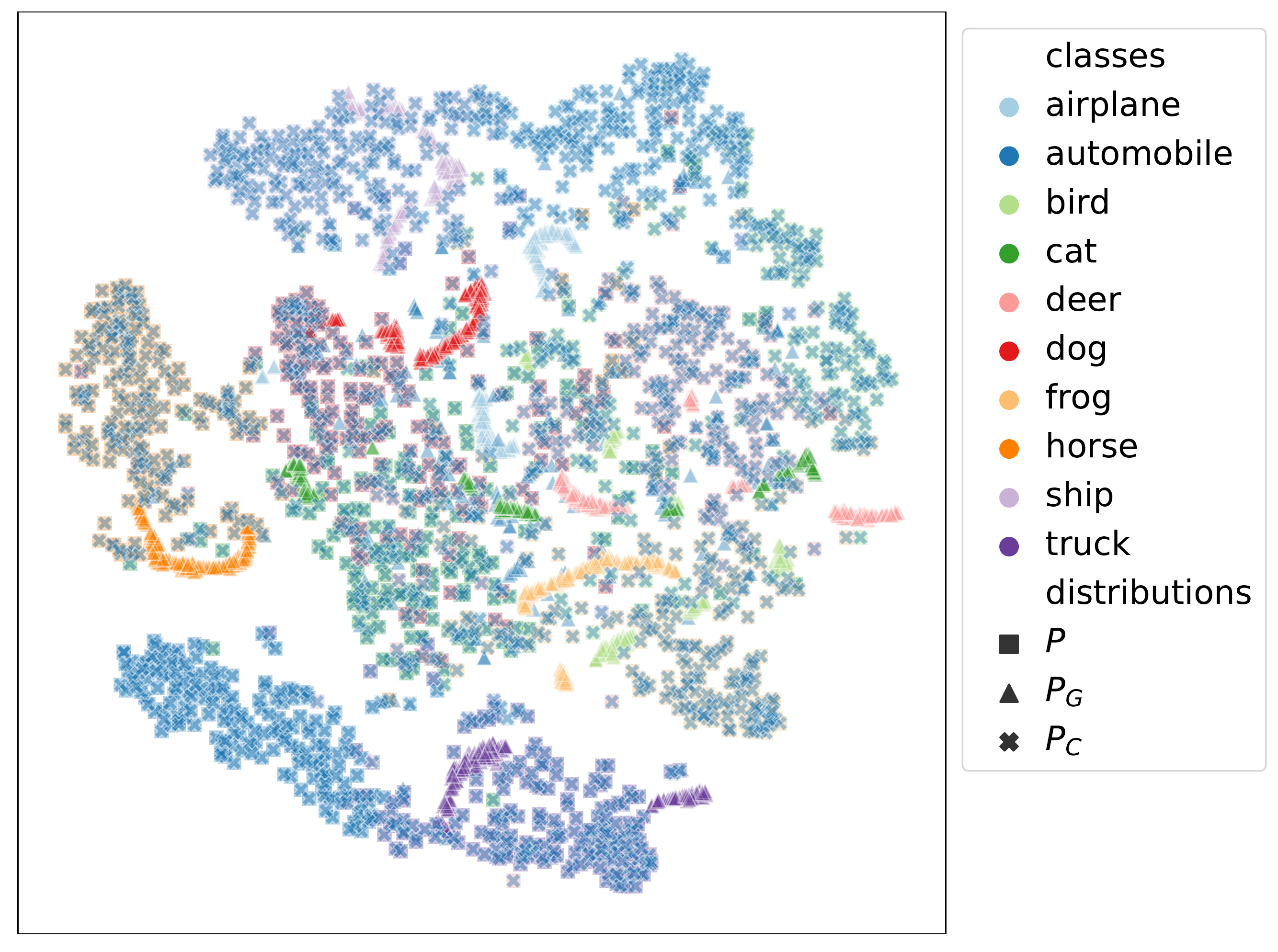} 
\label{fig:align_0_cifar10}}
\subfigure[
Alignment on SVHN
]{
\includegraphics[scale=0.18]{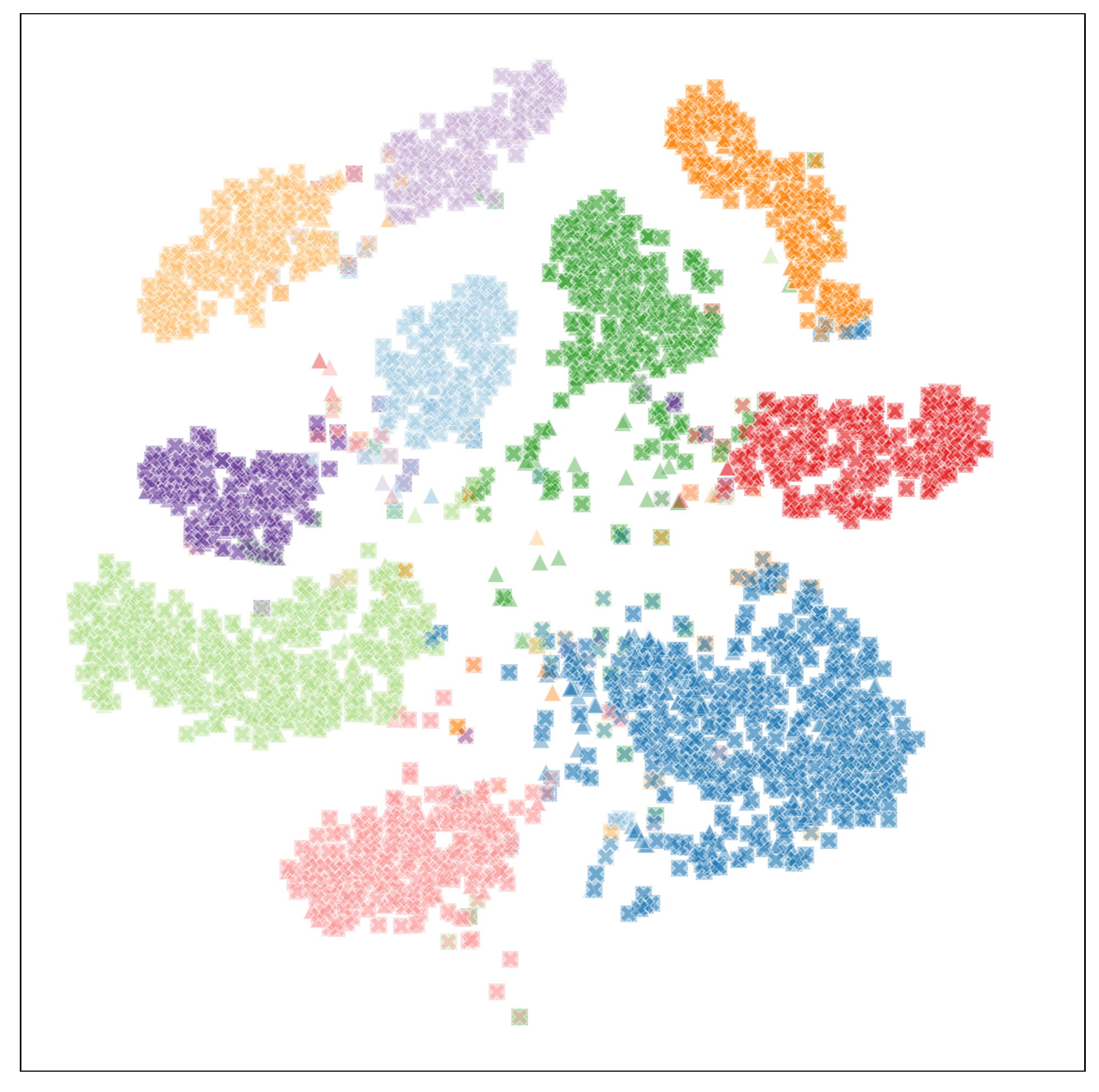} 
\label{fig:align_1_svhn}}
\subfigure[
Misalignment on SVHN
]{
\includegraphics[scale=0.18]{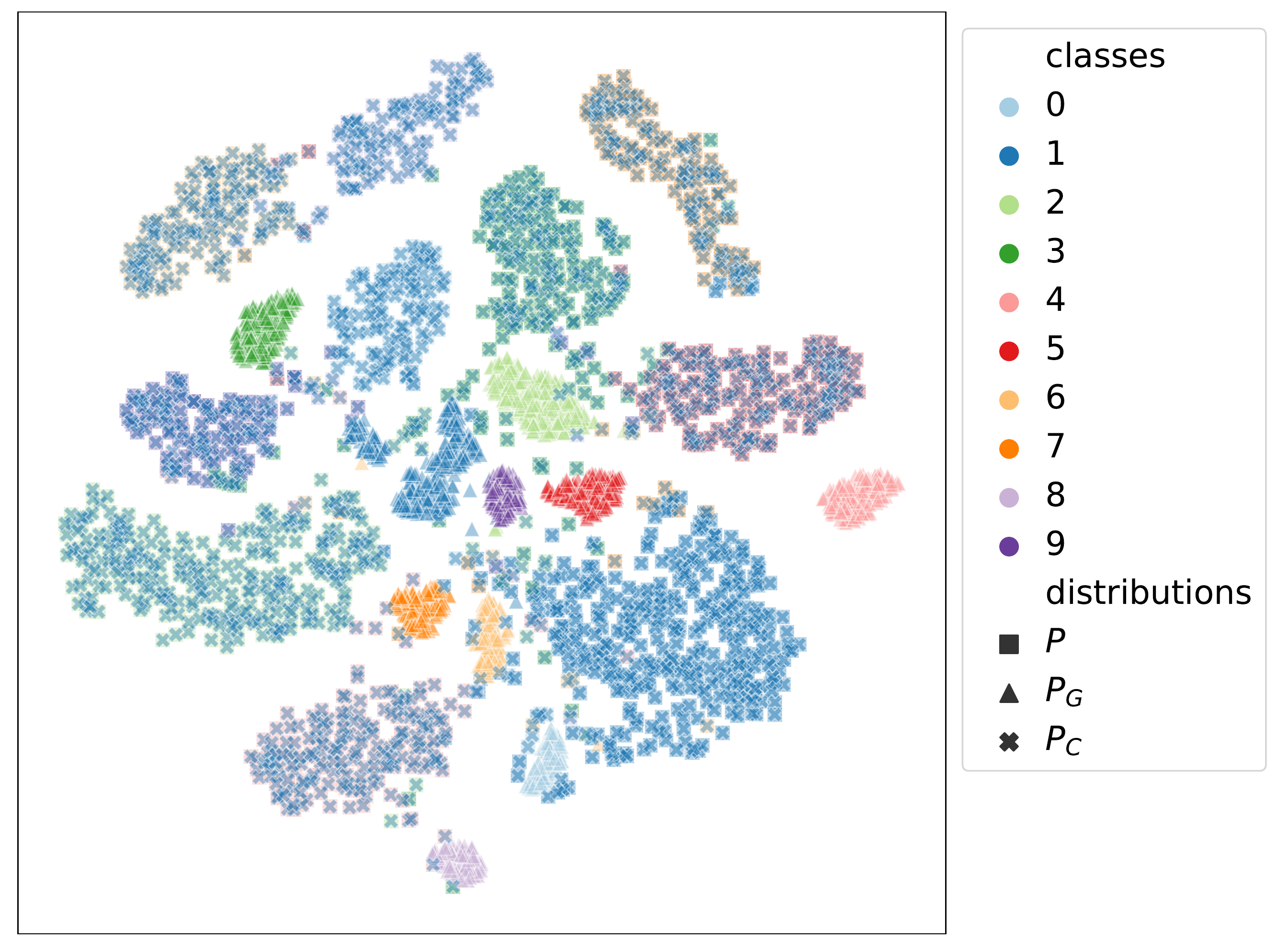}
\label{fig:align_0_svhn}}
\caption{
Visualized distribution alignment or misalignment, where different colors denote classes and different patterns denote distributions.
}
\label{fig:alignment}
\end{figure*}

Now we investigate how UAEs and (un)labeled data influence the performance of PUAT.

\subsubsection{Effect of UAE}
\label{Sec:UAE}

To investigate the UAE's effect on PUAT, we first train PUAT with different $\lambda \in \{0.0, 1.0, 10.0, 20.0 \}$ in Equation (\ref{Eq:loss_new}), where $\lambda=0.0$ means the training without UAEs, and then check its testing natural accuracy and robust accuracy. The result is shown in Figure \ref{fig:uae}, from which we can see as the importance of UAEs (indicated by $\lambda$) increases, both natural accuracy and robust accuracy increase, which verifies that UAEs can benefit PUAT by boosting standard generalization and adversarial robustness simultaneously. We also note that when $\lambda$ exceeds 10, the performance of PUAT begins to drop, which shows that excessively large $\lambda$ may lead to overfitting of both natural sample distribution and UAE distribution.

To gain a deeper understanding of the impact of UAE on the learning of PUAT, we also plot the learning curves of PUAT both with and without UAEs in Figure \ref{fig:learning_curve}. From Figure \ref{fig:learning_rob} we can see that regardless of whether UAEs are used or not, the training robust accuracy curves both rise with the number of training epochs, while their corresponding testing robust accuracy curves both first rise and then fall gradually. This results indicate that there exist robust overfitting \cite{rice2020overfitting}, i.e., the overfitting on AEs, whether or not UAEs are used. To address this issue, following the idea proposed in \cite{rice2020overfitting}, we adopt validation-based early stopping to regularize the training of PUAT. On the other hand, Figure \ref{fig:learning_acc} shows that whether UAEs are used or not, the training natural accuracy curves (blue and yellow curves) both first rise and then gradually stabilize. However, when UAEs are not used for training, the testing natural accuracy curve (red curve) first rises and then drops, which indicates overfitting occurs when training without UAEs. In sharp contrast, when training with UAEs, the testing natural accuracy curve (green curve) first rises and then converges gradually, which suggests that using UAEs helps mitigate the overfitting on natural samples.

\subsubsection{Effect of natural samples}
\label{Sec:nat-samples}
Recall that Theorem \ref{Theorem:GCD_finite} states that the more the labeled natural samples are used for training, the better the distribution alignment, and the better the tradeoff between standard generalization and comprehensive adversarial robustness. Now we empirically verify this conclusion by checking the testing natural accuracy, robust accuracy and the Silhouette Coefficient \cite{rousseeuw1987silhouettes} after training with various amount of natural samples. The results are shown in Figure \ref{fig:unlabel}, where we compare PUAT with PUAT-w/o-unlabeled-data (the variant of PUAT without using unlabeled natural samples for training). 
\par
At first, from Figure \ref{fig:unlabel} we can see that regardless of whether or not unlabeled samples are used, the standard generalization (measured by the natural accuracy), the comprehensive adversarial robustness (measured by the average robust accuracy), and the quality of the clustering of the samples from different distributions ($P$, $P_G$, $P_C$) of each class (measured by Silhouette Coefficient), are all improved as the amount of labeled samples used for training increases. These results validate that both PUAT and PUAT-w/o-unlabeled-data can achieve better distribution alignment and better tradeoff with more labeled samples for training due to G-C-D GAN's ability to align the UAE distribution with natural sample distribution, which is consistent with the conclusions of Theorem \ref{Theorem:GCD_finite}. 
\par
At the same time, it is noteworthy that PUAT performs significantly better than PUAT-w/o-unlabeled-data even with the labeled data of only 2\%, which shows that PUAT can improve the performance by leveraging unlabeled data even though the labeled samples are extremely sparse. This is because unlabeled data helps G-C-D GAN more accurately capture real distribution $P(x,y)$, and consequently, enables the target classifier $C$ to generalize consistently over the aligned distributions of natural samples and AEs.

\begin{table*}[h]
	\renewcommand\arraystretch{}
	\centering
	\caption{The sample number of different classes from different distributions. }
	\label{Tb:numbers}
	\setlength{\tabcolsep}{0.04cm}
	\begin{tabular}{l|c|c|c|c|c|c|c|c|c|c|c|c|c|c|c|c|c|c|c|c}
	\toprule
	\multirow{2.6}{*}{Distribution}&\multicolumn{10}{c|}{CIFAR10}
	&\multicolumn{10}{c}{SVHN}\\
	\cmidrule{2-21}
	&{"airplane"}&{"automobile"}&{"bird"}&{"cat"}&{"deer"}&{"dog"}&{"frog"}&{"horse"}&{"ship"}&{"truck"}
	&{"0"}&{"1"}&{"2"}&{"3"}&{"4"}&{"5"}&{"6"}&{"7"}&{"8"}&{"9"}\\
	\midrule
	{$P(y)$}&{200}&{200}&{200}&{200}&{200}&{200}&{200}&{200}&{200}&{200}
	&{134}&{392}&{319}&{221}&{194}&{183}&{152}&{155}&{128}&{122}\\
	{$P_G(y)$}&{200}&{200}&{200}&{200}&{200}&{200}&{200}&{200}&{200}&{200}
	&{134}&{392}&{319}&{221}&{194}&{183}&{152}&{155}&{128}&{122}\\
	{$P_C(y)$}&{208}&{201}&{180}&{165}&{201}&{210}&{227}&{187}&{217}&{204}
	&{143}&{421}&{306}&{199}&{203}&{183}&{152}&{152}&{111}&{131}\\
	{misaligned $P_C(y)$}&{0}&{2000}&{0}&{0}&{0}&{0}&{0}&{0}&{0}&{0}
	&{0}&{2000}&{0}&{0}&{0}&{0}&{0}&{0}&{0}&{0}\\
	\bottomrule
	\end{tabular}
\end{table*}

\subsection{Visual Study (RQ4)}
\label{Sec:visual}
Now, we visually demonstrate PUAT's capability to align the natural data distribution $P(x, y)$, the distribution $P_G(x, y)$ learned by the generator, and the distribution $P_C(x, y)$ learned by target classifier, using the testing sets of CIFAR10 and SVHN which both contain 10 classes. In particular, for each class $y_i$ ($1 \le i \le 10$) of each testing set, we invoke $G$ to generate a set of $\{ (x_g, y_i)\}$ such that $\big| \{ (x_g, y_i) \sim P_G(x, y)\} \big| = \big| \{ (x_l, y_i) \sim P(x, y)\} \big|$. At the same time, all the testing examples are also fed into $C$, and those classified as $y_i$ make up $\{ (x_c, y_i) \sim P_C \}$. At last, we plot the sample points $\{ (x, y_i)\}$, $\{ (x_g, y_i)\}$, and $\{ (x_c, y_i)\}$ in Figure \ref{fig:alignment} by using t-SNE algorithm, where different colors denote classes and different patterns denote distributions.
\par
Figures \ref{fig:align_1_cifar10} and \ref{fig:align_1_svhn} show the results of the PUAT on CIFAR10 and SVHN, respectively. From Figures \ref{fig:align_1_cifar10} and \ref{fig:align_1_svhn} we can see that the data points are distributed in different colored clusters with clear boundaries, where the examples from different distributions (with different patterns) but of the same class (with the same color) are grouped in the same cluster, and there are essentially no data points in any cluster that have a different color from the majority. However, once we remove the extended AT from PUAT so that Theorem \ref{Theorem:GCD} does not hold, the cluster boundaries become blurred and the colors in the same cluster become impure, as shown in Figures \ref{fig:align_0_cifar10} and \ref{fig:align_0_svhn}.
\par
Besides, Figures \ref{fig:align_1_cifar10} and \ref{fig:align_1_svhn} also show that when the distributions are aligned, the samples of every class from $P_G$ distributed evenly in the cluster. On the contrary, Figures \ref{fig:align_0_cifar10} and \ref{fig:align_0_svhn} show that when the extended AT is removed from PUAT, the samples in clusters from $P_G$ concentrate to a small area, which indicates the mode collapse of G-C-D GAN. These results confirm that the distribution alignment brought by the extended AT of PUAT can avoid mode collapse of G-C-D GAN.
\par
Therefore, we can contend that PUAT aligns the three conditional distributions $P$, $P_C$ and $P_G$. Furthermore, as shown in Table \ref{Tb:numbers}, when the distributions are aligned, the number of examples belonging to $y_i$ is approximately the same in different distributions, which indicates the marginal distributions $P(y)= P_{G}(y) = P_{C}(y)$, so we can confirm that $P(x, y)$, $P_C(x, y)$ and $P_G(x, y)$ are aligned by PUAT. 

\subsection{
Training Time
}
\label{sec:time}
\begin{figure}[t]
\centering
\includegraphics[scale=0.35]{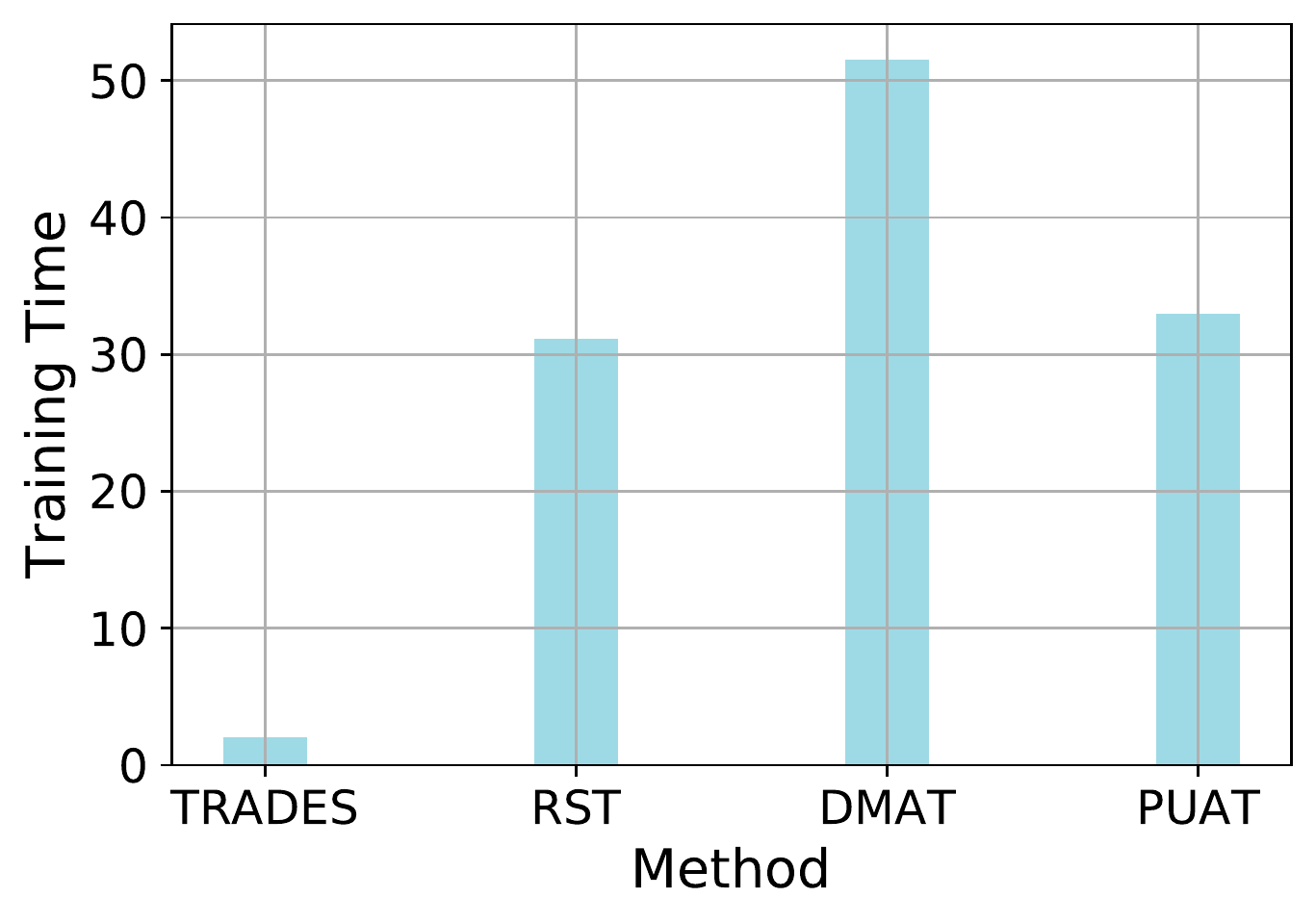}
\caption{	
Training time comparison between AT baselines and PUAT. The training time of each method is measured by its ratio relative to the time of regular training. Only the time taken until early stopping is factored into the calculations. The average values across Tiny ImageNet, SVHN, CIFAR10, and CIFAR100 datasets are reported.
}
\label{fig:time}
\end{figure}
The training time of each method is measured by its ratio relative to the time of regular training. As we can see from Figure \ref{fig:time}, the training time of PUAT is significantly shorter than that of DMAT and slightly longer than that of RST. This is attributable to the rapid convergence exhibited by PUAT, as consistently demonstrated by its learning curves in Figure \ref{fig:learning_curve}. We also see that the training of TRADES takes the shortest time, which is because it only uses labeled data. In summary, Figure \ref{fig:time} together with Figure \ref{fig:RQ2_cifar10} shows that PUAT can achieve a superior balance between standard generalization and adversarial robustness at an acceptable additional time cost.
\section{Related Work}
In this section, we briefly introduce the related works from three domains, adversarial attacks, adversarial training, and standard generalizability, and comment their differences from our work. 

\subsection{Adversarial Attacks}

\label{sec:attacks}

Generally, adversarial attacks aim to fool a target model well-trained on natural data with AEs that are imperceptible to humans \cite{miller2020adversarial,zhang2021adversarial}. The existence of AEs is first discovered by the seminal works of Szegedy \textit{et al}. \cite{szegedy2013intriguing} and Goodfellow \textit{et al}. \cite{goodfellow2014explaining}. In \cite{szegedy2013intriguing}, Szegedy \textit{et al}. propose an attack method called L-BFGS which can imperceptibly perturb a natural example with respect to its second-order gradient to cause misclassifications. In \cite{goodfellow2014explaining}, Goodfellow \textit{et al}. further propose the fast gradient sign method (FGSM), which searches adversarial perturbations just based on a single first-order gradient step. Following the idea of \cite{goodfellow2014explaining}, various adversarial attack methods have been proposed by enhancing the first-order method. For example, Madry \textit{et al}. \cite{madry2017towards} propose a projected gradient descent (PGD) method which can be regarded as an iterative FGSM projecting AEs into a ball constrained by $L_{\infty}$ norm. Instead of restricting the perturbation norms to a fixed value, Moosavi-Dezfooli \textit{et al}. \cite{moosavi2016deepfool} propose an optimization based attack method Deepfool to seek adversarial perturbations for differentiable classifiers by minimizing $L_2$ norm. Similarly, Carlini and Wagner \cite{carlini2017towards} propose another optimization based method called C\&W attack, which can generate stronger AEs with perturbations minimizing $L_0$, $L_2$ and $L_{\infty}$ norms at the expense of higher computation complexity. In contrast to gradient based methods, another line of works employs GAN to generate AEs for better imperceptibility. For example, Xiao \textit{et al}. \cite{xiao2018generating} propose AdvGAN to generate perturbations for observed examples, which extends standard GAN with explicit perturbation norm loss to ensure the imperceptibility. Additionally, instead of generating an AE by perturbing a single example, Moosavi-Dezfooli \textit{et al}. \cite{moosavi2017universal} also further propose universal adversarial perturbations which are computed in an example-agnostic fashion and can fool the target classifier on any examples.

The above-mentioned adversarial attacks focus on the AEs which are generated by adding imperceptible perturbations restricted by $L_p$ norm on observed examples, which we call restricted AE (RAE). Recently, some researchers have discovered that there exist unrestricted AEs (UAEs) that are not norm-constrained. To ensure the imperceptibility of UAEs, one line of UAE generation methods manipulates example features by perturbations within perceptual distances that are aligned with human perception \cite{shamsabadi2020colorfool,zhao2020towards,bhattad2020unrestricted,naderi2022generating}. At the same time, some researchers find that in a broader sense, UAEs can be completely new examples built from scratch and dissimilar to any observed example but can still fool classifiers without confusing humans \cite{song2018constructing}. Following this idea, a few of UAE generation methods based on generative models are proposed \cite{song2018constructing,dunn2019adaptive,xiao2022understanding,Tao2022EGM,poursaeed2019fine,wang2019gan}. For example, Song \textit{et al}.\cite{song2018constructing} and Xiao \textit{et al}. \cite{xiao2022understanding} propose to first generate perturbations based on initial random noise and then feed them into a pre-trained conditional GAN (e.g. AC-GAN \cite{odena2017conditional}) to produce UAEs, while Poursaeed \textit{et al}. \cite{poursaeed2019fine} propose to use a Style-GAN \cite{karras2019style} to synthesize UAEs with stylistic perturbations.

The existing works for UAE generation, however, do not explain why UAEs pose threats, or rather, where the imperceptibility of UAEs stems. By contrast, our work proposes a novel viewpoint to view UAEs as perturbed observed and unobserved examples, which provides a logical and provable explanation of UAE's dissimilarity to observed examples, imperceptibility to humans, and the feasibility of comprehensive adversarial robustness. Based on this understanding, we further propose a novel G-C-D GAN for UAE generation, which together with the attacker $A$ can synthesize UAEs that are more adversarial by aligning UAE distribution with natural data distribution.  


\subsection{Adversarial Training}
\label{Sec:relatedAT}

AT has proved to be the strongest principled defensive technique against adversarial attacks \cite{bai2021recent,akhtar2021advances,zhao2022adversarial}. The basic idea of AT is initially proposed by Szegedy \textit{et al}. \cite{szegedy2013intriguing}, which uses a min-max game to incorporate deliberately crafted AEs to the training process of a DNN model to obtain immunity against AEs. Madry \textit{et al}. \cite{madry2017towards} are the first time to theoretically justify the adversarial robustness offered by the min-max optimization used in AT. 

Over the past few years, many enhancements of AT have been proposed to employ various perturbation generation methods to improve adversarial robustness \cite{wang2019bilateral,wang2019improving,cheng2020cat,shafahi2020universal,lee2017generative,stutz2019disentangling}. For example, Wang \textit{et al}. \cite{wang2019bilateral} propose a bilateral adversarial training method that perturbs both normal images and real labels with respect to a smaller gradient magnitude. Wang \textit{et al}. \cite{wang2019improving} propose a misclassification-aware adversarial training (MART) method, which defines a novel adversarial loss with a regularization on misclassified examples. Cheng \textit{et al}. \cite{cheng2020cat} argue that large perturbations may be more adversarial for the robustness of a target model and propose a customized adversarial training (CAT) method which can offer adaptive perturbations for natural examples.  Shafahi \textit{et al}. \cite{shafahi2020universal} propose a universal adversarial training method for the robustness against universal attacks \cite{moosavi2017universal}. Among others, generative model-based AT methods are verified as a promising approach due to the GAN's capability of distributional alignment. For example, to strengthen the adversaries, Lee \textit{et al}. \cite{lee2017generative} propose a generative adversarial trainer (GAT) which employs a GAN to produce adversarial perturbations with respect to the gradients of the natural examples, while Stutz \textit{et al}. \cite{stutz2019disentangling} and Xiao \textit{et al}. \cite{xiao2022understanding} propose to use VAE-GAN to produce AEs whose distribution is aligned with that of the natural examples. As discussed in \ref{sec:attacks}, GAN-based methods strengthen AEs with stronger imperceptibility because of the alignment between the distributions of AEs and the natural examples, which is equivalent to drawing AEs on the manifold (distribution) of the natural examples. 
Another line of works proposed to improve adversarial robustness with data augmentation for AT \cite{wang2023better, rebuffi2021fixing, gowal2021improving, xing2022artificially}. Among them, some studies \cite{wang2023better, rebuffi2021fixing, gowal2021improving} utilize generative model to densify the underlying manifold of natural data, and then conduct an RAE-based AT (e.g. TRADES \cite{zhang2019theoretically}) on the augmented dataset. At the same time, Xing \textit{et al}. \cite{xing2022artificially} theoretically show that the generated data can help improve adversarial robustness. 

The above-mentioned AT methods are basically based on supervised learning, which often suffer from the sparsity of labeled data. Recently, some researchers have discovered that introducing richer unlabeled data can increase the performance of AT, and hence proposed semi-supervised AT methods leveraging partially labeled data \cite{zhang2019defense,carmon2019unlabeled,alayrac2019labels}. For example, Zhang \textit{et al}. \cite{zhang2019defense} argue that the performance of AT will be impaired by the perturbations generated only based on labeled data since they are unable to reflect the underlying distribution of all potential examples, and propose a GAN-based semi-supervised AT method where AEs can preserve the inner structure of unlabeled data. Carmon \textit{et al}. \cite{carmon2019unlabeled} propose a robust self-training (RST) model which first uses an intermediate classifier trained on labeled examples to infer the labels for unlabeled examples, and then generate AEs for AT from the labeled and pseudo-labeled examples. Stanforth \textit{et al}. \cite{alayrac2019labels} propose an unsupervised adversarial training (UAT) method which estimates the worst-case AEs with KL-divergence between the distributions of the perturbed unlabeled examples and the natural unlabeled examples.


Different from the existing AT methods providing adversarial robustness against either RAE or UAE, our PUAT focuses on the concept of generalized UAE which makes it feasible to offer comprehensive adversarial robustness against both UAE and RAE. At the same time, PUAT is both GAN-based and semi-supervised. 


\subsection{Standard Generalizability}
A model's standard generalizability and adversarial robustness are measured by its accuracy on testing natural examples and testing AEs, respectively. Traditional AT methods are based on AEs with norm-constraints, which will lead to manifold-shift between AEs and natural examples \cite{yang2021adversarial,xiao2022understanding}. Therefore, there is a belief that AT forces the target model to generalize on two separated manifolds and hence the tradeoff between standard generalizability and adversarial robustness inevitably comes into being, or in other words, adversarial robustness comes at the expense of standard generalizability \cite{madry2017towards,tsipras2018robustness,zhang2019theoretically,yang2021adversarial}.

However, some researchers have opposing opinions that the tradeoff may be not inevitable, with support of recent empirical evidences and theoretical analyses \cite{stutz2019disentangling,yang2020closer,staib2017distributionally,Song2020Robust,xing2021generalization}. One line of works tries to extend the existing AT methods with regularizations to mitigate the tradeoff. For example, Song \textit{et al}. \cite{Song2020Robust} and Xing \textit{et al}. \cite{xing2021generalization} propose to improve the generalizability by introducing robust local features and $l_1$ penalty to AT, respectively. The other line aims at breaking the tradeoff by exploiting the distributional property of AEs and natural examples. For example, Yang \textit{et al}. \cite{yang2020closer} discover the $r$-separation phenomenon in widely-used image datasets, namely the distributions of different classes are at least distance 2$r$ apart from each other in pixel space. Based on this discovery, they theoretically prove that a classifier with sufficient local Lipschitz smoothness can achieve both high accuracy on natural examples and acceptable robustness on AEs with perturbations of size up to $r$. Differently, Stutz \textit{et al}. \cite{stutz2019disentangling} find that there exist AEs on the manifold of natural examples, and hence the adversarial robustness against the on-manifold AEs will equivalently improve the standard generalizability. Similarly, Staib \textit{et al}. \cite{staib2017distributionally} propose Distributionally Robust Optimization (DRO), which offers a more general perspective that the tradeoff can be eliminated by the alignment between the distributions of AEs and natural examples. Instead of generating point-wise perturbations with norm-constrained magnitude, DRO aims to seek worst-case AE distribution by restricting the distance between the AE distribution $\tilde{p}$ and the natural data distribution $p$ through the following min-max game:
\begin{equation}
\min_{C} \max_{P_W: W_{p}(P,P_W)\leq\epsilon, (\tilde{x},y)\sim P_W} \mathbb{E}_{(x,y) \sim)P} l(\tilde{x}, y; C),
\label{Eq:DRO}
\end{equation}
where $W_{p}$ is a distributional distance measure, e.g., Wasserstein distance. As $\epsilon$ approaching 0, $P_W$ is gradually aligning with $P$, and consequently, the tradeoff tends to disappear. Indeed, on-manifold AEs can be seen as a special case of DRO when $\epsilon = 0$.

Our work provides new arguments for the opinion that the tradeoff is eliminable. In particular, we make two contributions which distinguish our work from the existing works addressing the tradeoff: (1) We extend the research scope to UAEs by proposing a novel AT method PUAT; (2) By solid theoretical analysis and extensive empirical investigation, we show that the standard generalizability and comprehensive adversarial robustness are both achievable for PUAT.

\section{Conclusion}
In this paper, we propose a unique viewpoint that understands UAEs as imperceptibly perturbed unobserved examples, which rationalizes why UAEs exist and why they are able to deceive a well-trained classifier. This understanding allows us to view RAE as a special UAE, thus providing the feasibility of achieving comprehensive adversarial robustness against both UAE and RAE. Also, we find that if the UAE distribution and the natural data distribution can be aligned, the conflict between robust generalization and standard generalization in traditional AT methods can be eliminated, thus improving both the adversarial robustness and standard generalizability of a target classifier. Based on these ideas, we propose a novel AT method called Provable Unrestricted Adversarial Training (PUAT). PUAT utilizes partially labeled data to achieve efficient UAE generation by accurately capturing the real data distribution through a well-designed G-C-D GAN. At the same time, PUAT extends the traditional AT by introducing the supervised loss of the target classifier into the adversarial training and achieves alignment between the UAE distribution, the natural data distribution, and the distribution learned by the classifier, with the collaboration of the G-C-D GAN. The solid theoretical analysis and the extensive experiments demonstrate that PUAT can improve tradeoffs between adversarial robustness and standard generalizability through distributional alignment, and compared to the baseline methods, PUAT enables a target classifier to better defend against both UAE attacks and RAE attacks, as well as significantly increases its standard generalizability.


%

%

\ifCLASSOPTIONcompsoc
  \section*{Acknowledgments}
\else
  \section*{Acknowledgment}
\fi

This work is supported by National Natural Science Foundation of China under grant 61972270, and in part by NSF under grants III-1763325, III-1909323, III-2106758, and SaTC-1930941.

\ifCLASSOPTIONcaptionsoff
  \newpage
\fi



\bibliographystyle{IEEEtran}
\bibliography{UAR_PAMI_final.bib}

\begin{thebibliography}{10}
\providecommand{\url}[1]{#1}
\csname url@samestyle\endcsname
\providecommand{\newblock}{\relax}
\providecommand{\bibinfo}[2]{#2}
\providecommand{\BIBentrySTDinterwordspacing}{\spaceskip=0pt\relax}
\providecommand{\BIBentryALTinterwordstretchfactor}{4}
\providecommand{\BIBentryALTinterwordspacing}{\spaceskip=\fontdimen2\font plus
\BIBentryALTinterwordstretchfactor\fontdimen3\font minus
  \fontdimen4\font\relax}
\providecommand{\BIBforeignlanguage}[2]{{%
\expandafter\ifx\csname l@#1\endcsname\relax
\typeout{** WARNING: IEEEtran.bst: No hyphenation pattern has been}%
\typeout{** loaded for the language `#1'. Using the pattern for}%
\typeout{** the default language instead.}%
\else
\language=\csname l@#1\endcsname
\fi
#2}}
\providecommand{\BIBdecl}{\relax}
\BIBdecl

\bibitem{biggio2013evasion}
B.~Biggio, I.~Corona, D.~Maiorca, B.~Nelson, N.~{\v{S}}rndi{\'c}, P.~Laskov,
  G.~Giacinto, and F.~Roli, ``Evasion attacks against machine learning at test
  time,'' in \emph{Joint European Conference on Machine Learning and Knowledge
  Discovery in Databases}, 2013, pp. 387--402.

\bibitem{szegedy2013intriguing}
C.~Szegedy, W.~Zaremba, I.~Sutskever, J.~Bruna, D.~Erhan, I.~Goodfellow, and
  R.~Fergus, ``Intriguing properties of neural networks,'' in
  \emph{International Conference on Learning Representations}, 2014.

\bibitem{ilyas2019adversarial}
A.~Ilyas, S.~Santurkar, D.~Tsipras, L.~Engstrom, B.~Tran, and A.~Madry,
  ``Adversarial examples are not bugs, they are features,'' \emph{Advances in
  Neural Information Processing Systems}, vol.~32, 2019.

\bibitem{miller2020adversarial}
D.~J. Miller, Z.~Xiang, and G.~Kesidis, ``Adversarial learning targeting deep
  neural network classification: A comprehensive review of defenses against
  attacks,'' \emph{Proceedings of the IEEE}, vol. 108, no.~3, pp. 402--433,
  2020.

\bibitem{bai2021recent}
T.~Bai, J.~Luo, J.~Zhao, B.~Wen, and Q.~Wang, ``Recent advances in adversarial
  training for adversarial robustness,'' in \emph{Proceedings of the Thirtieth
  International Joint Conference on Artificial Intelligence (IJCAI-21)}, 2021.

\bibitem{akhtar2021advances}
N.~Akhtar, A.~Mian, N.~Kardan, and M.~Shah, ``Advances in adversarial attacks
  and defenses in computer vision: A survey,'' \emph{IEEE Access}, vol.~9, pp.
  155\,161--155\,196, 2021.

\bibitem{zhao2022adversarial}
W.~Zhao, S.~Alwidian, and Q.~H. Mahmoud, ``Adversarial training methods for
  deep learning: A systematic review,'' \emph{Algorithms}, vol.~15, no.~8, p.
  283, 2022.

\bibitem{goodfellow2014explaining}
I.~J. Goodfellow, J.~Shlens, and C.~Szegedy, ``Explaining and harnessing
  adversarial examples,'' in \emph{International Conference on Learning
  Representations}, 2014.

\bibitem{madry2017towards}
A.~Madry, A.~Makelov, L.~Schmidt, D.~Tsipras, and A.~Vladu, ``Towards deep
  learning models resistant to adversarial attacks,'' in \emph{International
  Conference on Learning Representations}, 2018.

\bibitem{wang2019bilateral}
J.~Wang and H.~Zhang, ``Bilateral adversarial training: Towards fast training
  of more robust models against adversarial attacks,'' in \emph{Proceedings of
  the IEEE/CVF International Conference on Computer Vision}, 2019.

\bibitem{wang2019improving}
Y.~Wang, D.~Zou, J.~Yi, J.~Bailey, X.~Ma, and Q.~Gu, ``Improving adversarial
  robustness requires revisiting misclassified examples,'' in
  \emph{International Conference on Learning Representations}, 2020.

\bibitem{cheng2020cat}
M.~Cheng, Q.~Lei, P.-Y. Chen, I.~Dhillon, and C.-J. Hsieh, ``Cat: Customized
  adversarial training for improved robustness,'' \emph{CoRR}, vol.
  abs/2002.06789, 2020.

\bibitem{shafahi2020universal}
A.~Shafahi, M.~Najibi, Z.~Xu, J.~Dickerson, L.~S. Davis, and T.~Goldstein,
  ``Universal adversarial training,'' in \emph{Proceedings of the AAAI
  Conference on Artificial Intelligence}, vol.~34, no.~04, 2020, pp.
  5636--5643.

\bibitem{lee2017generative}
H.~Lee, S.~Han, and J.~Lee, ``Generative adversarial trainer: Defense to
  adversarial perturbations with gan,'' \emph{CoRR}, vol. abs/1705.03387, 2017.

\bibitem{stutz2019disentangling}
D.~Stutz, M.~Hein, and B.~Schiele, ``Disentangling adversarial robustness and
  generalization,'' in \emph{Proceedings of the IEEE/CVF Conference on Computer
  Vision and Pattern Recognition}, 2019, pp. 6976--6987.

\bibitem{song2018constructing}
Y.~Song, R.~Shu, N.~Kushman, and S.~Ermon, ``Constructing unrestricted
  adversarial examples with generative models,'' \emph{Advances in Neural
  Information Processing Systems}, vol.~31, 2018.

\bibitem{dunn2019adaptive}
I.~Dunn, H.~Pouget, T.~Melham, and D.~Kroening, ``Adaptive generation of
  unrestricted adversarial inputs,'' \emph{CoRR}, vol. abs/1905.02463, 2019.

\bibitem{bhattad2020unrestricted}
A.~Bhattad, M.~J. Chong, K.~Liang, B.~Li, and D.~A. Forsyth, ``Unrestricted
  adversarial examples via semantic manipulation,'' in \emph{International
  Conference on Learning Representations}, 2020.

\bibitem{naderi2022generating}
H.~Naderi, L.~Goli, and S.~Kasaei, ``Generating unrestricted adversarial
  examples via three parameteres,'' \emph{Multimedia Tools and Applications},
  pp. 1--20, 2022.

\bibitem{Tao2022EGM}
T.~Xiang, H.~Liu, S.~Guo, Y.~Gan, and X.~Liao, ``Egm: An efficient generative
  model for unrestricted adversarial examples,'' \emph{ACM Transactions on
  Sensor Networks}, 2022.

\bibitem{tsipras2018robustness}
D.~Tsipras, S.~Santurkar, L.~Engstrom, A.~Turner, and A.~Madry, ``Robustness
  may be at odds with accuracy,'' in \emph{International Conference on Learning
  Representations}, 2019.

\bibitem{attias2022improved}
I.~Attias, A.~Kontorovich, and Y.~Mansour, ``Improved generalization bounds for
  adversarially robust learning,'' \emph{Journal of Machine Learning Research},
  vol.~23, no. 175, pp. 1--31, 2022.

\bibitem{yang2020closer}
Y.-Y. Yang, C.~Rashtchian, H.~Zhang, R.~R. Salakhutdinov, and K.~Chaudhuri, ``A
  closer look at accuracy vs. robustness,'' \emph{Advances in Neural
  Information Processing Systems}, vol.~33, 2020.

\bibitem{Song2020Robust}
C.~Song, K.~He, J.~Lin, L.~Wang, and J.~E. Hopcroft, ``Robust local features
  for improving the generalization of adversarial training,'' in
  \emph{International Conference on Learning Representations}, 2020.

\bibitem{xing2021generalization}
Y.~Xing, Q.~Song, and G.~Cheng, ``On the generalization properties of
  adversarial training,'' in \emph{International Conference on Artificial
  Intelligence and Statistics}.\hskip 1em plus 0.5em minus 0.4em\relax PMLR,
  2021, pp. 505--513.

\bibitem{zhang2019defense}
H.~Zhang and J.~Wang, ``Defense against adversarial attacks using feature
  scattering-based adversarial training,'' in \emph{Advances in Neural
  Information Processing Systems}, 2019.

\bibitem{carmon2019unlabeled}
Y.~Carmon, A.~Raghunathan, L.~Schmidt, J.~C. Duchi, and P.~S. Liang,
  ``Unlabeled data improves adversarial robustness,'' in \emph{Advances in
  Neural Information Processing Systems}, 2019.

\bibitem{alayrac2019labels}
J.-B. Alayrac, J.~Uesato, P.-S. Huang, A.~Fawzi, R.~Stanforth, and P.~Kohli,
  ``Are labels required for improving adversarial robustness?'' in
  \emph{Advances in Neural Information Processing Systems}, 2019.

\bibitem{li2017triple}
C.~Li, T.~Xu, J.~Zhu, and B.~Zhang, ``Triple generative adversarial nets,''
  \emph{Advances in Neural Information Processing Systems}, vol.~30, 2017.

\bibitem{staib2017distributionally}
M.~Staib and S.~Jegelka, ``Distributionally robust deep learning as a
  generalization of adversarial training,'' in \emph{NIPS workshop on Machine
  Learning and Computer Security}, 2017.

\bibitem{Li2022}
C.~Li, K.~Xu, J.~Zhu, J.~Liu, and B.~Zhang, ``Triple generative adversarial
  networks,'' \emph{IEEE Transactions on Pattern Analysis and Machine
  Intelligence}, vol.~44, no.~12, pp. 9629--9640, 2022.

\bibitem{tarvainen2017mean}
A.~Tarvainen and H.~Valpola, ``Mean teachers are better role models:
  Weight-averaged consistency targets improve semi-supervised deep learning
  results,'' \emph{Advances in neural information processing systems}, vol.~30,
  2017.

\bibitem{hoeffding1994probability}
W.~Hoeffding, ``Probability inequalities for sums of bounded random
  variables,'' \emph{The collected works of Wassily Hoeffding}, pp. 409--426,
  1994.

\bibitem{cover1999elements}
T.~M. Cover, \emph{Elements of information theory}.\hskip 1em plus 0.5em minus
  0.4em\relax John Wiley \& Sons, 1999.

\bibitem{shalev2014understanding}
S.~Shalev-Shwartz and S.~Ben-David, \emph{Understanding machine learning: From
  theory to algorithms}.\hskip 1em plus 0.5em minus 0.4em\relax Cambridge
  university press, 2014.

\bibitem{le2015tiny}
Y.~Le and X.~Yang, ``Tiny imagenet visual recognition challenge,'' \emph{CS
  231N}, vol.~7, no.~7, p.~3, 2015.

\bibitem{chrabaszcz2017downsampled}
P.~Chrabaszcz, I.~Loshchilov, and F.~Hutter, ``A downsampled variant of
  imagenet as an alternative to the cifar datasets,'' \emph{arXiv preprint
  arXiv:1707.08819}, 2017.

\bibitem{netzerreading}
Y.~Netzer, T.~Wang, A.~Coates, A.~Bissacco, B.~Wu, and A.~Y. Ng, ``Reading
  digits in natural images with unsupervised feature learning,'' in
  \emph{Advances in Neural Information Processing Systems Workshop on Deep
  Learning and Unsupervised Feature Learning}, 2011.

\bibitem{krizhevsky2009learning}
A.~Krizhevsky, ``Learning multiple layers of features from tiny images,''
  \emph{Citeseer}, 2009.

\bibitem{zhang2019theoretically}
H.~Zhang, Y.~Yu, J.~Jiao, E.~Xing, L.~El~Ghaoui, and M.~Jordan, ``Theoretically
  principled trade-off between robustness and accuracy,'' in
  \emph{International Conference on Machine Learning}.\hskip 1em plus 0.5em
  minus 0.4em\relax PMLR, 2019, pp. 7472--7482.

\bibitem{wang2023better}
Z.~Wang, T.~Pang, C.~Du, M.~Lin, W.~Liu, and S.~Yan, ``Better diffusion models
  further improve adversarial training,'' \emph{arXiv preprint
  arXiv:2302.04638}, 2023.

\bibitem{karras2022elucidating}
T.~Karras, M.~Aittala, T.~Aila, and S.~Laine, ``Elucidating the design space of
  diffusion-based generative models,'' \emph{Advances in Neural Information
  Processing Systems}, vol.~35, pp. 26\,565--26\,577, 2022.

\bibitem{croce2020reliable}
F.~Croce and M.~Hein, ``Reliable evaluation of adversarial robustness with an
  ensemble of diverse parameter-free attacks,'' in \emph{International
  conference on machine learning}.\hskip 1em plus 0.5em minus 0.4em\relax PMLR,
  2020, pp. 2206--2216.

\bibitem{xiao2022understanding}
J.~Xiao, L.~Yang, Y.~Fan, J.~Wang, and Z.-Q. Luo, ``Understanding adversarial
  robustness against on-manifold adversarial examples,'' \emph{CoRR}, vol.
  abs/2210.00430, 2022.

\bibitem{zagoruyko2016wide}
S.~Zagoruyko and N.~Komodakis, ``Wide residual networks. british machine vision
  conference (bmvc),'' 2016.

\bibitem{nesterov1983method}
Y.~E. Nesterov, ``A method for solving the convex programming problem with
  convergence rate $o(1 / k^2)$,'' in \emph{Dokl. akad. nauk Sssr}, vol. 269,
  no.~3, 1983, pp. 543--547.

\bibitem{smith2019super}
L.~N. Smith and N.~Topin, ``Super-convergence: Very fast training of neural
  networks using large learning rates,'' in \emph{Artificial intelligence and
  machine learning for multi-domain operations applications}, vol. 11006.\hskip
  1em plus 0.5em minus 0.4em\relax SPIE, 2019, pp. 369--386.

\bibitem{miyato2018spectral}
T.~Miyato, T.~Kataoka, M.~Koyama, and Y.~Yoshida, ``Spectral normalization for
  generative adversarial networks,'' \emph{arXiv preprint arXiv:1802.05957},
  2018.

\bibitem{lim2017geometric}
J.~H. Lim and J.~C. Ye, ``Geometric gan,'' \emph{arXiv preprint
  arXiv:1705.02894}, 2017.

\bibitem{kavalerov2021multi}
I.~Kavalerov, W.~Czaja, and R.~Chellappa, ``A multi-class hinge loss for
  conditional gans,'' in \emph{Proceedings of the IEEE/CVF winter conference on
  applications of computer vision}, 2021, pp. 1290--1299.

\bibitem{rice2020overfitting}
L.~Rice, E.~Wong, and Z.~Kolter, ``Overfitting in adversarially robust deep
  learning,'' in \emph{International Conference on Machine Learning}.\hskip 1em
  plus 0.5em minus 0.4em\relax PMLR, 2020, pp. 8093--8104.

\bibitem{rousseeuw1987silhouettes}
P.~J. Rousseeuw, ``Silhouettes: a graphical aid to the interpretation and
  validation of cluster analysis,'' \emph{Journal of computational and applied
  mathematics}, vol.~20, pp. 53--65, 1987.

\bibitem{zhang2021adversarial}
X.~Zhang, X.~Zheng, and W.~Mao, ``Adversarial perturbation defense on deep
  neural networks,'' \emph{ACM Computing Surveys (CSUR)}, vol.~54, no.~8, pp.
  1--36, 2021.

\bibitem{moosavi2016deepfool}
S.-M. Moosavi-Dezfooli, A.~Fawzi, and P.~Frossard, ``Deepfool: a simple and
  accurate method to fool deep neural networks,'' in \emph{Proceedings of the
  IEEE conference on computer vision and pattern recognition}, 2016.

\bibitem{carlini2017towards}
N.~Carlini and D.~Wagner, ``Towards evaluating the robustness of neural
  networks,'' in \emph{2017 ieee symposium on security and privacy (sp)}, 2017,
  pp. 39--57.

\bibitem{xiao2018generating}
C.~Xiao, B.~Li, J.-Y. Zhu, W.~He, M.~Liu, and D.~Song, ``Generating adversarial
  examples with adversarial networks,'' \emph{CoRR}, vol. abs/1801.02610, 2018.

\bibitem{moosavi2017universal}
S.-M. Moosavi-Dezfooli, A.~Fawzi, O.~Fawzi, and P.~Frossard, ``Universal
  adversarial perturbations,'' in \emph{Proceedings of the IEEE conference on
  computer vision and pattern recognition}, 2017.

\bibitem{shamsabadi2020colorfool}
A.~S. Shamsabadi, R.~Sanchez-Matilla, and A.~Cavallaro, ``Colorfool: Semantic
  adversarial colorization,'' in \emph{Proceedings of the IEEE/CVF Conference
  on Computer Vision and Pattern Recognition}, 2020, pp. 1151--1160.

\bibitem{zhao2020towards}
Z.~Zhao, Z.~Liu, and M.~Larson, ``Towards large yet imperceptible adversarial
  image perturbations with perceptual color distance,'' in \emph{Proceedings of
  the IEEE/CVF Conference on Computer Vision and Pattern Recognition}, 2020.

\bibitem{poursaeed2019fine}
O.~Poursaeed, T.~Jiang, Y.~Goshu, H.~Yang, S.~Belongie, and S.-N. Lim,
  ``Fine-grained synthesis of unrestricted adversarial examples,'' \emph{CoRR},
  vol. abs/1911.09058, 2019.

\bibitem{wang2019gan}
X.~Wang, K.~He, and J.~E. Hopcroft, ``At-gan: A generative attack model for
  adversarial transferring on generative adversarial nets,'' \emph{CoRR}, vol.
  abs/1904.07793, 2019.

\bibitem{odena2017conditional}
A.~Odena, C.~Olah, and J.~Shlens, ``Conditional image synthesis with auxiliary
  classifier gans,'' in \emph{International Conference on Machine
  Learning}.\hskip 1em plus 0.5em minus 0.4em\relax PMLR, 2017, pp. 2642--2651.

\bibitem{karras2019style}
T.~Karras, S.~Laine, and T.~Aila, ``A style-based generator architecture for
  generative adversarial networks,'' in \emph{Proceedings of the IEEE/CVF
  conference on computer vision and pattern recognition}, 2019.

\bibitem{rebuffi2021fixing}
S.-A. Rebuffi, S.~Gowal, D.~A. Calian, F.~Stimberg, O.~Wiles, and T.~Mann,
  ``Fixing data augmentation to improve adversarial robustness,'' \emph{arXiv
  preprint arXiv:2103.01946}, 2021.

\bibitem{gowal2021improving}
S.~Gowal, S.-A. Rebuffi, O.~Wiles, F.~Stimberg, D.~A. Calian, and T.~A. Mann,
  ``Improving robustness using generated data,'' \emph{Advances in Neural
  Information Processing Systems}, vol.~34, pp. 4218--4233, 2021.

\bibitem{xing2022artificially}
Y.~Xing, Q.~Song, and G.~Cheng, ``Why do artificially generated data help
  adversarial robustness,'' \emph{Advances in Neural Information Processing
  Systems}, vol.~35, pp. 954--966, 2022.

\bibitem{yang2021adversarial}
S.~Yang, T.~Guo, Y.~Wang, and C.~Xu, ``Adversarial robustness through
  disentangled representations,'' in \emph{Proceedings of the AAAI Conference
  on Artificial Intelligence}, 2021.

\end{thebibliography}
%

%

\begin{IEEEbiography}[{\includegraphics[width=1in,height=1.25in,clip,keepaspectratio]{./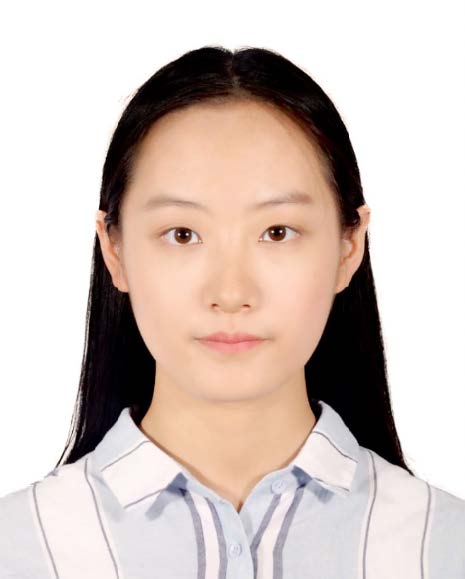}}]{Lilin Zhang}
obtained her bachelor's degree from the School of Information and Security Engineering, Zhongnan University of Economics and Law, in 2021. She is now pursuing  the master's degree in the School of Computer Science, Sichuan University, China. Her research interest focuses on adversarial machine learning and its applications. 
\end{IEEEbiography}

\begin{IEEEbiography}[{\includegraphics[width=1in,height=1.25in,clip,keepaspectratio]{./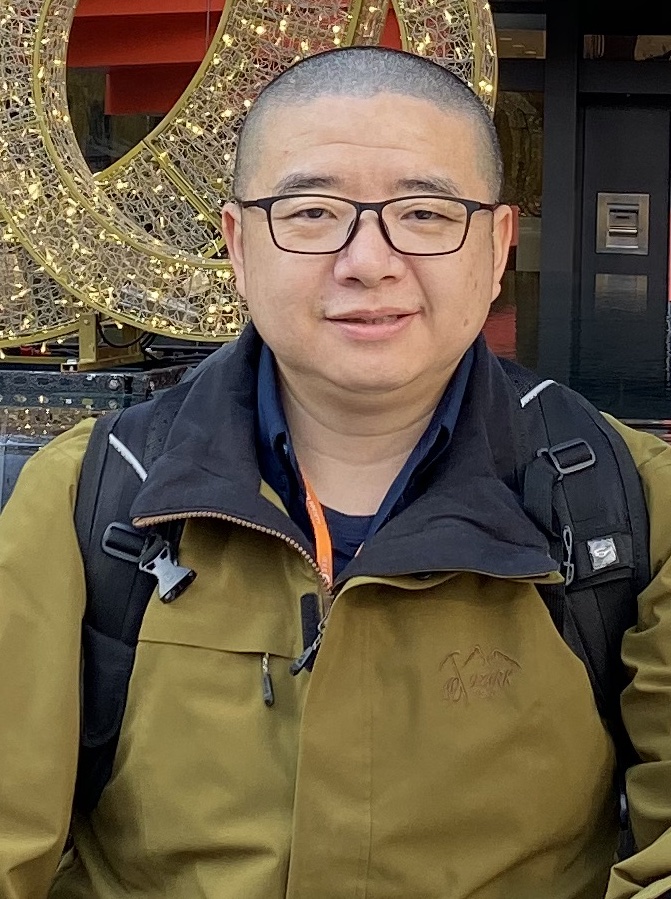}}]{Ning Yang}
is an associate professor at Sichuan University, China. He obtained his PhD degree in Computer Science from Sichuan University in 2010. His research interests include adversarial machine learning, graph learning, and recommender systems.
\end{IEEEbiography}

\begin{IEEEbiography}[{\includegraphics[width=1in,height=1.25in,clip,keepaspectratio]{./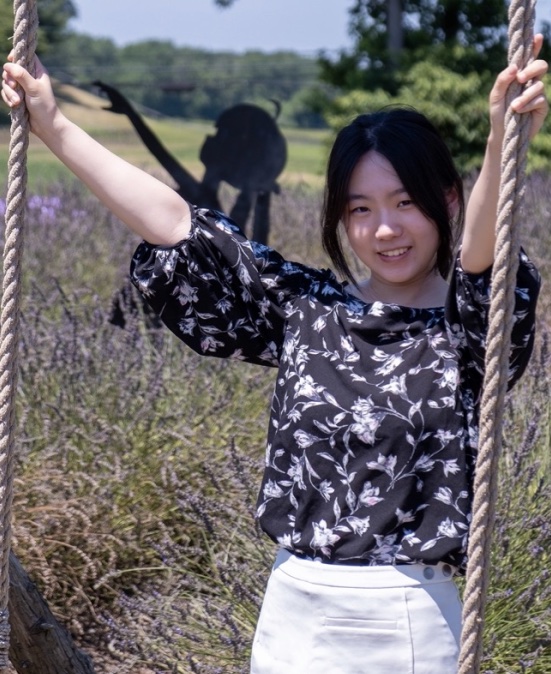}}]{Yanchao Sun}
is a Ph.D. candidate in University of Maryland, College Park, USA, where she is advised by Prof. Furong Huang. Her research mainly focuses on reinforcement learning (RL), including adversarial RL, multi-task RL, sample-efficient RL, etc. 
\end{IEEEbiography}


\begin{IEEEbiography}[{\includegraphics[width=1in,height=1.25in,clip,keepaspectratio]{./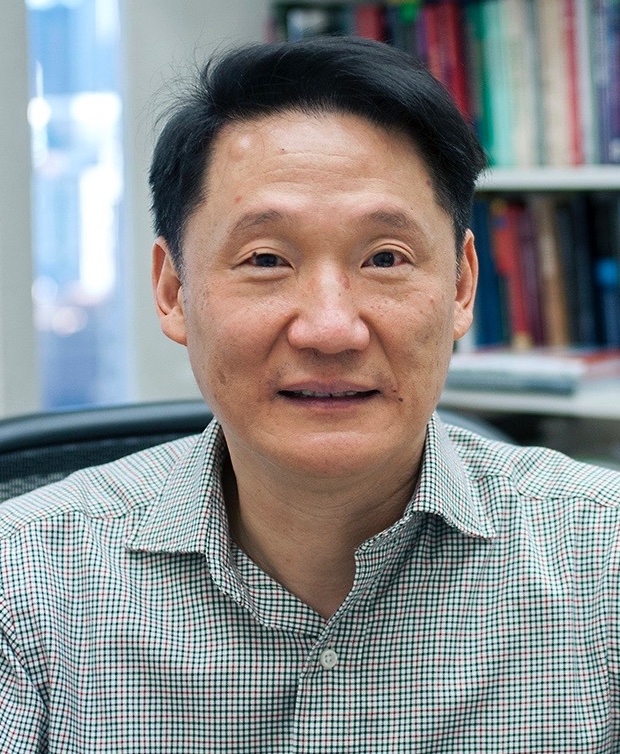}}]{Philip S. Yu}
received the PhD degree in electrical engineering from Stanford University. He is a distinguished professor in computer science at the University of Illinois at Chicago and is also the Wexler chair in information technology. His research interests include big data, data mining, and social computing. He is a fellow of the ACM and the IEEE.
\end{IEEEbiography}





\appendices

\section{Proof}
\begin{lemma}
The optimal solution of $\min_{C, G} \max_{D} \mathcal{L}^{G,C,D}_{gan}$ (Equation (11)) is achieved at $P(x,y)=(P_G (x, y) + P_C (x, y)) / 2$.
\label{Lemma:GCD}
\end{lemma}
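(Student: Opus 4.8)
The plan is to exploit the fact that the shared discriminator $D$ enters $\mathcal{L}_D$, $\mathcal{L}_G$ and $\mathcal{L}_C$ linearly, so that the whole objective collapses to a single integral against a signed measure. First I would substitute the definitions of $\mathcal{L}_D$, $\mathcal{L}_G$ and $\mathcal{L}_C$ into $\mathcal{L}^{G,C,D}_{gan}=\mathcal{L}_D+\tfrac12\mathcal{L}_G+\tfrac12\mathcal{L}_C$ and write
\[
\mathcal{L}^{G,C,D}_{gan}=\iint D(x,y)\Big(P(x,y)-\tfrac12 P_G(x,y)-\tfrac12 P_C(x,y)\Big)\,dx\,dy .
\]
Setting $\bar P:=\tfrac12(P_G+P_C)$, both $P$ and $\bar P$ are probability densities (each integrates to $1$), so $\mu:=P-\bar P$ is a signed measure of total mass $0$ and the objective is simply $\iint D\,d\mu$.

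Next I would carry out the inner maximization over $D$. Since the objective is linear in $D$, it is only finite once $D$ ranges over the constrained family used in the implementation (spectral normalization, i.e.\ effectively a symmetric, bounded/Lipschitz class). Taking the norm ball $\lvert D\rvert\le 1$ as the representative case, the maximizer is $D^\star=\mathrm{sign}(\mu)$, giving
\[
\max_{D}\mathcal{L}^{G,C,D}_{gan}=\iint \big\lvert P-\bar P\big\rvert\,dx\,dy=2\,\text{TV}\big(P,\bar P\big)\ge 0,
\]
with equality if and only if $P=\bar P$ almost everywhere. The same conclusion follows for any separating symmetric discriminator class; a $1$-Lipschitz constraint would instead yield the Wasserstein-$1$ distance, which vanishes under the identical condition, and this total-variation reading also dovetails with the total-variation bounds appearing in the finite-data analysis.

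Finally, the outer minimization is immediate: the inner value $2\,\text{TV}(P,\bar P)$ depends on $G$ and $C$ only through $\bar P=\tfrac12(P_G+P_C)$, is non-negative, and attains its global minimum $0$ exactly when $\bar P=P$, i.e.\ $P=(P_G+P_C)/2$; this is attainable, for instance at $P_G=P_C=P$. Hence the equilibrium of $\min_{C,G}\max_{D}\mathcal{L}^{G,C,D}_{gan}$ is characterized by $P=(P_G+P_C)/2$, as claimed. Note that it is the $\tfrac12$ weighting that forces the \emph{average} of $P_G$ and $P_C$ to match $P$, rather than each separately.

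The step I expect to be the main obstacle is making the inner maximization rigorous rather than merely formal. The linear objective is unbounded above unless $D$ is confined to a suitable class, and the clean maximizer $D^\star=\mathrm{sign}(\mu)$ requires that class to be rich enough to approximate the sign of $\mu$, i.e.\ to be a separating class, so that the induced integral probability metric genuinely vanishes only at $P=\bar P$. Bridging the spectral-norm / hinge training objective actually used in practice to such a well-defined probability metric is the delicate modeling step; once that assumption is granted (as is standard in GAN equilibrium analysis), the remainder is a one-line non-negativity argument.
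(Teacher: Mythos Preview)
Your proposal is correct and follows essentially the same route as the paper: collapse $\mathcal{L}^{G,C,D}_{gan}$ to $\iint D\,(P-\bar P)$ with $\bar P=\tfrac12(P_G+P_C)$, take $D^\star=\mathrm{sign}(P-\bar P)$ to obtain $2\,\mathrm{TV}(P,\bar P)$, and minimize to $0$ at $P=\bar P$. Your extra discussion of the discriminator class needed to make the inner maximization well-posed is more careful than the paper's own proof, which simply asserts the optimal $D^\star$ without comment.
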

\begin{proof}
Let $P_{GC} (x,y) = \frac{1}{2}(P_{G}(x,y)$ $ + P_{C}(x,y))$. According to Equations (7), (8), (10) and (11), it is easy to see
\begin{equation*}
\begin{aligned}
\mathcal{L}^{G,C,D}_{gan} = & \iint {P(x,y) D(x,y) - P_{GC}(x,y) D(x,y) \text{ dxdy}}.
\end{aligned}
\end{equation*}
For fixed $G$ and $C$, the optimal discriminator \cite{goodfellow2014generative} is
\begin{equation*}
D^* = \argmax_{D} \mathcal{L}^{G,C,D}_{gan} = \text{sign} \left( P - P_{GC} \right),
\end{equation*}
where $\text{sign}(x)$ is $1$ if $x > 0$, $0$ if $x = 0$, and $-1$ otherwise. Plugging $D^*$ in $\mathcal{L}^{G,C,D}_{gan}$, we can get
\begin{equation*}
\begin{aligned}
\mathcal{L}^{G,C,D^*}_{gan} = & \iint {(P - P_{GC}) D^* \text{ dxdy}}\\
= &\iint { \vert P - P_{GC} \vert \text{ dxdy}}\\
= & 2 \text{TV}(P, P_{GC}),
\end{aligned}
\end{equation*}
where $\text{TV}$ is Total Variation. Since $\text{TV}(P, P_{GC}) \ge 0$, $G^*$ and $C^*$ are achieved at $\text{TV}(P, P_{GC})$ $ = 0$, which leads to $P(x,y) = P_{GC} (x,y) = \frac{1}{2} \left(P_G (x, y) + P_C (x, y) \right)$. 
\end{proof}

\begin{lemma}
Let $\mathcal{Z}$ be the set of sampled noise  $z\sim P_z(z)$, $m$ $=\vert \mathcal{D}_l \vert$, $n=\vert \mathcal{D}_c \vert$, and $b = \Vert \log \frac{P}{P_{GC}} + 1 \Vert_\infty$ $ = \max_{(x, y)}$ $\log$ $ \frac{P}{P_{GC}} + 1$. For any $G$, $C$ and $0< \delta <1$, if $\vert\mathcal{Z}\vert \to \infty$, $\text{TV}(P, P_{GC}) \leq B$ holds with probability at least $(1 - \delta)^2$, where $B = \Big( \frac{1}{2}b
+\frac{1}{4} \max_{D} \hat{\mathcal{L}}^{G,C,D}_{gan} 
+ \sqrt{\frac{\log{\frac{1}{\delta}}}{8m}} + \sqrt{\frac{ \log{\frac{1}{\delta}}}{32n}} \Big)^{\frac{1}{2}}$.
\label{Lemma:GCD_finite}
\end{lemma}
\begin{proof}
Since the symmetry of total variation, we have
\begin{equation*}
\begin{aligned}
\text{TV}(P, P_{GC})^2
=& \frac{1}{2}\text{TV}(P, P_{GC})^2 + \frac{1}{2}\text{TV}(P_{GC}, P)^2.
\end{aligned}
\end{equation*}
According to Pinsker’s Inequality $\text{TV}(P_1, P_2)^2 \leq \frac{1}{2}$ $ \text{KL}(P_1 $ $\Vert P_2) $ \cite{cover1999elements}, we can obtain
\begin{equation*}
\begin{aligned}
\text{TV}(P, P_{GC})^2
\leq &\frac{1}{4} \text{KL}(P \Vert P_{GC}) + \frac{1}{4}\text{KL}(P_{GC}\Vert P) \\
=& \frac{1}{4} \iint (P - P_{GC})\log{\frac{P}{P_{GC}} } \text{ dxdy} \\
=& \underbrace{ \frac{1}{4} \iint (P - P_{GC})\left( \log{\frac{P}{P_{GC}}} - D \right)  \text{ dxdy}}_{\text{RHS}_1} \\ & + \underbrace{ \frac{1}{4} \iint (P - P_{GC}) D  \text{ dxdy} }_{\text{RHS}_2}, 
\end{aligned}
\end{equation*}
where $D$ represents the output of the discriminator $D(x, y)$. For the first part of the right-hand-side $\text{RHS}_1$, considering $\vert D \vert \le 1$, we have
\begin{equation*}
\begin{aligned}
\text{RHS}_1 \leq & \frac{1}{4} \iint \left\vert P - P_{GC} \right\vert \left\vert \log{ \frac{P}{P_{GC}} - D} \right\vert \text{ dxdy}\\ 
\leq & \frac{1}{4} \iint \left( \left\vert P \right\vert + \left\vert P_{GC} \right\vert \right) \left( \left\vert \log{ \frac{P}{P_{GC}} } \right\vert + \vert D \vert \right) \text{ dxdy}\\ 
\leq &  \frac{1}{4} \left( \left\Vert \log{ \frac{P}{P_{GC}} } \right\Vert_\infty + 1 \right) \iint \left( \left\vert P \right\vert + \left\vert P_{GC} \right\vert \right)  \text{ dxdy} \\ 
= & \frac{1}{2} \left\lVert \log{ \frac{P}{P_{GC}}} + 1 \right\rVert_\infty . 
\end{aligned}
\end{equation*}
At the same time, noting that $\text{RHS}_2 = \frac{1}{4} \mathcal{L}^{G,C,D}_{gan}$ according to Equations (7), (8), (10) and (11), hence
\begin{equation*}
\begin{aligned}
\text{TV}(P, P_{GC})^2
\leq \frac{1}{2} \left\lVert \log{ \frac{P}{P_{GC}}} + 1 \right\rVert_\infty + \frac{1}{4} \mathcal{L}^{G,C,D}_{gan}. 
\end{aligned}
\end{equation*}
Applying Hoeffding's Inequality \cite{hoeffding1994probability}, it is easy to show that with probability at least $1-\delta$,
\begin{equation*}
\begin{aligned}
\mathcal{L}_{D} - \hat{\mathcal{L}}_{D} & \leq 2 \sqrt{\frac{\log{\frac{1}{\delta}}}{2m}}, \\
\mathcal{L}_{C} - \hat{\mathcal{L}}_{C} & \leq 2 \sqrt{\frac{\log{\frac{1}{\delta}}}{2n}}, \\
\mathcal{L}_{G} - \hat{\mathcal{L}}_{G} & \leq 2 \sqrt{\frac{\log{\frac{1}{\delta}}}{2m\vert\mathcal{Z}\vert}}, 
\end{aligned}
\end{equation*}
where $\vert\mathcal{Z}\vert$ is the amount of sampled noises $z\sim P_z(z)$. Therefore if $\vert\mathcal{Z}\vert \to \infty$, $\hat{\mathcal{L}}_{G} \to \mathcal{L}_{G}$. Noting that $\mathcal{L}^{G,C,D}_{gan} = \mathcal{L}_{D} + \frac{1}{2} \mathcal{L}_{G} + \frac{1}{2} \mathcal{L}_{C}$ (Equation (11)), hence with probability at least $(1-\delta)^2$,
\begin{equation*}
\begin{aligned}
\mathcal{L}^{G,C,D}_{gan} \leq & \hat{\mathcal{L}}_{D} + 2 \sqrt{\frac{ \log{\frac{1}{\delta}}}{2m}} +  \frac{1}{2} \hat{\mathcal{L}}_{G} + \frac{1}{2} \left( \hat{\mathcal{L}}_{C} + 2 \sqrt{\frac{ \log{\frac{1}{\delta}}}{2n}} \right)\\
\leq & \hat{\mathcal{L}}^{G,C,D}_{gan} + 2 \sqrt{\frac{ \log{\frac{1}{\delta}}}{2m}} + \sqrt{\frac{ \log{\frac{1}{\delta}}}{2n}}.
\end{aligned}
\end{equation*}

Therefore, for any $0<\delta<1$, $G$, $C$ and $D$, the following inequality holds with probability at least $(1 - \delta)^2$: 
\begin{equation*}
\begin{aligned}
\text{TV}(P, P_{GC})^2 \leq \frac{1}{2} \left\lVert \log{ \frac{P}{P_{GC}}} + 1 \right\rVert_\infty +\frac{1}{4} \hat{\mathcal{L}}^{G,C,D}_{gan} \\
+ \sqrt{\frac{\log{\frac{1}{\delta}}}{8m}} + \sqrt{\frac{ \log{\frac{1}{\delta}}}{32n}}. 
\end{aligned}
\end{equation*}
Since the above Inequality holds for any $D$, we can choose the optimal $D^*=\argmax \hat{\mathcal{L}}^{G,C,D}_{gan}$ for any given $G$ and $C$. Then 
\begin{equation*}
\begin{aligned}
\text{TV}(P, P_{GC})^2
& \leq \frac{1}{2} \left\lVert \log{ \frac{P}{P_{GC}}} + 1 \right\rVert_\infty
+\frac{1}{4} \max_{D} \hat{\mathcal{L}}^{G,C,D}_{gan} \\
& + \sqrt{\frac{\log{\frac{1}{\delta}}}{8m}} + \sqrt{\frac{ \log{\frac{1}{\delta}}}{32n}} = B^2.
\end{aligned}
\end{equation*}
\end{proof}

\section{Hyper-parameter Setting and The Architecture}
\begin{table*}[t]
	\centering
	\caption{
	Hyper-parameter setting of the attacking methods for evaluation.
	}
	\label{Tb:attacks}
	\setlength{\tabcolsep}{0.05cm}
	\begin{tabular}{c|c|c}

	\toprule
	\multicolumn{2}{l|}{Attacks}&\makecell[l]{Hyper-parameter setting}\\
	\midrule  
	\multirow{5}{*}{RAE}&\makecell[l]{PGD-8/255}&\makecell[l]{$\epsilon=8/255$; step size = 1/255; number of steps = 20.}\\
	&\makecell[l]{PGD-4/255}&\makecell[l]{$\epsilon=4/255$; step size = 1/255; number of steps = 20.}\\
	&\makecell[l]{PGD-2/255}&\makecell[l]{$\epsilon=2/255$; step size = 1/255; number of steps = 20.}\\
	\cmidrule{2-3}    
	&\makecell[l]{AA}&\makecell[l]{$\epsilon=8/255$; version=standard.}\\
	\midrule
	\multirow{5}{*}{\makecell[c]{UAE}}&\makecell[l]{GPGD-0.1}&\makecell[l]{$\epsilon=0.1$; step size = 0.1; number of steps = 20.}\\
	&\makecell[l]{GPGD-0.01}&\makecell[l]{$\epsilon=0.01$; step size = 0.1; number of steps = 20.}\\
	\cmidrule{2-3}  
	&\makecell[l]{USong}&\makecell[l]{$\epsilon=0.01$; step size = 0.1; number of steps = 200; \\$\lambda_1=100, \lambda_2=100$, version=untargeted.}\\
	\bottomrule

	\end{tabular}
\end{table*}

\begin{table*}[t]
	\centering
	\caption{Hyper-parameter setting of the baseline methods.}
	\label{Tb:hyper}
	\setlength{\tabcolsep}{0.05cm}
	\begin{tabular}{l|l|l|l|l|l}
	\toprule
	{Methods}&{Tiny ImageNet}&{ImageNet32}&{SVHN}&{CIFAR10}&{CIFAR100}\\
	\midrule
	{TRADES}&{$\beta=5.0$}&{$\beta=5.0$}&{$\beta=5.0$}&{$\beta=5.0$}&{$\beta=1.0$}\\
	\midrule
	{DMAT}&{$\beta=5.0$}&{-}&{$\beta=5.0$}&{$\beta=5.0$}&{$\beta=1.0$}\\
	\midrule
	{RST}&{$\beta=5.0$}&{$\beta=5.0$}&{$\beta=5.0$}&{$\beta=5.0$}&{$\beta=1.0$}\\
	\midrule
	{PUAT}&\makecell[l]{$\lambda=10.0$\\$\beta=6.0$\\$\gamma=0.03$\\$\alpha=50$}&\makecell[l]{$\lambda=10.0$\\$\beta=6.0$\\$\gamma=0.03$\\$\alpha=50$}&\makecell[l]{$\lambda=10.0$\\$\beta=6.0$\\$\gamma=0.03$\\$\alpha=50$}&\makecell[l]{$\lambda=10.0$\\$\beta=6.0$\\$\gamma=0.03$\\$\alpha=50$}&\makecell[l]{$\lambda=10.0$\\$\beta=10.0$\\$\gamma=0.03$\\$\alpha=500$}\\
	\bottomrule
	\end{tabular}
\end{table*}
\begin{table*}[!t]
	\renewcommand\arraystretch{}
	\centering
	\caption{Model architecture of $D$, $G$ and $A$ of PUAT.}
	\label{Tb:arch}
	\setlength{\tabcolsep}{0.2cm}
	\begin{tabular}{c|c|c}
	\toprule
	{Discriminator D}&{Generator G}&{Attacker A}\\
	\midrule  

	\makecell[c]{Input: $3\times 32\times 32$ image $x$, label $y$}
	&\makecell[c]{Input: noise $z$, label $y$}
	&\makecell[c]{Input: noise $z$, label $y$}
	\\ \midrule

	\makecell[c]{2-layer $3\times 3$ residual block, \\in-ch 3, out-ch 256, padding 1,\\ ReLU, spectral norm, downsample}
	&\makecell[c]{MLP 4096 units\\ reshape $256\times 4\times 4$}
	&\makecell[c]{MLP $256$ units\\ reshape $16\times 4\times 4$}
	\\ \cmidrule{0-2}

	\makecell[c]{2-layer $3\times 3$ residual block, \\in-ch 256, out-ch 256, padding 1,\\ ReLU, spectral norm, downsample}
	&\makecell[c]{2-layer $3\times 3$ residual block, \\in-ch 256, out-ch 256, padding 1,\\ ReLU, batch norm, upsample}
	&\makecell[c]{2-layer $3\times 3$ residual block, \\in-ch 16, out-ch 32, padding 1,\\ ReLU, batch norm}
	\\ \cmidrule{0-2}

	\makecell[c]{2-layer $3\times 3$ residual block, \\in-ch 256, out-ch 256, padding 1,\\ ReLU, spectral norm, downsample}
	&\makecell[c]{2-layer $3\times 3$ residual block, \\in-ch 256, out-ch 256, padding 1,\\ ReLU, batch norm, upsample}
	&\makecell[c]{2-layer $3\times 3$ residual block, \\in-ch 32, out-ch 16, padding 1,\\ ReLU, batch norm}
	\\ \cmidrule{0-2}

	\makecell[c]{2-layer $3\times 3$ residual block, \\in-ch 256, out-ch 256, padding 1,\\ ReLU, spectral norm, downsample}
	&\makecell[c]{2-layer $3\times 3$ residual block, \\in-ch 256, out-ch 256, padding 1,\\ ReLU, batch norm, upsample}
	&{-}
	\\ \midrule

	\makecell[c]{MLP 1 units, spectral norm}
	&\makecell[c]{$1\times 1$ conv, in-ch 256, out-ch 3, \\tanh, batch norm}
	&\makecell[c]{MLP $256$ units, ReLU}

	\\ \bottomrule

	\end{tabular}
\end{table*}


\end{document}